\documentclass[mnsc]{informs3aa} 
\usepackage[final]{showkeys}
\usepackage{subfigure}
\usepackage{arydshln}
\usepackage{makecell}

\OneAndAHalfSpacedXI 

\usepackage{endnotes}

%

\usepackage{longtable}
\usepackage{tabularx}
\usepackage{booktabs}
\usepackage{etex}
\usepackage{multirow}
\usepackage{array}
\usepackage{bbm}
\usepackage{epstopdf}

\usepackage[title]{appendix}




\newcommand{\vct}{\boldsymbol }

\renewcommand{\tilde}{\widetilde}
\renewcommand{\hat}{\widehat}

\newcommand{\err}{\mathrm{err}}



%

\renewcommand{\hat}{\widehat}
\renewcommand{\tilde}{\widetilde}

\usepackage{amsbsy, amsfonts, amsgen, amsmath, amsopn, amssymb, amstext,
amsxtra, bezier, color, enumerate, graphicx, latexsym, verbatim,
pictexwd, supertabular, url, dsfont,leftidx, mathrsfs, appendix, setspace}
\usepackage{algorithm}
\usepackage{algorithmicx}
\usepackage{algpseudocode}
\makeatletter
\def\BState{\State\hskip-\ALG@thistlm}
\makeatother

\usepackage{pifont}

\usepackage{natbib}
 \bibpunct[, ]{(}{)}{,}{a}{}{,}%
 %
 %
 %
 %
 %

\usepackage[colorlinks=true,bookmarks=false,urlcolor=blue, citecolor=blue,linkcolor=blue,bookmarksopen=false,draft=false]{hyperref}

\newcommand{\ud}{\mathrm d}

\usepackage{parskip} \setlength{\parindent}{0in}

\newcommand{\sgn}{\mathrm{sgn}}

\TheoremsNumberedThrough     
\ECRepeatTheorems

\EquationsNumberedThrough    

\MANUSCRIPTNO{} 

\definecolor{DSgray}{cmyk}{0,1,0,0}

\begin{document}


\RUNAUTHOR{Chen, Liu and Wang}

\RUNTITLE{Actively Learning for Contextual Search}

\TITLE{Active Learning for Contextual Search with Binary Feedback}

\ARTICLEAUTHORS{%
\AUTHOR{Xi Chen \thanks{Author names listed in alphabetical order.}}
\AFF{Leonard N.~Stern School of Business, New York University, New York, NY 10012, \EMAIL{xc13@stern.nyu.edu}}
\AUTHOR{Quanquan Liu }
\AFF{Naveen Jindal School of Management, University of Texas at Dallas, Richardson, TX 75080,   \EMAIL{qxl220001@utdallas.edu}}
\AUTHOR{Yining Wang}
\AFF{Naveen Jindal School of Management, University of Texas at Dallas, Richardson, TX 75080,  \EMAIL{yxw220006@utdallas.edu}}
} 

\ABSTRACT{In this paper, we study the learning problem in contextual search, which is motivated by applications such as crowdsourcing and personalized medicine experiments. In particular, for a sequence of arriving context vectors, with each context associated with an underlying value, the decision-maker either makes a query at a certain point or skips the context. The decision-maker will only observe the binary feedback on the relationship between the query point and the value associated with the context. We study a PAC learning setting, where the goal is to learn the underlying mean value function in context with a minimum number of queries. To address this challenge, we propose a tri-section search approach combined with a margin-based active learning method. We show that the algorithm only needs to make $\tilde O(1/\varepsilon^2)$ queries to achieve an $\varepsilon$-estimation accuracy. This sample complexity significantly reduces the required sample complexity in the passive setting where neither sample skipping nor query selection is allowed, which is at least $\Omega(1/\varepsilon^3)$.
}

\KEYWORDS{Active learning, binary feedback, classification, contextual search.}


\date{}

\maketitle

\section{Introduction}





Contextual search, which extends the classical binary search problem to high dimensions, finds a wide range of applications, such as crowdsourcing and personalized medicine. In the contextual search problem, for each round $i=1,2,3,\cdots$, an item (e.g., a customer or a patient) arrives sequentially, each with a contextual vector $x_i\in\mathbb R^d$ accessible to the decision-maker. We assume that the context $x_i$ incurs an unknown stochastic \emph{value} $u_i=v(x_i) + \xi_i$, where $v(x_i)$ is the mean value function of $x_i$ and $\xi_i$ is the stochastic noise.  The decision-maker selects a query $b_i \in \mathbb{R}$ and then observes the \emph{binary feedback}, i.e., whether $u_i \geq b_i$ or vice versa. The true value $u_i$ will never be revealed. To better fit our motivating applications illustrated below, the decision-maker is allowed to skip making a query on certain contextual vectors to save her budget. Our goal is to learn the mean value function $v(x_i)$ with a minimum number of queries/trials. It is worth noting that we adopt the sample complexity as the objective instead of revenue/cost since we focus on the experimental phase for the learning purpose. In this phase,  the number of trials is usually quite small and thus it is common to treat cost equally for each trial.  We now briefly describe two motivating applications:

\noindent \textbf{Personalized Medicine Experiment:} Let us consider the example of clinical trials, where the goal of an experiment is to determine the proper dosage $v(x)$ in radiation therapy.   The profile of each potential experimental unit is characterized by $x$ (e.g., her demographics, diagnosis, medications, and genetics).  \citet{bastani2020online} adopted a linear bandit model (i.e., the linear form of $v(x)$) to investigate the relationship between the optimal dosage and the patients' profile. In this experiment, the lab posts an advertisement to the public and receives nominations (e.g., someone will call the lab to express her interest). After the lab receives a nomination and conducts some pre-screening to collect the profile information $x$, the lab can simply reject a nomination without further experimental procedure. For example, if the lab has already experimented on a similar unit (i.e., a unit with a similar profile $x$),  the lab will naturally reject this potential unit. If the lab decides to accept the experimental unit, we assume that the lab will recommend a dosage $b_i$ and receive the binary feedback on whether the recommended dosage is above or below the appropriate level. As performing a radiation therapy experiment is costly and time-consuming, a common goal is to use the minimum number of trials to learn the ideal personalized dosage level function (i.e., the $v(\cdot)$ function).

\noindent \textbf{Crowdsourcing:} 	In a crowdsourcing experiment, the decision-maker hires a crowdsourced expert to help determine the difficulty level (e.g., measured by completion time) of different tasks characterized by their context vectors $x$. Assuming for each task $i$, the underlying difficulty level is $u_i= v(x_i)+ \xi_i$.  Numerous psychology studies have shown one is more good at providing pairwise comparison than absolute numerical estimate \citep{Shiffrin:94, Stewart:05}. Therefore, instead of asking the expert to provide a numerical estimate of the difficulty level, the decision-maker will give an estimate $b_i$. Then the expert provides binary feedback on whether she believes $b_i$ is an over-estimate or under-estimate. In such a crowdsourcing experiment, it is natural that  the decision-maker will not bother the expert to provide feedback for some tasks (e.g., those tasks similar to previously queried jobs).

Motivated by these applications, the goal of this paper is to propose an efficient algorithm to learn $v(x)$. Following the existing literature on contextual search and feature-based pricing, we also adopt a
linear model of the mean valuation function, i.e., $v(x)= \langle x, w^* \rangle - \mu^*$ for some unknown coefficient vector $w^* \in \mathbb{R}^d$ and the intercept $\mu^* \in \mathbb{R}$. As compared to the existing literature, our contextual search problem has the following unique features, which calls for new algorithmic development:
\begin{enumerate}
	\item First,  the existing contextual search setup aims to minimize either the absolute loss $|b_i- v(x_i)|$ or the $\epsilon$-ball loss $\mathbb{I}(|b_i- v(x_i)|>\epsilon)$ for some pre-determined $\epsilon$ over time. Here $\mathbb{I}(\cdot)$ denotes the indicator function. In contrast, we consider a learning problem, where the goal is to learn $v(x)$ as accurately as possible. Therefore, we adopt a probably approximately correct (PAC) setting (see \eqref{eq:defn-pac} in Sec.~\ref{sec:model}) instead of  regret minimization setting in existing literature \citep{Lobel:17:Multidimensional,Leme:18:contextual,Cohen:20:feature,Krishnamurthy21Contextual}. To facilitate the analysis of this learning problem, we assume the stochasticity of the contextual information $x_i$.  
	
	\item Second, as we are motivated by \emph{experimental} applications, the decision-maker should judge the benefit of a context $x_i$ to the learning problem. Therefore,  compared to the existing contextual search, our problem has another layer of decision, i.e., whether to conduct a query or not, beyond the decision of the query point itself. 
\end{enumerate}

To address this problem, we adopt the active learning framework from machine learning research \citep{Settles:12:active}. In particular, we adopt the margin-based active learning approach \citep{balcan2007margin}. At a high level, let  $\hat v(\cdot)$ be the current estimate of the underlying $v(\cdot)$ function {and $\hat b$ be the query point}. For an arriving context $x$, the margin-based active learning will make a query if $|\hat v(x) - \hat b|$ is sufficiently small, which indicates that it is difficult to determine the relationship between $\hat b$ and $\hat v(x)$. Although it is an intuitive approach, existing margin-based active learning approaches cannot be directly applied to address our problem due to the existence of the intercept $\mu^*$. In fact, a famous negative result by \cite{dasgupta2005coarse} shows that active learning cannot significantly improve sample complexity over \emph{passive learning} for linear binary classification models \emph{with intercepts} in its most general form. It is worth noting that throughout this paper, by ``passive learning'' we refer to the learning paradigm in which the decision making can neither skip samples (regardless of their contextual information $x$) nor adaptively change actions/queries.
Please refer to Figure \ref{fig:negative-example} in Sec.~\ref{sec:related} for details.

To address this challenge, we propose an active learning procedure consisting of three major stages:
\begin{enumerate}
\item The first stage of the algorithm is to use trisection search to locate two queries $\hat b_1$ and $\hat b_2$ that are close to the underlying intercept term $\mu^*$,
without consuming too many labeled (queried) samples.
In this first stage sample selection (i.e., determining whether a sample is to be labeled/queried or not) is \emph{not} carried out,
but the algorithm will actively explore different actions in order to obtain $\hat b_1,\hat b_2$ that are close to $\mu^*$;
\item The second stage of the algorithm is to apply margin based active learning to learn the linear model $w^*$ and an intercept term depending on both $\mu^*$ and $\hat b_1,\hat b_2$.
In this second stage sample selection will be carried out, as only those users with contextual vectors $x_t$ close to classification hyperplanes will be 
queried/labeled (see Algorithm \ref{alg:margin-based-active-learning} later for details). 
The actions taken in this stage (on selected samples) will be fixed to either $\hat b_1$ or $\hat b_2$ obtained in the first stage.

Note that, although this classification model still has non-zero intercept terms, the closeness of $\hat b_1,\hat b_2$ to $\mu^*$ would imply
that the obtained labels under actions $\hat b_1$ or $\hat b_2$ are \emph{balanced}, circumventing the negative results in the work of \cite{dasgupta2005coarse}
which specifically constructed counter-examples with unbalanced labels.
In Figure \ref{fig:negative-example} and the following related work section we give a detailed account of this negative example
and how it presents challenges to active learning.
Indeed, our theoretical analysis extends the arguments in \cite{balcan2007margin} to this more general setting of linear classification
\emph{with intercepts} and balanced labels, with similar convergence rates derived.

\item The final stage of the algorithm is to reconstruct the mean utility model $\hat v(\cdot)$ from the estimated linear model and intercepts.
Because margin-based active learning can only estimate a linear model up to scales, we need model estimates at two different actions $\hat b_1,\hat b_2$
(corresponding to two different effective intercepts) in order to reconstruct $w^*$ and $\mu^*$ in $v(\cdot)$.
Details of how this reconstruction is carried out are given in the last two lines of Algorithm \ref{alg:meta-algorithm}.
\end{enumerate}

We establish the sample complexity bound for the proposed margin-based active learning with a tri-section search scheme. We assume that with $\tilde{O}(1/\varepsilon^3)$ total number of incoming contexts, the decision-maker only needs to make $\tilde{O}(1/\varepsilon^2)$ queries to estimate the mean value function $v(x)$ within $\varepsilon$-precision (with high probability). Here $\tilde{O}$ here hides the dependence on $d$ and other logarithmic factors. We also show that in the passive setting, where the decision-maker is required to conduct queries for all arriving contexts as in the standard contextual search, the sample complexity would be at least $\Omega(1/\varepsilon^3)$ (see Remark \ref{rem:lower}).


\subsection{Related work}
\label{sec:related}

Our problem setting can be viewed as a variant of the contextual search problem, which is an extension of the classical binary search. In binary search, the decision-maker tries to guess a fixed constant $\mu^*$ (i.e., the value $u_i\equiv \mu^*$ for all $i$ in our problem) by iteratively making queries $b_i$. In the PAC learning setting, the binary search algorithm only needs $O(\log(1/\varepsilon))$ queries to estimate $\mu^*$ within $\varepsilon$-precision. Due to the importance of applications such as personalized medicine and feature-based pricing, contextual search has received a lot of attention in recent years. The existing literature mainly adopts the linear model for the mean value function. For $\varepsilon$-ball loss $\sum_{i} \mathbb{I}(|b_i-v(x_i)|>\varepsilon)$,  \citet{Lobel:17:Multidimensional} established the $\Omega(d \log (1/\varepsilon \sqrt{d}))$ regret lower bound and proposed the project volume algorithm that achieves a near-optimal regret of $O(d \log(d/\varepsilon))$. For absolute loss 
$\sum_{i} |b_i-v(x_i)|$, \cite{Leme:18:contextual} established the regret bound of $O(\text{poly}(d))$. As we explained in the introduction, to fit the applications considered in our paper, we adopt a PAC learning setting and equip the decision-maker with the ability to pass an incoming context. {While most contextual search settings in the literature consider adversarial contextual information, we assume the stochasticity of the contextual information as we study a learning problem.}

\begin{figure}[t]
\centering
\includegraphics[width=0.45\textwidth]{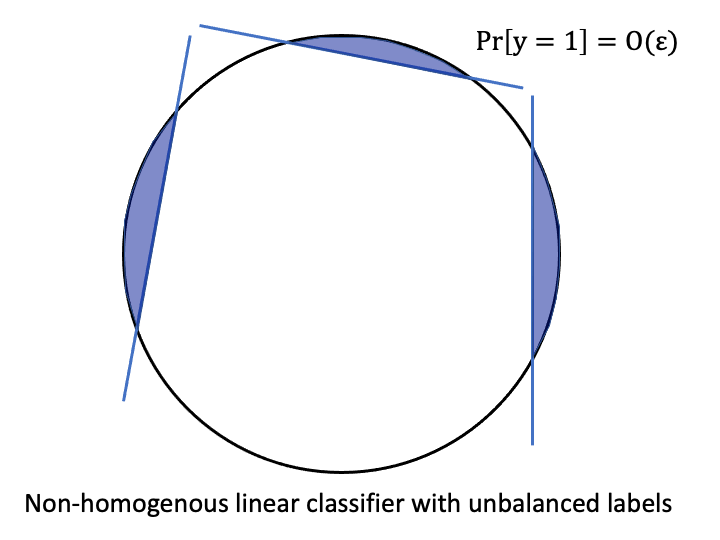}
\includegraphics[width=0.45\textwidth]{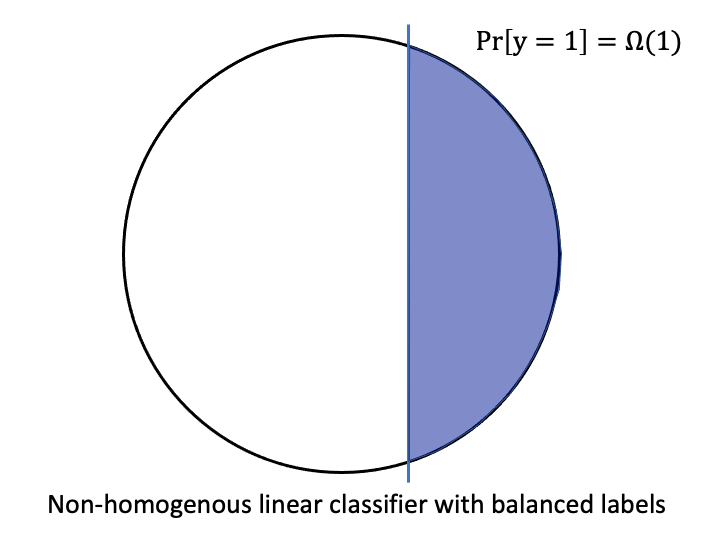}
\caption{Illustration of negative examples of problem instances constructed in \citep{dasgupta2005coarse}.}
{\small Note: the left panel shows examples of non-homogeneous linear classifiers with \emph{unbalanced} labels,
for which \cite{dasgupta2005coarse} shows that active learning (i.e., sample selection for labeling purposes) cannot lead to
significantly improved sample complexity. The right panel shows non-homogenous linear classifiers with \emph{balanced} labels,
for which improvements of sample complexity can be made via sample selection (active learning).}
\label{fig:negative-example}
\end{figure}

Active learning is an important research area in machine learning, originating from the seminal work of \cite{cohn1994improving} dating
back to the 1990s.
The main idea behind active learning is to equip the learning algorithm with the ability to \emph{select} samples or data points to be labeled,
improving its sample complexity in applications where labels are expensive to obtain but unlabeled data are abundant.
There have been many successful algorithms developed for active learning, such as bisection search for one-dimensional noiseless problems
\citep{dasgupta2005coarse}, greedy method \citep{dasgupta2005analysis}, disagreement-based active learning \citep{hanneke2007bound,balcan2009agnostic,zhang2014beyond}, margin based active learning \citep{balcan2007margin,balcan2013active,wang2016noise}
and active learning based on surrogate loss functions \citep{awasthi2017power,balcan2017sample}.
Due to the vast literature on active learning we cannot cite all related works here, and would like to refer interested readers
to the excellent review of \cite{hanneke2014theory} for an overview of this area.

Our approach in this paper resembles the margin-based active learning method \citep{balcan2007margin,balcan2013active,wang2016noise}
which is developed for linear classifiers and have been popular in the active learning literature due to its intuitive nature, tight sample complexity,
and relative ease of implementation.
However, while linear classifiers seem simple, non-homogeneous linear classifiers (i.e., linear classifiers with \emph{an intercept term}) present notorious challenges
to active learning algorithms.
More specifically, the work of \cite{dasgupta2005analysis} shows that when $d\geq 2$ and non-homogeneous linear classifiers produce unbalanced samples,
such as the example shown on the left panel of Figure \ref{fig:negative-example}. In this illustrative example,  potential linear classifiers are within $O(\varepsilon)$ distance to the 
domain boundary, and thus, active learning \emph{cannot} asymptotically improve sample complexity over passive learning as it takes $O(1/\varepsilon)$ samples to 
hit the boundaries.
Note that, it is easy to verify that, if a non-homogenous linear classifier is within $\epsilon$ distance to the boundary and the underlying distribution
of unlabeled samples is relatively uniform, the probability of seeing a positive sample (as indicated in the region colored by blue in Figure \ref{fig:negative-example}) is also on the order of $O(\varepsilon)$.
To overcome this counter-example, in this paper we exploit the special structure in the contextual search problem
to ``balance'' the labels, as shown on the right panel of Figure \ref{fig:negative-example}.
While the balanced model still possesses a non-zero intercept term, the classifier will be generally $\Omega(1)$ away from the boundary,
which our theoretical analysis shows is sufficient of obtaining desired sample complexity results for active learning.

{
It is also interesting to make comparisons to other margin-based active learning work. For example, the work by \cite{awasthi2017power} considers the following setting: for an underlying (unknown) model $w^*$
and feature vector $x$, the algorithm observes $\mathrm{sgn}(\langle x,w^*\rangle)$ with probability $1-\eta$
and an adaptively chosen label with probability $\eta$.
In comparison, in our problem setting the algorithm observes labels with probability related to the margin $|\langle x,w^*\rangle|$. Such a difference in the setup leads to a fundamental difference in the sample complexity: the sample complexity in \cite{awasthi2017power} is poly-logarithmic in $1/\varepsilon$,
while in our problem setting a $\mathrm{poly}(1/\varepsilon)$ sample complexity is necessary. In fact, $\log(1/\varepsilon)$ sample complexity is only possible if one has deterministic labels
or probabilistic labels satisfying the Massart noise condition; that is, for any $x\in\mathbb R^d$ with $\langle x, w^*\rangle > 0$,
$\Pr[y=1|x] > 1/2 + c$ for some constant $c>0$ (and vice versa for all $\langle x, w^*\rangle < 0$).
Such a condition clearly is not satisfied by the setting studied in this paper, in which $\Pr[y=1|x]\to 1/2$ as $\langle x,w^* \rangle\to 0^+$.
For noise distributions not satisfying the Massart condition, $\mathrm{poly}(1/\varepsilon)$ samples are necessary
(see, e.g., \cite{ben2014sample,balcan2007margin,wang2016noise}).
}

{
Our work is also related to the pure-exploration problem and sequential experimental design \citep{elfving1952optimum,chernoff1959sequential,albert1961sequential,naghshvar2013active,wang2020adaptive, feng2021robust,wager2021diffusion, araman2021diffusion,li2017optimal,Chen:22:asym}. 	In experimental design problems, the decision maker is capable of choosing the context vector $x$.
However, in application settings considered in this paper (e.g., experimental units arriving sequentially), it is impractical to assume that the context vectors $\{x_t\}_{t=1}^T$ could be chosen arbitrarily. Thus, we only allow the decision maker to decide whether to skip a query.
}

Active learning has been an important area in machine learning. However, it has not received a lot of attention in operations management. This paper takes a preliminary step on exploring the applications of active learning, and hopefully, it will inspire more research on active learning to address challenges arising from operations management.

\subsection{Paper organization and notations}

The rest of the paper is organized as follows. Sec.~  \ref{sec:model} describes the problem formulation and necessary assumptions. Sec.~\ref{sec:algo} develops our  margin-based active learning algorithm with the tri-section search and establishes the sample complexity bound. The technical proofs are provided in Sec.~\ref{sec:proof}. We provide the numerical simulation studies in Sec.~\ref{sec:numerical}, followed by the conclusion in Sec.~\ref{sec:con}. Proofs of some technical lemmas are relegated to the appendix.

{
In our paper the asymptotic is with respect to $d$ and $\varepsilon$, with all other parameters being functions of $d,\varepsilon$
	and other problem-dependent constants (e.g., $B,c_x,C_x,c_\xi,C_\xi$) that do not change with $d,\varepsilon$.
	We say that $f(x)=O(g(x))$ if there exist constants $d_0,\varepsilon_0$ and $C<\infty$ such that for all $d\geq d_0$ and $\varepsilon\leq\varepsilon_0$,
	$f(x)\leq Cg(x)$. If we omit dependency on constants $\theta=(B,c_x,C_x,c_\xi,C_\xi)$ in the big-O notation, then the constant $C$ can be a function of $\theta$.
	If we further omit poly-logarithmic dependency (by using the notation $\tilde O(g(x))$), then the constant $C$ can depend on $\log^c x$ for some constant $c$.
	}

\section{Problem Formulation and Assumptions}
\label{sec:model}

In our modeling, assuming the items (e.g., ads or experimental units)  $i=1,2,3,\cdots$ arrive sequentially, each with a contextual or feature vector $x_i\in\mathbb R^d$
accessible to the decision-maker.  We assume that the contextual vectors $\{x_i\}_{i\geq 1}$ are independently and identically distributed with respect to an unknown underlying distribution $P_X$. We also assume that 
{$\|x_i\|_2 \leq 1$ for the ease of illustration.}
Given the contextual vector $x_i\in\mathbb R^d$, the ``valuation'' of the item (e.g.,  the appropriate dosage in personalized medical treatment)
follows a linear model:
\begin{equation}
u_i = v(x_i) + \xi_i = \langle x_i, w^*\rangle - \mu^* + \xi_i,
\label{eq:defn-v}
\end{equation}
where $v(\cdot)=\langle\cdot,w^*\rangle - \mu^*$ is an underlying linear model with a fixed but unknown coefficient vector  $w^*\in\mathbb R^d$, the intercept $\mu^*\in\mathbb R$,
and the noise $\{\xi_i\}_{i\geq 1}$, which are independently and identically distributed stochastic variations with respect to an unknown distribution $P_\xi$.

After observing the contextual vector $x_i\in\mathbb R^d$, the decision-maker will do either one of the following :
\begin{enumerate}
\item Let the item pass without taking any actions, and thereby without obtaining any feedback/information;
\item Make a query at $b_i\in\mathbb R$, and observe the binary feedback $y_i=1$ if $u_i\geq b_i$ or $y_i=-1$ if $u_i<b_i$.
\end{enumerate}
Since making a query (e.g., admitting an experimental unit into a clinical trail program) incurs much higher implicit
cost as compared to passing (i.e., taking no action), the main goal of the decision-maker is to use as few number of queries as possible to estimate the mean valuation function $v(\cdot)$ to a certain precision. More specifically, let $\varepsilon,\delta\in(0,1)$ be target accuracy and probability parameters. 
We use $n(\varepsilon,\delta)$ to denote the number of queries a learning algorithm takes in order to produce an estimate $\hat v(\cdot)$ that satisfies
\begin{equation}
\sup_{\|x\|_2\leq 1} \big|\hat v(x)-v(x)\big| \leq \varepsilon, \;\;\;\;\;\;\text{with probability $\geq 1-\delta$}.
\label{eq:defn-pac}
\end{equation}
Clearly, the smaller $n(\varepsilon,\delta)$ is the more efficient the designed learning algorithm is.
The main objective of this paper is to design an active learning algorithm that minimizes $n(\varepsilon,\delta)$.
Additionally, we use $m(\varepsilon,\delta)$ to denote the number of \emph{total} samples (i.e., the number of total incoming contexts) an algorithm requires to 
obtain an estimate $\hat v$ satisfying Eq.~(\ref{eq:defn-pac}).
While those incoming contexts skipped by our algorithm usually do not  incur extra cost,  it is desirable that  $m(\varepsilon,\delta)$ is reasonable
because the supply of experimental units might still be limited.
In active learning literature, an $m(\varepsilon,\delta)$ is \emph{reasonable} if it is a polynomial function in terms of  $1/\varepsilon,\log(1/\delta)$ and $d$ \citep{cohn1996neural,cohn1994improving,balcan2007margin}.

Throughout this paper we impose the following assumptions.
\begin{enumerate}
\item[(A1)] There exists a constant $B<\infty$ such that $\|w^*\|_2\leq B$ and $|\mu^*|\leq B$;
\item[(A2)] The distribution $P_X$ satisfies the following condition: it is supported on the unit $\ell_2$ ball $\mathbb B_2(d) = \{x\in\mathbb R^d: \|x\|_2\leq 1\}$;
it admits a probability density function $f_x(\cdot)$; there exist constants $0<c_x\leq C_x<\infty$ such that $c_x f_u(x)\leq f_x(x)\leq C_xf_u(x)$ for all $x\in\mathbb B_2(d)$, where $f_u$ is the probability density function (PDF) of the uniform distribution on $\mathbb B_2(d)$;
\item[(A3)] The distribution $P_\xi$ satisfies the following condition: $\Pr[\xi\leq 0]=\Pr[\xi\geq 0] = 1/2$; 
it admits a probability density function $f_\xi(\cdot)$; there exist constants $0<c_\xi\leq C_\xi<\infty$ such that 
{$\sup_{\xi\in\mathbb R}f_\xi(\xi)\leq C_\xi/\|w^*\|_2$ and
$\inf_{|\xi|\leq 2}f_\xi(\xi)\geq c_\xi/\|w^*\|_2$.
}
\end{enumerate}

Assumption (A1) is a standard bounded assumption imposed on model parameters.
Assumption (A2) assumes that the contextual vectors are independently and identically distributed, with respect to a bounded
and non-degenerate distribution $P_X$ that is unknown.
Similar ``non-degenerate'' or ``covariate diversity'' assumptions were also adopted in the contextual learning literature \citep{bastani2020online,bastani2021mostly},
and the assumption is actually weaker than some of the existing works on active learning \citep{balcan2007margin,wang2016noise}, which requires $P_X$ to be 
the exact uniform distribution over $\mathbb B_2(d)$.

Assumption (A3) is a general condition imposed on the distribution $P_\xi$ of the noise variables.
Essentially, it assumes that zero is the median of the noise distribution $P_\xi$, which ensures that the linear classifier is the optimal Bayes classifier.
The same assumption is common in the active learning literature \citep{balcan2007margin,wang2016noise}.
Note that we do not assume the noise distribution $P_\xi$ has any specific parametric forms (e.g., Logistic or Probit noises), 
making it generally applicable to a broad range of problems.
{Note also that Assumption (A3) requires the noise distribution to scale together with $\|w^*\|_2$ in order to preserve signal-to-noise
ratios. In the case of a signal-independent assumption $\sup_{\xi\in\mathbb R}f_\xi(\xi)\leq C_\xi'$ and $\inf_{|\xi|\leq 2}f_\xi(\xi)\geq c_\xi'$,
the change-of-parameter $C_\xi'=C_\xi/\|w^*\|_2$ and $c_\xi'=c_\xi/\|w^*\|_2$ can be used to bring the signal level $\|w^*\|_2$ into the sample complexity analysis.
}


\section{Margin-based Active Learning with Tri-section Search}
\label{sec:algo}

\begin{algorithm}[t]
\caption{A meta-algorithm for actively learning contextual functions.}
\label{alg:meta-algorithm}
\begin{algorithmic}[1]
\State \textbf{Input}: dimension $d$, accuracy parameters $\varepsilon,\delta$, algorithm parameters $\kappa_m,\kappa_n,\kappa_\varepsilon,\beta_0$.
\State $\hat b_1,\hat b_2\gets \textsc{TrisectionSearch}(\varepsilon_s,\delta_s)$ with $\varepsilon_s=0.1/\sqrt{d-1}$, $\delta_s=\delta/3$;
\State Let $\varepsilon_a=\kappa_\varepsilon{\varepsilon^2}/\ln^2(1/\varepsilon)$, $\delta_a=\delta/3$;
\State $(\hat w_1,\hat\beta_1)\gets \textsc{MarginBasedActiveLearning}(\hat b_1,\varepsilon_a,\delta_a,\kappa_m,\kappa_n,\sqrt{\varepsilon_a},\beta_0)$;
\State $(\hat w_2,\hat\beta_2)\gets \textsc{MarginBasedActiveLearning}(\hat b_2,\varepsilon_a,\delta_a,\kappa_m,\kappa_n,\sqrt{\varepsilon_a},\beta_0)$;
\State Let $\hat\alpha = (\hat b_2-\hat b_1)/(\hat\beta_2-\hat\beta_1)$;\label{line:defn-hat-alpha}
\State \textbf{Output}: {utility function estimate $\hat v(\cdot)=\langle\cdot,\hat w\rangle - \hat\mu$, where $\hat w=\hat\alpha\hat w_1$\label{line:defn-hat-v}
and $\hat\mu = \hat\alpha\hat\beta_1-\hat b_1$.}
\end{algorithmic}
\end{algorithm}

The main algorithm we proposed for actively learning contextual functions is given in Algorithm \ref{alg:meta-algorithm}.
The main idea of the proposed algorithm can be summarized as follows.

The first step is to find two actions $\hat b_1,\hat b_2$ that are reasonably close to the mean utility $\mu^*$.
This is to ensure that when the actions are fixed at $\hat b_1$ or $\hat b_2$, the labels received from user streams are relatively balanced,
thereby circumventing the negative results in the work of \cite{dasgupta2005coarse}.
In Sec.~\ref{sec:alg-bisection} we show how $\hat b_1,\hat b_2$ can be found without using too many labeled samples,
by using a trisection search idea.

After we obtained candidate actions $\hat b_1$ and $\hat b_2$, we use a margin-based active learning algorithm
to estimate the linear model $w^*$ and mean utility $\mu^*$.
The margin-based active learning algorithm is similar to the work of \cite{balcan2007margin}, with the difference being that in our setting
the active learning algorithm needs to incorporate a (relatively small) intercept term, which complicates its design and analysis.

Finally, we use the estimates $(\hat w_1,\hat\beta_1)$ and $(\hat w_2,\hat\beta_2)$ obtained from the above-mentioned active learning procedure
\emph{under two different fixed actions $\hat b_1,\hat b_2$} to reconstruct the linear utility parameters $w^*$ and $\mu^*$.
The reason we need two fixed actions $\hat b_1,\hat b_2$ is because the active learning procedure solves a classification problem,
for which we can only estimate the linear model and its intercept \emph{up to scalings} because if one multiplies both the linear model and its intercept
by a constant the resulting classification problem is the same.
Hence, we need two fixed actions $\hat b_1,\hat b_2$ to construct an approximate linear system of equations,
the solution of which would give us consistent estimates of $w^*$ and $\mu^*$.

Below we briefly explain our intuition behind the construction of the utility function estimate $\hat v(\cdot)$ in Algorithm \ref{alg:meta-algorithm}.
For simplicity we will omit the learning errors that occurred in the two \textsc{MarginBasedActiveLearing} invocations.
Because the margin based active learning algorithm learns linear classifiers up to normalization (see Algorithm \ref{alg:margin-based-active-learning}), we have the following equivalence:
\begin{eqnarray*}	
 \langle \hat w_1, x \rangle - \hat \beta_1 >0 &  \Longleftrightarrow &  \langle w^*, x \rangle - \mu^* > \hat  b_1; \\
 \langle \hat w_2, x \rangle - \hat \beta_2 >0 & \Longleftrightarrow & \langle w^*, x \rangle - \mu^* > \hat b_2,
\end{eqnarray*}
where $\|\hat w_1\|_2=\|\hat w_2\|_2=1$ due to the construction of Algorithm \ref{alg:margin-based-active-learning}.
Again, we emphasize that the above equivalence only holds approximately  due to learning errors of $\hat w_1,\hat\beta_1,\hat w_2,\hat\beta_2$,
but we will omit these learning errors for ease of explanation.
Let $\alpha=\|w^*\|_2$. We have $ \hat \beta_1 =(\mu^* +  \hat  b_1)/\alpha$ and  $ \hat \beta_2 =(\mu^* +  \hat  b_2)/\alpha$. Therefore, we set $\hat \alpha= (\hat b_2-\hat b_1)/(\hat\mu_2-\hat\mu_1)$ as the estimate of $\alpha$, and $\hat \mu = \hat \alpha  \hat \beta_1 -  \hat  b_1$ as the estimate of $\hat \mu$. Thus, we  obtain the utility function estimate $\hat v(\cdot)$ in Algorithm \ref{alg:meta-algorithm}.

\subsection{Tri-section search for accurate mean utility}\label{sec:alg-bisection}

\begin{algorithm}[t]
\caption{A tri-section search algorithm to roughly estimate the mean utility parameter $\mu^*$}
\label{alg:bisection-search}
\begin{algorithmic}[1]
\Function{TrisectionSearch}{$\varepsilon_s,\delta_s$}
	\State Initialize: $n=0$, lower and upper bounds $\hat b_1=-B$, $\hat b_2=B$;
	\While{$\hat b_2-\hat b_1>\varepsilon_s$}
		\State $\hat b_3\gets\hat b_1+(\hat b_2-\hat b_1)/3$, $\hat b_4\gets\hat b_2-(\hat b_2-\hat b_1)/3$,
		$\hat n=\hat r_3=\hat r_4=0$, $\underline p_3=\underline p_4=0$, $\overline p_3=\overline p_4=1$;
		\While{$\underline p_3\leq 0.5\leq\overline p_3$ and $\underline p_4\leq 0.5\leq \overline p_4$}
			\State For an incoming user $x$, take action $\hat b_3$ and observe result $y\in\{0,1\}$;
			\State For another incoming user $x'$, take action $\hat b_4$ and observe result $y'\in\{0,1\}$;
			\State $n\gets n+1$, $\hat n\gets\hat n+1$, $\hat r_3=\hat r_3+\vct 1\{y_3=1\}$, $\hat r_4=\hat r_4+\vct 1\{y_4=1\}$;
			\State Update: $[\underline p_3,\overline p_3]\gets \frac{\hat r_3}{\hat n} \pm \sqrt{\frac{\ln(8n^2/\delta_s)}{2\hat n}}]$
			and $[\underline p_4,\overline p_4]\gets \frac{\hat r_4}{\hat n} \pm \sqrt{\frac{\ln(8n^2/\delta_s)}{2\hat n}}]$;
		\EndWhile
		\State Set $\hat b_1\gets\hat b_3$ if $\underline p_3>0.5$ or $\underline p_4>0.5$ and $\hat b_2\gets\hat b_4$ otherwise; 
	\EndWhile
	\State \textbf{return} $(\hat b_1,\hat b_2)$.
\EndFunction
\end{algorithmic}
\end{algorithm}

Let $b^*\in\mathbb R$ be the unique value such that $\Pr_{x\sim P_X}[v(x)\geq b^*] = 1/2$.
Because $P_X$ and $P_\xi$ have PDFs, such a value of $b^*$ exists and is unique. 
Intuitively, if one commits to the fixed action $b^*$ then the labels received by the algorithm should be balanced.
Algorithm \ref{alg:bisection-search} shows how to find actions $\hat b_1,\hat b_2$ that are reasonably close to $b^*$,
without consuming too many labeled samples.

\begin{figure}[t]
\centering
\includegraphics[width=0.45\textwidth]{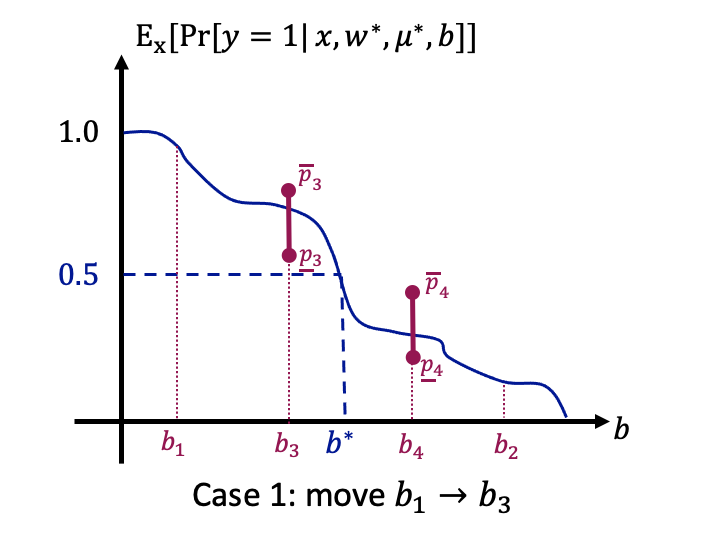}
\includegraphics[width=0.45\textwidth]{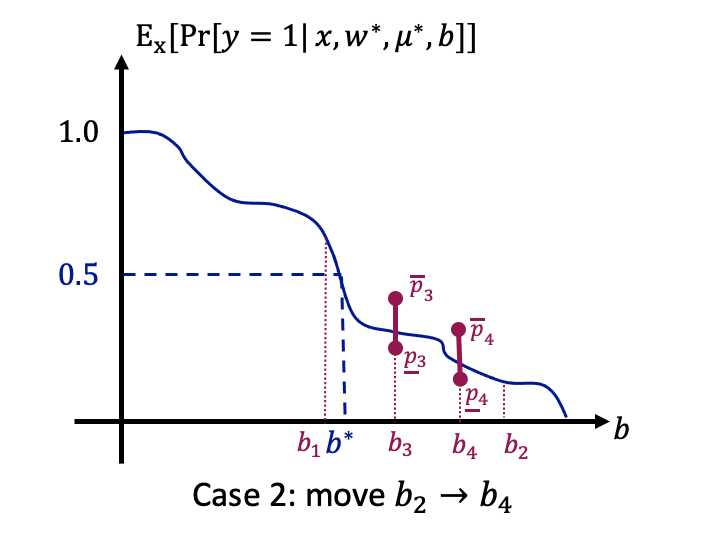}
\caption{Graphical illustration of the main idea behind Algorithm \ref{alg:bisection-search}.}
{\small Note: the left panel shows the first case of the trisection search, in which $\wp(b_3)=\mathbb E_{x\sim P_X}[y|x,w^*,\mu^*,b_3] > 1/2$.
Once $\underline p_3$ exceeds $1/2$, the algorithm will move $b_1$ to $b_3$.
The right panel shows the second case of the trisection search, in which $\wp(b_4)=\mathbb E_{x\sim P_X}[y|x,w^*,\mu^*,b_4]<1/2$.
As both $\overline p_3,\overline p_4$ are below 1/2, the algorithm will move $b_2$ to $b_4$.
The strict monotonicity of $\wp(\cdot)$ as a function of $b$ ensures that the trisection search will never exclude $b^*$ from $[b_1,b_4]$,
and that the search will terminate in $O(d \log(d/\delta))$ iterations (see Lemma \ref{lem:key-bisection-search}).}
\label{fig:illustration-trisection}
\end{figure}

The main idea behind Algorithm \ref{alg:bisection-search} is a trisection search approach, motivated by the fact that
the probability $\Pr_{x\sim P_X}[v(x)\geq b]$ is a monotonically decreasing function of $b$, and furthermore as $|b-b^*|$ increases
the gap between $\Pr_{x\sim P_X}[v(x)\geq b]$ and $\Pr_{x\sim P_X}[v(x)\geq b^*]=1/2$ will also increase (see, e.g., Lemma \ref{lem:beta-deviation} in the proof).
This allows us to use a trisection search procedure to localize the value of $b^*$, by simply comparing an empirical estimate of 
$\Pr_{x\in P_X}[v(x)\geq b]$ at the current value of $b$.
More specifically, at an iteration $\hat b_3,\hat b_4$ are the two midpoints and $[\underline p_3,\overline p_3]$ are lower and upper estimates of 
$\Pr_{x\sim P_X}[v(x)\geq \hat b_3]$ and similarly $[\underline p_4,\overline p_4]$ are lower and upper estimates for $\Pr_{x\sim P_X}[v(x)\geq \hat b_4]$.
With either probability being separated from $1/2$, the algorithm could move $\hat b_1$ or $\hat b_2$ to $\hat b_3$ or $\hat b_4$. The algorithm is guaranteed to maintain that $b^*\in[\hat p_1,\hat p_2]$, thanks to the monotonicity of $\Pr_{x\sim P_X}[v(x)\geq b]$ with respect to $b$.

The following technical lemmas are the main results explaining the objective and guarantee of Algorithm \ref{alg:bisection-search},
which are proved in Sec.~\ref{sec:proof-bisection}.
\begin{lemma}
Suppose $d\geq 2$ and let $\beta^*=b^*+\mu^*$. Then $\frac{|\beta^*|}{\|w^*\|_2}\leq \sqrt{\frac{2\ln(100C_xC_\xi/c_xc_\xi)}{d-1}} = O(1/\sqrt{d})$.
\label{lem:beta-star}
\end{lemma}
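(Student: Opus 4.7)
The plan is to reduce the statement to a concentration inequality for a random projection of the uniform distribution on $\mathbb B_2(d)$. Substituting $v(x)=\langle x,w^*\rangle-\mu^*$ into the definition $\Pr_{x\sim P_X}[v(x)\geq b^*]=1/2$ gives $\Pr_{x\sim P_X}[\langle x,w^*\rangle\geq \beta^*]=1/2$. Writing $u=w^*/\|w^*\|_2$ and $t=\beta^*/\|w^*\|_2$, the goal becomes $|t|\leq \sqrt{2\ln(100C_xC_\xi/(c_xc_\xi))/(d-1)}$. By replacing $(w^*,\mu^*)$ with $(-w^*,-\mu^*)$ if necessary, I may assume $t\geq 0$. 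The upper density bound $f_x\leq C_xf_u$ from Assumption (A2) then yields
\begin{equation*}
\tfrac{1}{2}=\Pr_{P_X}[\langle x,u\rangle\geq t]\leq C_x\,\Pr_{f_u}[\langle x,u\rangle\geq t],
\end{equation*}
so $\Pr_{f_u}[\langle x,u\rangle\geq t]\geq 1/(2C_x)$.

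The second step is to match this lower bound with a Gaussian-type upper bound on the same probability. By rotational invariance of $f_u$ one may take $u=e_1$, so $\langle x,u\rangle$ has marginal density $g(s)=\Gamma(d/2+1)/(\sqrt\pi\,\Gamma((d+1)/2))\cdot(1-s^2)^{(d-1)/2}$ on $[-1,1]$. Combining the pointwise estimate $(1-s^2)^{(d-1)/2}\leq e^{-(d-1)s^2/2}$ with the standard Gaussian tail $\int_t^\infty e^{-(d-1)s^2/2}\,ds\leq \sqrt{\pi/(2(d-1))}\cdot e^{-(d-1)t^2/2}$, and using the Stirling estimate $\Gamma(d/2+1)/\Gamma((d+1)/2)=\Theta(\sqrt d)$, I obtain
\begin{equation*}
\Pr_{f_u}[\langle x,u\rangle\geq t]\leq C_0\,e^{-(d-1)t^2/2}
\end{equation*}
with an absolute constant $C_0$ (in fact $C_0\leq 1$ for every $d\geq 2$). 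An equivalent route is to decompose $x=RY$ with $R=\|x\|_2\leq 1$ and $Y$ uniform on $\mathbb S^{d-1}$, and invoke L\'evy's spherical concentration.

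Combining the two inequalities gives $1/(2C_x)\leq C_0\,e^{-(d-1)t^2/2}$, hence $t^2\leq 2\ln(2C_0C_x)/(d-1)$. To land in the stated form, note that $c_x\leq 1$ (since $c_x=c_x\int f_u\leq\int f_x=1$) and $C_\xi\geq c_\xi$ by Assumption (A3), so for any $C_0\leq 50$ one has $2C_0C_x\leq 100C_xC_\xi/(c_xc_\xi)$, which yields the claimed bound and in particular $O(1/\sqrt d)$ since the logarithmic factor is constant in $d$. The main obstacle I anticipate is keeping the prefactor $C_0$ independent of $d$: a crude pointwise bound on $g(s)/g(0)$ would introduce an extra $\sqrt d$ factor, degrading the final rate to $O(\sqrt{\ln d/d})$ rather than $O(1/\sqrt d)$. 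Carefully absorbing this factor, either via Stirling's formula applied to $\Gamma(d/2+1)/\Gamma((d+1)/2)$ or by passing to the sphere and invoking L\'evy's inequality, is therefore the key technical step.
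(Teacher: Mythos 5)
Your argument is internally consistent and would indeed be a cleaner route \emph{if} $b^*$ were the noise-free median of $v(x)$, i.e.\ if $\Pr_{x\sim P_X}[v(x)\geq b^*]=1/2$ were taken literally: a direct Gaussian-type tail bound on the one-dimensional marginal of $P_U$ then suffices, the noise plays no role, and your constant is even sharper than the stated one. But that is not the quantity the paper's own proof bounds, nor the one the lemma is used for downstream. In the paper's proof the defining identity is $\tfrac12=\Pr_{x\sim P_X}[y=1\mid b^*]$, i.e.\ $\mathbb E_x[\eta_{b^*}(x)]=\Pr_{x\sim P_X,\,\xi\sim P_\xi}[v(x)+\xi\geq b^*]=1/2$; this noise-inclusive ``label-balancing'' action is also what the trisection search of Lemma \ref{lem:key-bisection-search} can actually estimate from the binary feedback, and what Lemma \ref{lem:beta-deviation} works with. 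Since Assumption (A3) only fixes the median of $\xi$ and does not assume symmetry, the two balance points differ in general, so your opening reduction $\Pr_{x\sim P_X}[\langle x,w^*\rangle\geq\beta^*]=1/2$ is not available. The fact that $c_\xi$ and $C_\xi$ enter your final bound only through a cosmetic inflation in the last step is the symptom of this mismatch: the paper's bound genuinely depends on the noise density, yours does not.

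Concretely, the missing content is the step that converts the average-label condition into geometry. The paper splits $\mathbb E_x[\eta_{b^*}(x)-\tfrac12]$ over the halfspaces $\{x_1\geq\beta^*\}$ and $\{x_1\leq\beta^*\}$, upper-bounds the positive part via the margin-Lipschitz estimate $\eta(x)-\tfrac12=\phi(x_1-\beta^*)\leq C_\xi(x_1-\beta^*)$ and Lemma \ref{lem:marginal-d1}, giving roughly $\tfrac{C_xC_\xi}{\sqrt{d}}e^{-(d-1)(\beta^*)^2/2}$, and lower-bounds the negative part by $\Omega\bigl(\tfrac{c_xc_\xi}{\sqrt{d}}\bigr)$ using $\phi'\geq c_\xi$; balancing the two makes the $\sqrt{d}$ factors cancel and leaves a $d$-free constant inside the logarithm, hence $|\beta^*|/\|w^*\|_2=O(1/\sqrt{d})$. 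Note that the obvious repair of your approach for the noise-inclusive $b^*$ --- bounding $\eta-\tfrac12$ by $\tfrac12$ on $\{x_1\geq\beta^*\}$ so that only a tail probability remains --- loses the $1/\sqrt{d}$ factor on that side and yields only $|\beta^*|/\|w^*\|_2=O(\sqrt{\log d/d})$. So to close the gap you must carry out the margin-weighted comparison (where $C_\xi$ and $c_\xi$ do real work) for the label-balancing $b^*$, rather than reduce the problem to a pure tail bound for the marginal of the uniform distribution.
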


\begin{lemma}
Suppose $d\geq 2$ and let $(\hat b_1,\hat b_2)$ be the values returned by $\textsc{BisectionSearch}(\varepsilon_s,\delta_s)$.
With probability $1-\delta_s$ the following hold: $\hat b_1\leq b^*\leq\hat b_2$,
and at most $O(\varepsilon_s^{-2}\log(1/\delta_s\varepsilon_s))=O(d\log(d/\delta))$ queried samples are consumed.
\label{lem:key-bisection-search}
\end{lemma}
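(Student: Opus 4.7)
My plan is to prove the two claims—correctness ($\hat b_1 \le b^* \le \hat b_2$) and the sample bound—by first isolating a single high-probability event on which all empirical confidence intervals constructed inside the inner loop contain the true probabilities, and then combining monotonicity of $\wp(b) := \Pr_{x \sim P_X}[v(x) \ge b]$ with the quantitative gap bound from Lemma~\ref{lem:beta-deviation} to control both progress and cost per outer iteration.

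Define the event $\mathcal{E}$ that, for every outer iteration and every inner-loop step indexed by cumulative query count $n$ (with $\hat n$ samples collected in that inner loop), both $\wp(\hat b_3) \in [\underline p_3, \overline p_3]$ and $\wp(\hat b_4) \in [\underline p_4, \overline p_4]$ hold. Hoeffding's inequality applied to the Bernoulli random variables $\vct 1\{y = 1\}$ under fixed action $\hat b_3$ (resp.\ $\hat b_4$), with the half-width $\sqrt{\ln(8n^2/\delta_s)/(2\hat n)}$, shows that either containment fails at a given $(n,\hat n)$ with probability at most $\delta_s/(4n^2)$. A union bound over $n \ge 1$ and over the two midpoints gives $\Pr[\mathcal{E}^c] \le \sum_{n \ge 1} \delta_s/(2n^2) \le \delta_s$, as required.

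For correctness, I would argue on $\mathcal{E}$ by induction over outer iterations that $b^* \in [\hat b_1, \hat b_2]$ throughout. The base case holds since $|\mu^*| \le B$ and Lemma~\ref{lem:beta-star} force $b^* \in [-B, B]$. For the inductive step, if the inner loop exits because $\underline p_j > 0.5$ for some $j \in \{3,4\}$, then on $\mathcal{E}$ we have $\wp(\hat b_j) > 1/2 = \wp(b^*)$, and strict monotonicity of $\wp$ (which follows from (A2) and (A3) since $v$ has a density) yields $b^* > \hat b_j \ge \hat b_3$, so updating $\hat b_1 \leftarrow \hat b_3$ preserves the invariant; the symmetric argument handles exit via $\overline p_j < 0.5$, in which case $b^* < \hat b_4$ and $\hat b_2 \leftarrow \hat b_4$ is safe.

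For the query bound, observe the outer interval shrinks by factor $2/3$ per iteration, so there are $K = O(\log(B/\varepsilon_s))$ outer iterations. In any outer iteration with length $L = \hat b_2 - \hat b_1 > \varepsilon_s$, since $b^* \in [\hat b_1, \hat b_2]$ and $\hat b_4 - \hat b_3 = L/3$, at least one of $|b^* - \hat b_3|, |b^* - \hat b_4|$ is $\ge L/6$. Invoking Lemma~\ref{lem:beta-deviation} then gives a gap $|\wp(\hat b_j) - 1/2| = \Omega(L)$ for the corresponding $j$. A standard inner-loop argument then shows that once $\hat n \gtrsim L^{-2} \log(n/\delta_s)$, the half-width shrinks below the gap and the loop exits, contributing $O(L^{-2} \log(n/\delta_s))$ queries. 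Summing the geometric series $\sum_k L_k^{-2}$ with $L_k = \varepsilon_s (3/2)^k$ decays backward in $k$, and the total is dominated by its smallest term, yielding total queries $O(\varepsilon_s^{-2} \log(n/\delta_s)) = O(\varepsilon_s^{-2} \log(1/\delta_s \varepsilon_s))$, which specializes to $O(d \log(d/\delta))$ under the stated choices.

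The main obstacle I anticipate is justifying the quantitative gap $|\wp(\hat b_j) - 1/2| = \Omega(L)$: this is a statement about how a shift in $b$ translates into a probability shift on the random projection $\langle x, w^* \rangle + \xi$, and requires combining the noise density lower bound $\inf_{|\xi|\le 2} f_\xi(\xi) \ge c_\xi$ from (A3) with the covariate regularity from (A2). I expect this is precisely the content of Lemma~\ref{lem:beta-deviation} referenced in the excerpt, so the reduction is clean, but the accompanying bookkeeping of the cumulative $\log(n/\delta_s)$ factor inside the geometric sum is where the calculation is most delicate.
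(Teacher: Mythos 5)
Your proposal is correct and follows essentially the same route as the paper's proof: a Hoeffding-plus-union-bound event making all confidence intervals valid, monotonicity of $\wp$ for the invariant $\hat b_1\leq b^*\leq\hat b_2$, the observation that one of $\hat b_3,\hat b_4$ is at distance at least $L/6$ from $b^*$ combined with Lemma \ref{lem:beta-deviation} to force inner-loop termination after $O(L^{-2}\log(n/\delta_s))$ queries, and a geometric sum dominated by its last term. The only differences are cosmetic (you make the induction and the base case slightly more explicit than the paper does).
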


Intuitively, Lemma \ref{lem:beta-star} establishes that the ``balancing'' intercept $b^*$ is $O(1/\sqrt{d})$ close to the intercept $\mu^*$ in the 
utility model, which is helpful for our later analysis.
Lemma \ref{lem:key-bisection-search} further establishes that the returned two actions $\hat b_1,\hat b_2$ sandwich the ``label-balancing'' action $b^*$,
and also upper bound the total number of labeled (queried) samples consumed in the algorithmic procedure.

\subsection{Margin-based Active Learning}\label{sec:margin-based}

In Algorithm \ref{alg:margin-based-active-learning}
we provide the pseudocode description of the margin based active learning algorithm
we use in this problem to actively learn a linear model with intercepts.

\begin{algorithm}[t]
\caption{Margin-based Active Learning Non-homogeneous Linear Classifiers}
\label{alg:margin-based-active-learning}
\begin{algorithmic}[1]
\Function{MarginBasedActiveLearning}{$b,\varepsilon_a,\delta_a,\kappa_m,\kappa_n,\epsilon_0,\beta_0$}
	\State Collect $n_0=\lceil\kappa_n/\epsilon_0^2\rceil$ samples with action $b$ and let $\mathcal D_0=\{(x,y)\}\subseteq\mathbb B_d(2)\times\{\pm 1\}$, $|\mathcal D_0|=n_0$ be the queried samples;\label{line:init-1}
	\State Let $\hat w_0,\hat\beta_0\gets\arg\min_{\|w\|_2=1,|\beta|\leq\beta_0}\sum_{(x,y)\in\mathcal D_0}\vct 1\{y\neq \sgn(\langle x,w\rangle-\beta)\}$;
	\State Let $k_0=\min\{k\in\mathbb N: 2^{-k}\epsilon_0\leq\varepsilon_a\}$;\label{line:init-2}
	\For{$k=1,2,\cdots,k_0$}
		\State $\epsilon_k\gets 2^{-k}\epsilon_0$, $m_k\gets \kappa_m\sqrt{\epsilon_k}$, $n_k\gets \lceil\kappa_n d/\epsilon_k\rceil$, $\mathcal D_k=\emptyset$;
		\While{$|\mathcal D_k|<n_k$}
			\State Observe context vector $x\in\mathbb R^d$ for the next object;
			\If{$|\langle x,\hat w_{k-1}\rangle - \hat\beta_{k-1}| \leq m_k$}
				\State Invoke action $b$ and let $y\in\{\pm 1\}$ be the collected binary feedback;
				\State Update $\mathcal D_k\gets\mathcal D_k\cup\{x,y\}$;
			\EndIf
		\EndWhile
		\State $\hat w_k,\hat\beta_k \gets \arg\min_{\|w\|_2=1,|\beta|\leq\beta_0}\sum_{(x,y)\in\mathcal D_k}\vct 1\{y\neq \sgn(\langle x,w\rangle-\beta)\}$;\label{line:01minimization}
	\EndFor
	\State \textbf{return} $\hat w_{k_0},\hat\beta_{k_0}$.
\EndFunction
\end{algorithmic}
\end{algorithm}

Note that in Algorithm \ref{alg:margin-based-active-learning}
the query point $b$ is fixed, with the algorithm only able to select which sample/contextual vector to act upon.
Since the query point $b$ is fixed, we can consider linear models with intercepts as $\hat v(\cdot)=\langle\cdot,\hat w\rangle -\hat\beta$.
For such a model, we define the \emph{error} of $\hat v$ under the query point $b$ as
\begin{equation}
\err_b(\hat v) := \Pr_{x\sim P_X,\xi\sim P_\xi}\big[\sgn(\underbrace{v(x)+\xi}_{u(x)}-b)\neq \sgn(\hat v(x))\big],
\label{eq:defn-err}
\end{equation}
where $v(x)=\langle x,w^*\rangle - \mu^*$.
Note that for any $b\in\mathbb R$, the model $v_b^*(\cdot) := \langle \cdot, w^* \rangle - \mu^* - b$
has the smallest error defined in Eq.~(\ref{eq:defn-err}), 
This is because $v_b^*(\cdot)$ is the Bayes classifier; that is, $v_b^*(x)\geq 0$ if and only if $\Pr[v(x)+\xi\geq b|x]\geq 1/2$.
Hence, we can also define the \emph{excess error} of a model $\hat v(\cdot)$ as
\begin{equation}
\Delta\err_b(\hat v) := \err_b(\hat v) - \err_b(v_b^*).
\label{eq:defn-excess-err}
\end{equation}

\begin{figure}[t]
\centering
\includegraphics[width=0.45\textwidth]{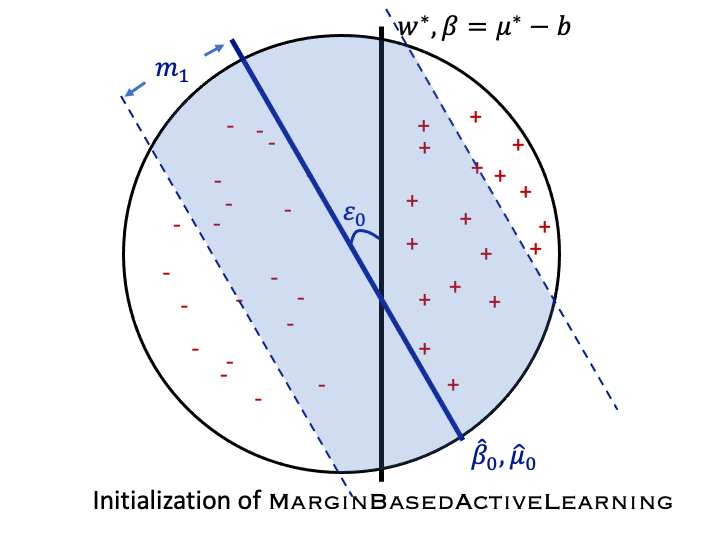}
\includegraphics[width=0.45\textwidth]{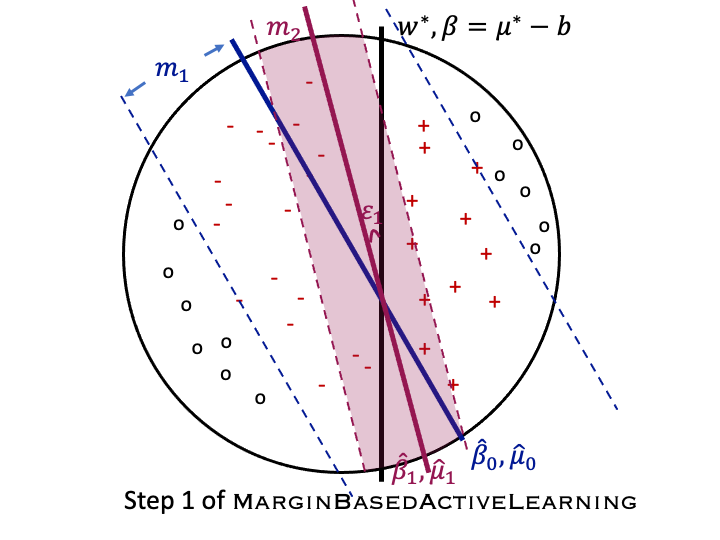}
\caption{Graphical illustration of the main idea of Algorithm \ref{alg:margin-based-active-learning}.}
{\small Note: the left panel shows the initialization step (Lines \ref{line:init-1} and \ref{line:init-2}) of Algorithm \ref{alg:margin-based-active-learning}. 
In the initialization step, sample selection is not carried out and therefore the obtained model estimates $\hat w_0,\hat \beta_0$ have error upper bounded by $\varepsilon_0$.
The right panel shows the first iteration of Algorithm \ref{alg:margin-based-active-learning}.
As shown in the figure, only those samples that are within an $m_1$ margin around $\hat w_0,\hat\beta_0$ (those within the blue dashed lines)
are labeled/queried.
After the first iteration, a more refined estimate $\hat w_1,\hat\beta_1$ is obtained and a shrunk margin $m_2$ is imposed (marked with maroon) for the next iteration.}
\label{fig:margin-based}
\end{figure}

Figure \ref{fig:margin-based} illustrates the principles of Algorithm \ref{alg:margin-based-active-learning}.
The main idea of Algorithm \ref{alg:margin-based-active-learning} is simple: 
the algorithm first uses a ``warm-up'' epoch consisting of $n_0$ queried samples to construct a preliminary model estimate $\hat w_0$ and $\hat\beta_0$.
There is no sample selection or active learning in this warm-up procedure, and the analysis of excess errors of $\hat w_0,\hat\beta_0$ follows the
standard VC theory analyzing empirical risk minimizers of binary classifiers (see e.g. Lemma \ref{lem:VC-erm} in the proof and also \cite{balcan2007margin,vapnik2015uniform,vapnik2013nature}).
Next, in each epoch the algorithm only takes action $b$ for those users with contextual vectors that are \emph{close} to the current classification hyperplane
(i.e., those users with small ``margin'' $|\langle x,\hat w_{k-1}\rangle-\hat\beta_{k-1}|$).
This concentrates our labeled/queried samples to the region that are close to the classification hyperplane, which helps reduce the number of queried
samples as the queried samples are collected on regions that are the most uncertain from a binary classification perspective.

The following lemma is the main result of this section, which is proved in Sec.~\ref{sec:proof-margin-based}.
\begin{lemma}
Let $(\hat w,\hat\beta)$ be returned by Algorithm \ref{alg:margin-based-active-learning} with parameters satisfying $|\mu^*-b|/\|w^*\|_2\leq\beta_0=O(1/\sqrt{d})$, 
$\kappa_m=\Omega(1)$, $\kappa_n=\Omega(d+\log\log(1/\varepsilon_a)+\log(1/\delta_a))$ and $\epsilon_0=\sqrt{\varepsilon_a}$.
Let $\hat v(\cdot)=\langle\cdot,\hat w\rangle-\hat\beta$.
Then for sufficiently large $d$ and sufficiently small $\varepsilon_a$, with probability $1-\delta_a$ the following hold:
\begin{enumerate}
\item $\Delta\err(\hat v)\leq \varepsilon_a$;
\item Algorithm \ref{alg:margin-based-active-learning} consumes $O(\kappa_n d/\varepsilon_a)$ queried samples and $\tilde O(\kappa_n \sqrt{d}e^{d\varepsilon_a}/\varepsilon_a^{3/2})$ total samples. 
\end{enumerate}
\label{lem:margin-based}
\end{lemma}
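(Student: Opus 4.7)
The plan is to prove both conclusions by induction on the epoch counter $k=0,1,\ldots,k_0$, with inductive hypothesis that $\Delta\err_b(\hat v_k)\le \epsilon_k$ holds with probability at least $1-\delta_a/(2k_0)$; a union bound over the $k_0=O(\log(1/\varepsilon_a))$ epochs then delivers the claimed overall failure probability $\delta_a$. The two sample-complexity bounds will follow by summing the per-epoch counts, which form geometric series dominated by their last terms.

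For the base case (Lines~\ref{line:init-1}--\ref{line:init-2}), I would invoke the standard VC bound (VC-dim $=d+1$ for non-homogeneous halfspaces) for the ERM on the $n_0=\kappa_n/\varepsilon_a$ warm-up samples collected under the fixed action $b$, which yields $\Delta\err_b(\hat v_0)\le\epsilon_0=\sqrt{\varepsilon_a}$ once $\kappa_n=\Omega(d+\log(1/\delta_a))$. For the inductive step at $k\ge 1$, the key is to convert excess error into geometric closeness: under Assumption (A3), $|2\eta(x)-1|\asymp |v(x)-b|$ near the Bayes boundary, so a classifier with unit normal at angle $\theta$ from $w^\star/\|w^\star\|_2$ and intercept differing from $\beta_b^\star=(\mu^\star+b)/\|w^\star\|_2$ by $\Delta\beta$ has excess error scaling like $(\theta+\Delta\beta)^2\sqrt{d}$. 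Inverting this, the hypothesis $\Delta\err_b(\hat v_{k-1})\le\epsilon_{k-1}$ forces $(\hat w_{k-1},\hat\beta_{k-1})$ to lie within $O(\sqrt{\epsilon_{k-1}})$ of the Bayes parameters, matching the margin choice $m_k=\kappa_m\sqrt{\epsilon_k}$ up to constants. Consequently, with high probability the symmetric difference between any competing candidate $(w,\beta)$ consistent with the in-band data and the Bayes classifier is confined to the band $\{x:|\langle x,\hat w_{k-1}\rangle-\hat\beta_{k-1}|\le m_k\}$, so the out-of-band error contribution is $O(\epsilon_k)$ and one only needs to control error inside the band.

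The in-band analysis invokes uniform convergence on the conditional distribution $P_X(\,\cdot\mid \text{band})$. Assumption (A2) together with $\beta_0=O(1/\sqrt d)$ guarantees that the scalar density of $\langle x,\hat w_{k-1}\rangle-\hat\beta_{k-1}$ is $\Theta(\sqrt d)$ near the origin (via thin-shell concentration for the uniform measure on $\mathbb{B}_2(d)$, absorbed by the $C_x/c_x$ density ratio), so the band mass is $p_k\asymp m_k\sqrt d\asymp\sqrt{d\epsilon_k}$. Plugging $n_k=O(\kappa_n d/\epsilon_k)$ in-band samples into the VC bound drives the conditional excess error below $\epsilon_k/(2p_k)$, giving an in-band contribution below $\epsilon_k/2$ and closing the induction. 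Summing epochs, $\sum_k n_k=O(\kappa_n d/\varepsilon_a)$ for queried samples and $\sum_k n_k/p_k = O(\kappa_n\sqrt d/\varepsilon_a^{3/2})$ for total contexts, both dominated by the final epoch; the residual $e^{d\varepsilon_a}$ factor in the statement absorbs the density-ratio constant $C_x/c_x$ and the slack from non-uniformity of $P_X$ in the band-mass lower bound.

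The main technical obstacle will be propagating the non-homogeneous geometry through every epoch. Unlike the homogeneous setup of \cite{balcan2007margin}, here I must simultaneously control the angular error $\theta_k$ and the intercept error $\Delta\beta_k$, and the band-escape (tail) estimate becomes a genuinely two-parameter calculation. The whole argument rests on the hypothesis $|\mu^\star-b|/\|w^\star\|_2\le\beta_0=O(1/\sqrt d)$, supplied by the upstream trisection search via Lemmas~\ref{lem:beta-star} and~\ref{lem:key-bisection-search}: if the intercept were order unity, the band mass could collapse from $\Theta(m_k\sqrt d)$ to $\Theta(m_k)$, wiping out the active-learning gain. Verifying that the quadratic excess-error-to-parameter conversion and the band-tail decomposition survive in the near-balanced non-homogeneous regime with a distribution $P_X$ only \emph{comparable} to uniform (rather than exactly uniform on the sphere, as in \cite{balcan2007margin,wang2016noise}) is the most delicate step of the argument.
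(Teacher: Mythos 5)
Your proposal follows essentially the same route as the paper's proof: induction over epochs with a per-epoch VC/ERM bound, inversion of excess error into angular and intercept closeness (the paper's Lemmas \ref{lem:error-to-angle} and \ref{lem:error-to-intercept}), a decomposition into in-band and out-of-band error with band mass $\asymp m_k\sqrt{d}$ obtained from near-uniform marginal bounds under Assumption (A2), and geometric-series summation of per-epoch labeled and total sample counts (with the $e^{d\varepsilon_a}$ factor coming from the band-mass lower bound, as in the paper). The plan is correct and matches the paper's argument in structure and in all key technical ingredients.
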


Essentially, Lemma \ref{lem:margin-based} shows that the estimated linear model $\hat v(\cdot)$ produced by Algorithm \ref{alg:margin-based-active-learning}
has the target excess risk $\varepsilon_a$ with high probability.
The lemma also upper bounds the number of queried and total samples consumed in the estimation procedure.
As we can see, the number of labeled samples required is on the order of $O(1/\varepsilon_a)$, which is an order of magnitude fewer
than the total number of samples consumed (on the order of $\tilde O(1/\varepsilon^{3/2})$).
This shows that the active learning procedure is capable of drastically reducing the number of queried samples required to attain an accurate model estimate $\hat v$,
by being selective in the user context vectors.

\subsection{Sample complexity analysis of Algorithm \ref{alg:meta-algorithm}}

In this section we establish the following theorem, which analyzes the sample complexity (both samples that are queried on and samples that are passed)
of Algorithm \ref{alg:meta-algorithm}, and provides guidance on the selection of the algorithm input parameters.

\begin{theorem}
Suppose Algorithm \ref{alg:meta-algorithm} is executed with $\kappa_m\asymp 1$, $\kappa_n\asymp d+\log\log(1/\varepsilon)+\log(1/\delta)$,
$\kappa_\varepsilon\asymp 1/d$ and $\beta_0\asymp 1/\sqrt{d}$. Then for sufficiently small $\varepsilon>0$ and sufficiently large $d$, with probability $1-\delta$ it holds that $|\hat v(x)-v(x)|\leq \varepsilon$ for all $x\in\mathbb B_2(d)$. Furthermore, the algorithm makes $n(\delta,\varepsilon)$ queries among $m(\delta,\varepsilon)$ total samples/contexts, with
\begin{eqnarray*}
n(\delta,\varepsilon) &=& O\left(\frac{d^3\log(d\log(d\varepsilon^{-1})/\delta)\log^2(1/\varepsilon)}{\varepsilon^2}\right), \\
m(\delta,\varepsilon) &=& O\left(\frac{d^3\log(d\log(d\varepsilon^{-1})/\delta)\log^3(1/\varepsilon)}{\varepsilon^3}\right).
\end{eqnarray*}
\label{thm:meta-algorithm}
\end{theorem}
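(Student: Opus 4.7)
The plan is to combine Lemma \ref{lem:key-bisection-search} on the trisection search with Lemma \ref{lem:margin-based} on margin-based active learning, convert the resulting excess-classification-risk guarantees into $\ell_2$ parameter-error bounds on $(\hat w_i,\hat\beta_i)$, and propagate these errors through the reconstruction formulas on lines \ref{line:defn-hat-alpha}--\ref{line:defn-hat-v} of Algorithm \ref{alg:meta-algorithm}. I would first take a union bound over the three probabilistic events (the trisection stage and the two \textsc{MarginBasedActiveLearning} invocations), each failing with probability at most $\delta/3$. Invoking Lemma \ref{lem:key-bisection-search} with $\varepsilon_s=0.1/\sqrt{d-1}$ yields $\hat b_1\leq b^*\leq\hat b_2$ at the cost of $O(d\log(d/\delta))$ queries, and combined with Lemma \ref{lem:beta-star} gives $|\hat b_i+\mu^*|/\|w^*\|_2=O(1/\sqrt{d})$. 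This verifies the hypothesis $|\mu^*-b|/\|w^*\|_2\leq\beta_0=O(1/\sqrt{d})$ of Lemma \ref{lem:margin-based}, which is then applied twice to obtain $\Delta\err_{\hat b_i}(\hat v_i)\leq\varepsilon_a$ for $i=1,2$.

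The main technical obstacle is converting $\Delta\err_{\hat b_i}(\hat v_i)\leq\varepsilon_a$ into parameter-error bounds $\|\hat w_i-w^*/\alpha\|_2,\,|\hat\beta_i-(\mu^*+\hat b_i)/\alpha|=O(\sqrt{\varepsilon_a})$, where $\alpha=\|w^*\|_2$. For this I would exploit Assumption (A3), which via the mean value theorem gives $|\Pr[y=1\mid x,b]-1/2|\gtrsim c_\xi|v(x)-b|$ whenever $|v(x)-b|\leq 2$, together with the non-degeneracy of $P_X$ from (A2). On the symmetric difference between the half-spaces defined by $\hat v_i$ and by the Bayes classifier $v^*_{\hat b_i}$ (a thin slab whose $P_X$-measure is proportional to the parameter gap), the regret integrand grows linearly with distance to the Bayes boundary, so integrating produces $\Delta\err_{\hat b_i}(\hat v_i)\gtrsim(\text{parameter gap})^2$. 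This is essentially the classical low-noise-to-parameter conversion used in \cite{balcan2007margin,wang2016noise}, adapted here to non-homogeneous classifiers with intercepts $O(1/\sqrt{d})$ from the origin.

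With the parameter bounds in hand, the remainder is routine error propagation. Letting $\Delta_i=\hat\beta_i-(\mu^*+\hat b_i)/\alpha$ with $|\Delta_i|=O(\sqrt{\varepsilon_a})$, expansion gives $\hat\beta_2-\hat\beta_1=(\hat b_2-\hat b_1)/\alpha+(\Delta_2-\Delta_1)$, and since the $2/3$-contraction of the trisection loop ensures $\hat b_2-\hat b_1\geq(2/3)\varepsilon_s=\Omega(1/\sqrt{d})$, I obtain $|\hat\alpha-\alpha|\lesssim\alpha^2\sqrt{d\,\varepsilon_a}$. Combining with $\hat w=\hat\alpha\hat w_1$, $\hat\mu=\hat\alpha\hat\beta_1-\hat b_1$, and $|\mu^*+\hat b_1|/\alpha=O(1/\sqrt{d})$ yields $\|\hat w-w^*\|_2+|\hat\mu-\mu^*|\lesssim\sqrt{d\,\varepsilon_a}$; substituting $\varepsilon_a\asymp\varepsilon^2/(d\log^2(1/\varepsilon))$ from $\kappa_\varepsilon\asymp 1/d$ delivers the PAC guarantee $|\hat v(x)-v(x)|\leq\|\hat w-w^*\|_2+|\hat\mu-\mu^*|\leq\varepsilon$ for all $\|x\|_2\leq 1$ via Cauchy--Schwarz. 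The quoted sample-complexity rates follow by summing the $O(d\log(d/\delta))$ queries of the trisection stage with the two active-learning contributions from Lemma \ref{lem:margin-based}; since $d\varepsilon_a=O(1/\log^2(1/\varepsilon))=O(1)$ renders $e^{d\varepsilon_a}$ constant, the dominant terms yield the claimed expressions for $n(\delta,\varepsilon)$ and $m(\delta,\varepsilon)$.
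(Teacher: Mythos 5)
Your proposal is correct and follows essentially the same route as the paper's proof: trisection search plus Lemma \ref{lem:beta-star} to justify the $\beta_0$ hypothesis, Lemma \ref{lem:margin-based} applied at $\hat b_1,\hat b_2$, conversion of excess classification risk to parameter error of order $\sqrt{\varepsilon_a}$ (your sketched conversion is precisely the content of the paper's Lemmas \ref{lem:error-to-angle} and \ref{lem:error-to-intercept}), then error propagation through $\hat\alpha,\hat\mu,\hat w$ using $|\hat b_2-\hat b_1|=\Omega(1/\sqrt{d})$ and the choice $\varepsilon_a\asymp\varepsilon^2/(d\log^2(1/\varepsilon))$. The sample-complexity accounting, including the observation that $e^{d\varepsilon_a}=O(1)$, matches the paper's argument as well.
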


Theorem \ref{thm:meta-algorithm} shows that, by using more unlabeled/unqueried samples than those that are labeled (more specifically, $1/\varepsilon^3$ total samples and $1/\varepsilon^2$ labeled ones), the utility function estimate $\hat v(\cdot)$ produced by our active learning algorithm is within $\varepsilon$ estimation error with high probability. In Sec.~\ref{sec:numerical} of numerical studies, we will see that the availability of unlabeled samples will greatly improve the estimation accuracy of an active learning algorithm, compared to a passive learning baseline which cannot skip or select samples to query.

\begin{remark}
If the decision-maker needs to make queries to all incoming contexts/samples  (i.e., skipping uninformative samples is not allowed), then at least $\Omega(d/\varepsilon^3)$ samples are required.
{
To see this, note that the standard classification theory establishes that $\Omega(d/\varepsilon^{3/2})$ samples are needed
to obtain a linear classifier $\hat w$ such that $\Pr[\sgn(\hat w^\top x)\neq\sgn((w^*)^\top x)]\leq\varepsilon$
(see, e.g., \cite{mammen1999smooth}, \citep[Table 1, probabilistic labels with Bayes classifier in $H$ and TNC parameter $\alpha=1/2$]{ben2014sample}).
}
On the other hand, it can be shown via an integration argument as follows. Let $\angle(\hat w, w^*)$ denote the angle between $\hat w$ and $w^*$.  If both $\hat w,  w^*$ are normalized (i.e., $\|\hat w\|_2=\|w^*\|_2=1$) and 
$\|\hat w-w^*\|_2\approx \angle(\hat w,w^*)\approx\varepsilon$ then $\Pr_x[\sgn(\hat w^\top x)\neq \sgn((w^*)^\top x)]\approx \varepsilon^2$.
This shows that in order to achieve $|\hat v(\cdot)-v(\cdot)|\leq \varepsilon$ we must have $\Pr_x[\sgn(\hat w^\top x)\neq\sgn((w^*)^\top x)]\lesssim {\varepsilon}^2$,
indicating a sample complexity lower bound of $\Omega(d/(\varepsilon^{3/2})^2) = \Omega(d/\varepsilon^3)$.

\label{rem:lower}
\end{remark}

{
\begin{remark}
When each skipped sample has a cost of $\rho\in[0,1)$ compared to a labeled sample, 
the combined sample complexity of our proposed active learning algorithm is on the order of $\widetilde O(\varepsilon^{-2}+\rho\varepsilon^{-3})$,
omitting polynomial dependency on other problem parameters.
On the other hand, an algorithm incapable of skipping samples requires $\Omega(\varepsilon^{-3})$ samples as indicated in the previous remark,
significantly higher than $\widetilde O(\varepsilon^{-2}+\rho\varepsilon^{-3})$ especially when $\rho$ is small (indicating that skipped samples 
are much less costly compared with labeled samples).
\label{rem:cost}
\end{remark}
}

\begin{remark}
In this remark we discuss an ``intermediate setting'' in which the algorithm can select action levels but not skip samples.
This intermediate setting is stronger than the passive learning setting but weaker than the active learning setting.
We remark that the intermediate setting is likely to have similar sample complexity compared with passive learning.

Consider the uniform distribution on the unit $\ell_2$ ball in $\mathbb R^d$ and let $w\in\mathbb R^d$ be an arbitrary (unknown) classifier.
	It is easy to observe that, up to polynomial constants in $d$, for every small $\epsilon>0$ the probability of $x\in\{x: \theta(x,w)\leq\epsilon\}$ is $O(\epsilon)$,
	where $\theta(\cdot,\cdot)$ denotes the angle between two vectors in $\mathbb R^d$.
	This means that, without the ability to skip samples, for a batch of $n$ samples only $O(n\epsilon)$ of them are sufficiently close to the decision boundary $w$
	to offer a good amount of information. On the other hand, active learning allows the algorithm to only collect labels/responses on the $O(n\epsilon)$ samples
	that are sufficiently close to the boundary, thus leading to more efficient usage of information from labeled samples. 	While the intermediate setting can still adaptively change the action levels (corresponding to changing the intercept in a non-homogeneous linear classification model), 
	such ability is unlikely to achieve the ``sample concentration'' effect  because only changing one parameter in a
	multi-variate linear model cannot bring a uniformly sampled data point arbitrarily close to the (unknown) decision boundary. 
\end{remark}

%
%
%

\section{Technical Proofs}
\label{sec:proof}

In this section we state the proofs of the main results in this paper.
There are also some technical lemmas that either easy to prove, or cited/rephrased from existing works,
which will be presented in the appendix.
For simplicity, let $P_U$ be the uniform distribution on $\mathbb B_2(d)=\{x\in\mathbb R^d: \|x\|_2\leq 1\}$ for all proofs in this section.

\subsection{Proof of results in Sec.~\ref{sec:alg-bisection}}\label{sec:proof-bisection}

\subsubsection{Proof of Lemma \ref{lem:beta-star}.}
First note that $\langle w^*,x\rangle-\mu^*\geq b^*$ is equivalent to $\langle w^*,x\rangle-\beta^*\geq 0$, with $\beta^* = \mu^* + b^*$.
Note also that we may assume $\|w^*\|_2=1$ because $|\beta^*|/\|w^*\|_2$ is invariant to $\|w^*\|_2$.
In this proof we shall use the lower and upper bounds of $f_x$ by connecting it with the uniform distribution on $\mathbb B_2(d)$, $P_U$.
Because $P_U$ is isotropic, we may assume without loss of generality that $w^*=(1,0,\cdots,0)$ and $\beta^*\geq 0$.
We will also abbreviate $\eta=\eta_{b^*}$ and $\Delta=\Delta_{b^*}$ since all margins in this proof are with respect to $b^*$.
Then for all $x\in\mathbb B_2(d)$ with $x_1\geq \beta^*$, $\eta(x)\geq 1/2$ and further more
$$
\eta(x) - \frac{1}{2} = \phi(x_1-\beta^*) = \int_0^{x_1-\beta^*}\phi'(u)\ud u \leq C_\xi (x_1-\beta^*). 
$$
Subsequently, by Assumption (A2) and Lemma \ref{lem:marginal-d1}, it holds that 
\begin{align}
\int_{x_1\geq\beta^*}&\left(\eta(x)-\frac{1}{2}\right) \ud P_x(x)
\leq C_x\int_{x_1\geq \beta^*}\left(\eta(x)-\frac{1}{2}\right)\ud P_U(x) \leq C_xC_\xi\int_{x_1\geq\beta^*} (x_1-\beta^*)\ud P_U(x)\nonumber\\
&\leq C_xC_\xi\int_0^1\sqrt{\frac{d+1}{2\pi}}e^{-(d-1)(\beta^*+\gamma)^2/2} \gamma\ud\gamma
\leq C_xC_\xi\sqrt{d}e^{-(d-1)(\beta^*)^2/2}\int_0^{1}\gamma e^{-(d-1)\gamma^2/2}\ud\gamma.
\label{eq:proof-beta-star-1}
\end{align}
With $\gamma\mapsto \gamma/\sqrt{d-1}$, we have
$\int_0^1\gamma e^{-(d-1)\gamma^2/2}\ud \gamma \leq \sqrt{\frac{2\pi}{d-1}}\mathbb E_{z\sim\mathcal N(0,1/(d-1))}[|z|]/2\leq 1/(d-1)$.
Noting that $\sqrt{d}\leq \sqrt{2(d-1)}$ for $d\geq 2$, Eq.~(\ref{eq:proof-beta-star-1}) can then be simplified to
\begin{equation}
\int_{x_1\geq\beta^*}\left(\eta(x)-\frac{1}{2}\right) \ud P_x(x) \leq \frac{\sqrt{2}C_xC_\xi}{\sqrt{d-1}}e^{-(d-1)(\beta^*)^2/2}.
\label{eq:proof-beta-star-2}
\end{equation}

On the other hand, for all $x\in\mathbb B_2(d)$ with $x_1\leq\beta^*$, $\eta(x)\leq 1/2$ and furthermore
$$
 \frac{1}{2}-\eta(x) = -\phi(x_1-\beta^*) = \int_0^{\beta^*-x_1} \phi'(u)\ud u \geq c_\xi(\beta^*-x_1).
$$
Subsequently, by Assumption (A2) and Lemma \ref{lem:marginal-d1}, it holds that 
\begin{align}
\int_{x_1\leq\beta^*}&\left(\frac{1}{2}-\eta(x)\right) \ud P_x(x)\geq c_x\int_{x_1\leq 0}\left(\frac{1}{2}-\eta(x)\right)\ud P_U(x)\geq c_xc_\xi\int_{x_1\leq 0}(\beta^*-x_1)\ud P_U(x)\nonumber\\
&\geq c_xc_\xi\int_{x_1\leq 0}-x_1\ud P_U(x) \geq c_xc_\xi\int_{0}^1 \sqrt{\frac{d+1}{16\pi}}e^{-(d-1)\gamma^2/2}\gamma\ud\gamma
\geq \frac{c_xc_\xi\sqrt{d}}{4\sqrt{\pi}}\int_{1/\sqrt{d-1}}^{\sqrt{2}/\sqrt{d-1}}e^{-(d-1)\gamma^2/2}\gamma\ud\gamma\nonumber\\
&\geq \frac{c_xc_\xi\sqrt{d}}{4\sqrt{\pi}}\times \frac{1}{e\sqrt{d-1}}\times \frac{\sqrt{2}-1}{\sqrt{d-1}}\geq \frac{(\sqrt{2}-1)c_xc_\xi}{4e\sqrt{\pi(d-1)}}.\label{eq:proof-beta-star-3}
\end{align}

Combining Eqs.~(\ref{eq:proof-beta-star-1},\ref{eq:proof-beta-star-3}) we obtain
$$
\frac{1}{2} = \Pr_{x\sim P_x}[y=1|b^*] \leq \frac{1}{2} + \frac{\sqrt{2}C_xC_\xi}{\sqrt{d-1}}e^{-(d-1)(\beta^*)^2/2} - \frac{(\sqrt{2}-1)c_xc_\xi}{4e\sqrt{\pi(d-1)}}.
$$
To satisfy the above inequality, $\beta^*\geq 0$ must satisfy
$$
\beta^* \leq \sqrt{\frac{2\ln(100C_xC_\xi/c_xc_\xi)}{d-1}} = O(1/\sqrt{d}),
$$
which proves Lemma \ref{lem:beta-star}. 

\subsubsection{Proof of Lemma \ref{lem:key-bisection-search}.}

For notational simplicity define $\wp(\hat b) := \Pr_{x\sim P_x}[v(x)\geq \hat b]$.
Clearly, $\wp(b^*)=1/2$ and $\wp(\cdot)$ is a monotonically decreasing function.
By Hoeffding's inequality, at sample $n$ we have $\Pr[\wp(\hat b_3)\in [\underline p_3,\overline p_3]] \geq 1-2e^{-2\hat n\times \ln(8n^2/\delta_s)/(2\hat n)}\geq 
1-\frac{\delta_s}{4n^2}$.
The same inequality holds for $\Pr[\wp(\hat b_4)\in[\underline p_4,\overline p_4]]$ as well.
By the union bound, the probability that $\wp(\hat b_3)\in[\underline p_3,\overline p_3]$ and $\wp(\hat b_4)\in[\underline p_4,\overline p_4]$
throughout the entire Algorithm \ref{alg:bisection-search} is lower bounded by
$$
1-\sum_{n\geq 1}2\times \frac{\delta_s}{4n^2} = 1-\frac{\delta_s}{2}\sum_{n\geq 1}\frac{1}{n^2}\geq 1-\frac{\delta_s}{2}\frac{\pi^2}{6}\geq 1-\delta_s.
$$
This shows that with probability $1-\delta_s$ the $(\hat b_1,\hat b_2)$ pair returned by Algorithm \ref{alg:bisection-search}
satisfies $\hat b_1\leq b^*\leq\hat b_2$ due to the monotonicity of the $\wp(\hat b)$ function.

To analyze the number of queried samples/objects by Algorithm \ref{alg:bisection-search}, we require some additional technical results.
The following lemma connects the deviation $|\wp(\hat b)-1/2|$ with $|\hat b-b^*|$.
\begin{lemma}
Recall the definition that $\wp(\hat b)=\Pr_{x\sim P_x}[w(x)\geq\hat b]$ and $b^*$ such that $\wp(b^*)=1/2$. Then
$0.07c_xc_\xi|\hat b-b^*| \leq |\wp(\hat b)-1/2|\leq C_\xi|\hat b-b^*|$.
\label{lem:beta-deviation}
\end{lemma}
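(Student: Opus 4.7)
The plan is to express $\wp(\hat b) - \wp(b^*)$ through the noise density via the fundamental theorem of calculus. Writing $\wp(\hat b) = \E_x[1 - F_\xi(\hat b - v(x))]$ where $F_\xi$ is the CDF of $\xi$, differentiation in $\hat b$ gives, for $\hat b > b^*$ (without loss of generality),
\[
\wp(b^*) - \wp(\hat b) \;=\; \int_0^{\hat b - b^*} \E_x\bigl[f_\xi(s + b^* - v(x))\bigr]\,\ud s.
\]
Both directions of the claimed inequality will follow from bounding this integrand pointwise in $s \in [0, \hat b - b^*]$.

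For the upper bound, Assumption (A3) gives $f_\xi \leq C_\xi$ pointwise, so the inner expectation is uniformly bounded by $C_\xi$ and $|\wp(\hat b) - \wp(b^*)| \leq C_\xi|\hat b - b^*|$ follows immediately.

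For the lower bound, I restrict the inner expectation to the event $S = \{x : |v(x) - b^*| \leq 1\}$. Under the (harmless) regime $|\hat b - b^*| \leq 1$, for $x \in S$ and every $s \in [0, \hat b - b^*]$ the argument $s + b^* - v(x)$ lies in $[-2, 2]$, so $f_\xi \geq c_\xi$ there by Assumption (A3). Integrating produces $|\wp(\hat b) - \wp(b^*)| \geq c_\xi |\hat b - b^*| \cdot \Pr[x \in S]$, so the remaining task is to show $\Pr[x \in S] = \Pr[|\langle w^*, x\rangle - \beta^*| \leq 1] \geq 0.07\, c_x$, with $\beta^* = \mu^* + b^*$.

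The main obstacle is extracting this universal constant in the slab probability. The plan is to apply Assumption (A2) to pass to the uniform distribution $P_U$ on $\mathbb{B}_2(d)$, invoke Lemma \ref{lem:beta-star} to get $|\beta^*|/\|w^*\|_2 = O(1/\sqrt{d})$ so that for large $d$ the slab is essentially centered at zero in the $w^*$-direction, and then apply the same marginal density estimates used in the proof of Lemma \ref{lem:beta-star} (via Lemma \ref{lem:marginal-d1}) to show that a slab of half-width $1$ captures a dimension-free constant fraction of the mass. The delicate case is $1 < \|w^*\|_2 \leq B$, where one must combine the near-Gaussian concentration of the projected marginal $\langle w^*/\|w^*\|_2, x\rangle$ with the smallness of $\beta^*/\|w^*\|_2$ to keep the lower bound at $\Omega(1)$ independent of $d$; the sub-case $\|w^*\|_2 \leq 1$ is easier because the slab then absorbs essentially the whole ball.
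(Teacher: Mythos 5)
Your proposal is correct and follows essentially the same route as the paper: the upper bound is exactly the $C_\xi$-Lipschitz bound on the noise CDF, and the lower bound comes from the $c_\xi$ density lower bound applied on a slab around the hyperplane $v(x)=b^*$, whose $P_X$-probability is controlled through Assumption (A2) and the uniform-ball marginal estimate (Lemma \ref{lem:marginal-d1}). The only substantive difference is the choice of slab: the paper integrates over a one-sided slab of width $\Theta(1/\sqrt{d-1})$ adjacent to the hyperplane (implicitly normalizing $\|w^*\|_2=1$), so the $0.07$ constant falls out for every $d\ge 2$ without the $\|w^*\|_2$ case split and the ``sufficiently large $d$'' bookkeeping that your unit-width slab requires in the case $1<\|w^*\|_2\le B$.
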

\begin{proof}{Proof of Lemma \ref{lem:beta-deviation}.}
Define $\beta^* = b^*+\mu^*$ and $\hat\beta = \hat b+\mu^*$. Define also $s := \hat\beta - \beta^*$, so that $\hat\beta = \beta^*+ s$.
Recall the definition of margin that $\Delta_{b^*}(x)=v(x)-\beta^*$, and $\Pr[y=1|x,b^*]=\phi(\Delta_{b^*}(x))$.
Under $\hat b$, we have $\Delta_{\hat b}(x)=v(x)-\hat\beta = \Delta_{b^*}(x) - s$ and $\Pr[y=1|x,\hat b]=\phi(\Delta_{b^*}(x) - s)$.
Subsequently, 
\begin{align*}
\big|\wp(\hat b)-\wp(b^*)\big|
&\leq \mathbb E_{x\sim P_x}\left[\big|\phi(\Delta_{b^*}(x)-s)-\phi(\Delta_{b^*}(x))\big|\right]
\leq \sup_{|\gamma|\leq 1}\big|\phi(\gamma-s)-\phi(\gamma)\big|\leq C_\xi |s|.
\end{align*}
This proves the upper bound on $|\wp(\hat b)-1/2|$.

We next consider the lower bound of $|\wp(\hat b)-1/2|$.
Without loss of generality assume $\beta^*\geq 0$, $w^*=(1,0,\cdots,0)$ and $s\geq 0$.
We will lower bound $|\wp(\hat b)-1/2|$ by studying the decrease of $\Pr[y=1|x]$ on the ball segment $\mathbb B_2(d)\cap\{x\in\mathbb R^d: -r\leq x_1\leq 0\}$
with $r=1/\sqrt{2(d-1)}\leq 1/\sqrt{2}$ for $d\geq 2$.
More specifically, 
\begin{align}
\big|\wp(\hat b)&-1/2\big| \geq \int_{r\leq x_1\leq 0}\big[\phi(\beta^*-x_1)-\phi(\beta^*-x_1-s)\big]\ud P_X(x)\nonumber\\
&\geq c_x\int_{r\leq x_1\leq 0}\big[\phi(\beta^*-x_1)-\phi(\beta^*-x_1-s)\big]\ud P_U(x)\label{eq:proof-beta-deviation-1}\\
&\geq c_x\int_0^r \sqrt{\frac{d+1}{16\pi}}e^{-(d-1)\gamma^2/2}\big[\phi(\beta^*+\gamma)-\phi(\beta^*+\gamma-s)\big]\ud\gamma\label{eq:proof-beta-deviation-2}\\
&\geq c_x\int_0^r \sqrt{\frac{d+1}{16\pi}}e^{-(d-1)\gamma^2/2}c_\xi s\ud\gamma \label{eq:proof-beta-deviation-3}\\
&\geq c_xr \times \sqrt{\frac{d+1}{16\pi}}\times e^{-(d-1)r^2/2}\times c_\xi s = \frac{c_xc_\xi s}{4\sqrt{2\pi\sqrt{e}}} \geq 0.07c_x c_\xi s.
\end{align}
Here Eq.~(\ref{eq:proof-beta-deviation-1}) is due to Assumption (A2), Eq.~(\ref{eq:proof-beta-deviation-2}) is due to Lemma \ref{lem:marginal-d1},
and Eq.~(\ref{eq:proof-beta-deviation-3}) is due to Assumption (A3). $\square$
\end{proof}

We are now ready to analyze the number of queried samples in Algorithm \ref{alg:bisection-search}.
Fix an arbitrary pair of $(\hat b_1,\hat b_2)$ at outer iteration $\tau$ such that $\hat b_2-\hat b_1=\varepsilon_\tau = 2(2/3)^\tau B\geq\varepsilon_a$.
Then either $|\hat b_3-b^*|\geq \varepsilon_\tau/6$ or $|\hat b_4-b^*|\geq\varepsilon_\tau/6$.
Let $\hat n_\tau$ be the final count when outer iteration $\tau$ ends.
The condition $0.5\in[\underline p_3,\overline p_3]\wedge 0.5\in[\underline p_4,\overline p_4]$ in the inside while loop will be violated if
$\sqrt{\ln(8n_\tau^2/\delta_s)/2\hat n_\tau} < 0.07c_xc_\xi\varepsilon_\tau/6$, which translates to
$$
\hat n_\tau \leq 1 + \frac{7500\ln(8n_\tau/\delta_s)}{c_x^2c_\xi^2\varepsilon_\tau^2},
$$ \
where $n_\tau=\sum_{\tau'\leq\tau}\hat n_{\tau'}$. Let $\tau_0$ be the largest integer such that $\varepsilon_{\tau_0}\geq\varepsilon_a$.
Then the total number of queried samples is upper bounded by
$$
2\sum_{\tau\leq\tau_0}\hat n_\tau = O\left(\sum_{\tau\leq\tau_0}\frac{1}{\varepsilon_\tau^2}\log\left(\frac{1}{\delta_s\varepsilon_\tau}\right)\right)
= O(\varepsilon_{\tau_0}^{-2}\log(1/(\delta_s\varepsilon_{\tau_0}))) = O(\varepsilon_a^{-2}\log(1/(\delta_s\varepsilon_a))).
$$

\subsection{Proof of results in Sec.~\ref{sec:margin-based}}\label{sec:proof-margin-based}

The objective of this section is to prove the key Lemma \ref{lem:margin-based}.
Throughout this proof we assume that $d$ is sufficiently large and $\varepsilon_a>0$ is sufficiently small.
We also define $\theta(w,w')$ as the smallest angle between $w,w'\in\mathbb R^d$.

Recall the definition that $v^*_b(\cdot) = \langle\cdot,w^*\rangle -\beta$ with $\beta=b+\mu^*$ is the non-homogeneous linear classifier
with the smallest classification error.
For presentation simplicity, we shall normalize $v^*_b(\cdot)$ (since only the \emph{signs} of $v^*_b(\cdot)$ matter in a binary classification problem)
as $v^*_b(\cdot)=\langle\cdot,\tilde w^*\rangle - \tilde\beta$ where $\tilde w^*=w^*/\|w^*\|_2$ and $\tilde\beta=\beta/\|w^*\|_2$.
Our first technical lemma shows that if another classifier $\hat v(\cdot)=\langle\cdot,\hat w\rangle - \hat\beta$ has small excess error,
then the angle between $\hat w$ and $\tilde w^*$ must be small.
\begin{lemma}
Let $\hat v(\cdot)=\langle\cdot,\hat w\rangle-\hat\beta$, $\|\hat w\|_2=1$ be a learnt classifier
such that $\Delta\err(\hat v)=\err(\hat v)-\err(v^*_b) \leq \epsilon$. Then for sufficiently small $\epsilon$, it holds that
$\tan\theta(\hat w,\tilde w^*)\leq 23e^{(d-2)\beta_0^2/2}\sqrt{\epsilon} = O(\sqrt{\epsilon})$.
\label{lem:error-to-angle}
\end{lemma}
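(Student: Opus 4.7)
My plan is to adapt the margin-based active learning analysis of \cite{balcan2007margin} to the non-homogeneous setting with small intercept $\beta_0 = O(1/\sqrt{d})$. The first step is to rewrite the excess error. Since $v_b^*(\cdot)=\langle\cdot,\tilde w^*\rangle-\tilde\beta$ is the Bayes classifier, one has
\[
\Delta\err_b(\hat v) \;=\; \int_{D} |2\eta(x)-1|\,\ud P_X(x),
\]
where $D = \{x\in\mathbb B_2(d): \sgn(\hat v(x))\neq\sgn(v_b^*(x))\}$ and $\eta(x)=\Pr[u(x)\geq b\mid x]$. Assumption (A3) implies that on the region where $|v(x)-b|\leq 2$ we have $|\eta(x)-1/2|\geq c_\xi\|w^*\|\,|v_b^*(x)|$, while Assumption (A2) lower bounds the $P_X$-integral by $c_x$ times the corresponding integral against the uniform distribution $P_U$ on $\mathbb B_2(d)$. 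Thus it suffices to lower bound $\int_{D\cap R}|v_b^*(x)|\,\ud P_U(x)$ on a sub-region $R$ on which the noise condition is active.

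Next I would fix coordinates: rotate so $\tilde w^* = e_1$ and $\hat w = \cos\theta\cdot e_1+\sin\theta\cdot e_2$, where $\theta = \theta(\hat w,\tilde w^*)$. Then $v_b^*(x) = x_1-\tilde\beta$ and $\hat v(x) = \cos\theta\cdot x_1+\sin\theta\cdot x_2-\hat\beta$; the two hyperplanes intersect in a $(d-2)$-dimensional affine subspace, and $D$ consists of two wedges of opening angle $\theta$ about this subspace. I would restrict attention to one wedge and to the slab $\tilde\beta+s_0\leq x_1\leq \tilde\beta+2s_0$ for a small constant $s_0>0$, on which $|v_b^*(x)|\geq s_0$ and $|v(x)-b|\leq 2$ automatically. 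For each fixed $x_1$ in this range, the disagreement condition is equivalent to $x_2$ lying in an interval of length proportional to $s_0\sin\theta$ (after a comparable-scale shift in $x_2$ induced by $\hat\beta-\tilde\beta\cos\theta$). Applying Lemma~\ref{lem:marginal-d1} to the marginal density of $x_1$, which at $x_1\approx\tilde\beta$ is of order $\sqrt{d}\cdot e^{-(d-1)\tilde\beta^2/2}\geq\sqrt{d}\cdot e^{-(d-1)\beta_0^2/2}$, and then integrating out $(x_3,\dots,x_d)$ against their joint marginal, one obtains a lower bound of the form
\[
\int_{D\cap R}|v_b^*(x)|\,\ud P_U(x) \;\geq\; C\cdot \sin\theta\cdot e^{-(d-2)\beta_0^2/2},
\]
where the exponent $(d-2)$ (instead of $(d-1)$) arises because the two coordinates $(x_1,x_2)$ are consumed by the wedge itself, while only $d-2$ "orthogonal" dimensions contribute their marginal evaluated near the offset $\beta_0$. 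Combining with the noise lower bound gives $\Delta\err_b(\hat v) \gtrsim c_xc_\xi\|w^*\|\sin^2\theta\cdot e^{-(d-2)\beta_0^2/2}$, and solving for $\tan\theta\approx\sin\theta$ in the small-$\theta$ regime yields the advertised bound.

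The hardest part, I expect, is the careful bookkeeping of which factors of the uniform marginal are evaluated at what offsets, to extract precisely the factor $e^{-(d-2)\beta_0^2/2}$; sloppy accounting easily yields $e^{-(d-1)\beta_0^2/2}$ or $e^{-d\beta_0^2/2}$, neither of which matches the claim. A secondary obstacle is handling the additive offset $\hat\beta-\tilde\beta\cos\theta$ that shifts one wedge relative to the other. If this offset is comparable to or larger than $s_0\sin\theta$ the direct lower bound above can degenerate, so one must separately argue that the "translation-only" component of the disagreement region already forces $|\hat\beta-\tilde\beta|$ to be $O(\sqrt{\epsilon})$ (using a one-dimensional sharpness argument on the marginal in the $\tilde w^*$ direction). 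Once both cases are merged, $\tan\theta\leq 23 e^{(d-2)\beta_0^2/2}\sqrt{\epsilon}$ follows, with the constant $23$ emerging from the explicit numerical values of the marginal density bounds and the noise floor.
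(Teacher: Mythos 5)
Your high-level reduction (writing the excess error as $\int_D|2\eta(x)-1|\,\ud P_X(x)$ over the disagreement set $D$, passing to the uniform measure via (A2), and using the noise floor $c_\xi$ via (A3)) matches the paper, but the geometric core of your argument fails. With $\tilde w^*=e_1$ and $\hat w=\cos\theta\, e_1+\sin\theta\, e_2$, the set $D$ is a pair of wedges of opening angle $\theta$ whose apex sits at $(x_1,x_2)=\bigl(\tilde\beta,(\hat\beta-\tilde\beta\cos\theta)/\sin\theta\bigr)$; inside the unit ball its extent in the $x_1$ (margin) direction is only $O(\sin\theta+|\hat\beta-\tilde\beta|)$. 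Hence your slab $\tilde\beta+s_0\leq x_1\leq\tilde\beta+2s_0$ with a \emph{constant} $s_0$ does not meet $D$ at all once $\epsilon$ (hence $\theta$) is small: at fixed $x_1=\tilde\beta+t$ the disagreement condition reads $x_2<(\hat\beta-\tilde\beta\cos\theta)/\sin\theta-t/\tan\theta$, and for constant $t=s_0$ and small $\theta$ this threshold drops below $-1$, so the claimed ``interval of length $\propto s_0\sin\theta$'' is empty. Your intermediate bound $\int_{D\cap R}|v_b^*|\,\ud P_U\gtrsim\sin\theta$ also cannot be correct in principle: combined with the noise bound it would give $\Delta\err_b(\hat v)\gtrsim\theta$, i.e.\ $\theta=O(\epsilon)$, whereas the true scaling (and your own final display) is $\Delta\err_b(\hat v)\asymp\theta^2$ — inside the wedge the margin $|x_1-\tilde\beta|$ is itself only of order $\rho\tan\theta$, and it is precisely this extra factor of $\theta$ that produces the square. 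Your write-up is thus internally inconsistent between the linear-in-$\theta$ intermediate bound and the quadratic final bound.

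The paper's proof handles both difficulties you flag more directly. For the intercept offset it observes that moving $\hat\beta$ enlarges one wedge while shrinking the other, so the smaller wedge — whose apex is within $O(\beta_0)=O(1/\sqrt d)$ of the center, hence containing a sector of radius at least $1/2$ inside the ball — is always contained in $D$; no separate argument that $|\hat\beta-\tilde\beta|=O(\sqrt\epsilon)$ is needed, and indeed that fact is Lemma \ref{lem:error-to-intercept}, which the paper proves \emph{using} the present lemma, so your proposed fallback risks circularity. It then restricts to a rectangle $\Omega$ inside the wedge of width $r/2$ with $r=1/(2\sqrt d)$ along the hyperplane and height $h=(r/2)\tan\theta$ across it, and integrates the margin (which ranges over $[0,h]$, not over a constant) against the joint two-dimensional marginal of $(x_1,x_2)$ from Lemma \ref{lem:marginal-d2}; that is where the factor $e^{-(d-2)\beta_0^2}$ and the constant $23$ actually come from, not from the one-dimensional marginal of $x_1$ (Lemma \ref{lem:marginal-d1}, which carries exponent $d-1$) plus the remaining coordinates as you sketch. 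To repair your proof you would have to replace the constant-margin slab by a region whose margin scales with $\tan\theta$, at which point you essentially recover the paper's argument.
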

\begin{proof}{Proof of Lemma \ref{lem:error-to-angle}.}
Abbreviate $\theta = \theta(\hat w,\tilde w^*)$. Without loss of generality, assume $w^*=(1,0,\cdots,0)$, $\hat w=(1-\cos\theta,\sin\theta,0,\cdots,0)$ and $\tilde\beta\geq 0$.
For sufficiently small $\epsilon$, we have $\tan\theta\leq 1$, and {the disagreement region between $\hat v$ and $v_b^*$ is depicted in blue in the left panel of Figure \ref{fig:illustration-error-to-angle}}.
Note also that, as one adjusts the intercept $\hat\beta$ in $\hat v$, one disagreement region will enlarge and the other one will shrink.
{As a result, the minimal disagreement region is depicted in yellow in the middle panel of Figure \ref{fig:illustration-error-to-angle},}
with the radius $\rho$ to be at least $1/2$ for sufficiently large $d$ since $|\beta|\leq\beta_0=O(1/\sqrt{d})$.
To further simplify, we take only the upper triangle of the disagreement region with $r=1/2\sqrt{d}\leq \rho$ and study the rectangular region {designated as $\Omega$ in the right panel of Figure \ref{fig:illustration-error-to-angle}}, whose size is $\frac{r}{2}\times h$ where $h=\frac{r}{2}\tan\theta$.

\begin{figure}[t]
\centering
\includegraphics[width=0.32\textwidth]{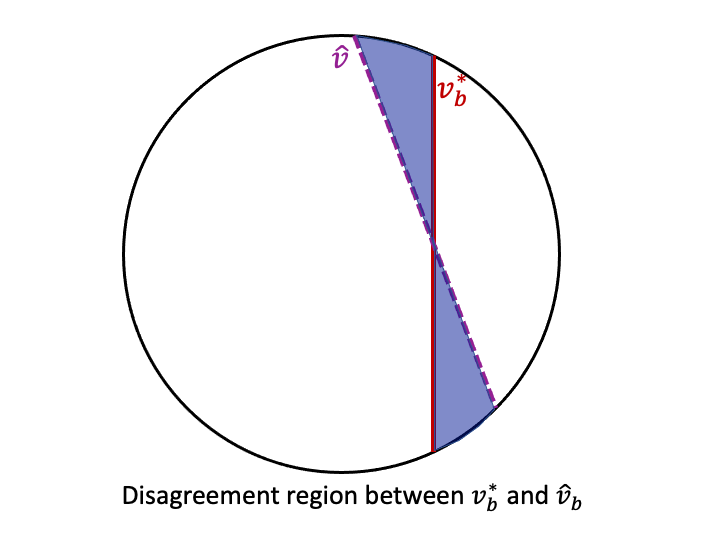}
\includegraphics[width=0.32\textwidth]{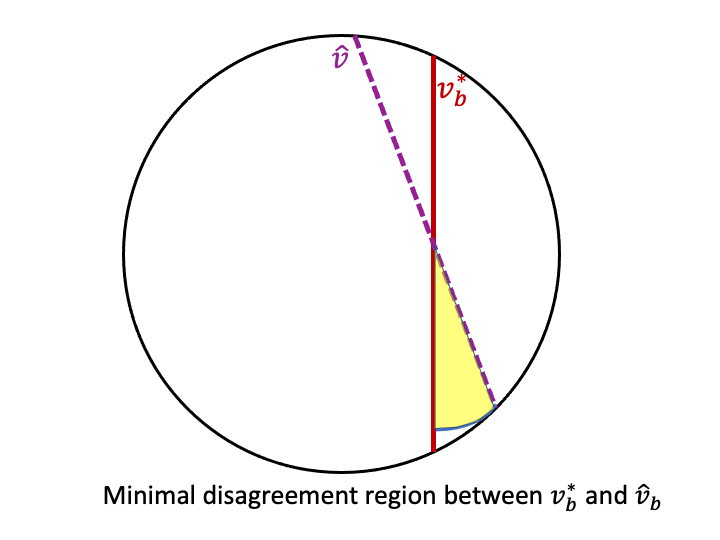}
\includegraphics[width=0.32\textwidth]{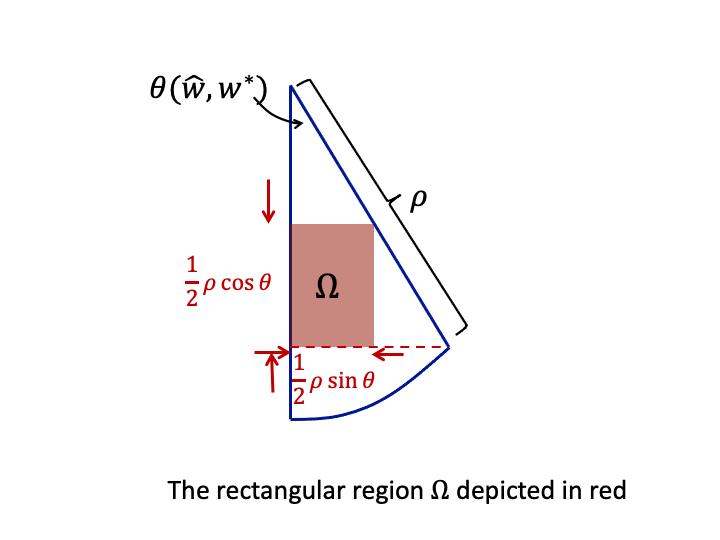}
\caption{Graphical illustration of the proof of Lemma \ref{lem:error-to-angle}.}
\label{fig:illustration-error-to-angle}
\end{figure}

The excess error of $\hat v$ can be lower bounded by the deviation of $\eta(x)=\Pr[y=1|x,b]$ from $1/2$ on $\Omega$. 
More specifically, 
\begin{align}
\Delta\err(\hat v)& \geq \int_{\Omega}\left(\eta(x)-\frac{1}{2}\right)\ud P_X(x)
\geq c_x\int_{\Omega}\left(\eta(x)-\frac{1}{2}\right)\ud P_U(x) = c_x\int_{\Omega}\phi(x_1-\beta)\ud P_U(x)\label{eq:proof-error-to-angle-1}\\
&\geq c_xc_\xi\int_\Omega (x_1-\beta)\ud P_U(x)\geq c_xc_\xi\int_\Omega \frac{d}{4\pi}e^{-(d-2)(x_1^2+x_2^2)/2}(x_1-\beta)\ud x_1\ud x_2\label{eq:proof-error-to-angle-2}\\
&\geq \frac{c_xc_\xi d}{4\pi}\int_0^h \gamma e^{-(d-2)((\beta+h)^2+r^2)/2}\ud\gamma\geq \frac{c_xc_\xi d}{4\pi}e^{-(d-2)(\beta^2+r^2)} \frac{1}{2}h^2
\geq \frac{e^{-(d-2)\beta^2}}{128\pi\sqrt[4]{e}}\tan^2\theta.\label{eq:proof-error-to-angle-3}
\end{align}
Here, Eq.~(\ref{eq:proof-error-to-angle-1}) is due to Assumption (A2) and the definition of $\phi$;
Eq.~(\ref{eq:proof-error-to-angle-2}) is due to Assumption (A3) and Lemma \ref{lem:marginal-d2}.
Taking the square root on both sides of Eq.~(\ref{eq:proof-error-to-angle-3}) and noting that $\Delta\err(\hat v)=\epsilon$, we complete
the proof of Lemma \ref{lem:error-to-angle}. $\square$
\end{proof}

The next lemma shows that if $\Delta\err(\hat v)$ is small, then the intercept $\hat\beta$ cannot be too far away from $\beta$ either.
\begin{lemma}
Let $\hat v(\cdot)=\langle\cdot,\hat w\rangle -\hat\beta$, $\|\hat w\|_2=1$, $|\hat\beta|\leq\beta_0$ be a learnt classifier such that
$\Delta\err(\hat v)=\err(\hat v)-\err(v_b^*)=\epsilon$. 
Then for sufficiently small $\epsilon$, $|\hat\beta-\tilde\beta|\leq 3601\sqrt{\pi}C_xC_\xi c_x^{-1}c_\xi^{-1}\max\{e^{(d-1)\beta_0^2}, 1\} \sqrt{\epsilon}$
$= O(\sqrt{\epsilon})$.
\label{lem:error-to-intercept}
\end{lemma}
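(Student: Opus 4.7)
The plan is to lower bound $\Delta\err(\hat v)$ by $s^2$ (up to constants depending on $d$, $\beta_0$), where $s := |\hat\beta - \tilde\beta|$, and then invert the inequality. The approach parallels the geometry used in Lemma \ref{lem:error-to-angle}: pick a slab-shaped subregion of $\mathbb B_2(d)$ on which the signs of $\hat v$ and $v_b^*$ are guaranteed to disagree, and bound the excess-error integral from below on that slab.

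First I would apply Lemma \ref{lem:error-to-angle} to obtain $\tan\theta(\hat w,\tilde w^*) \leq 23 e^{(d-2)\beta_0^2/2}\sqrt{\epsilon}=O(\sqrt\epsilon)$. Using the rotational invariance present in Assumptions (A2)--(A3), I would WLOG choose coordinates so that $\tilde w^* = e_1$, $\tilde\beta \geq 0$, and $\hat w = (\cos\theta,\sin\theta,0,\ldots,0)$ lies in the $(x_1,x_2)$-plane, and take $\hat\beta \leq \tilde\beta$ (the other sign case is symmetric). A case split on $s$ vs.\ $\tan\theta$ is natural: if $s \leq C\tan\theta$ for some absolute constant $C$, then $s = O(\sqrt\epsilon)$ immediately follows from the angle bound, and I am done.

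In the remaining case $s > C\tan\theta$, I would set $r := 1/(2\sqrt d)$ and define the planar rectangle
\[
R := \{(x_1,x_2)\in\mathbb R^2 : \hat\beta + 2r\tan\theta \leq x_1 \leq \tilde\beta,\ |x_2|\leq r\},
\]
and consider the full-dimensional slab $R \times \{x_{3:d} : \|x_{3:d}\|_2^2 \leq 1 - x_1^2 - x_2^2\} \subseteq \mathbb B_2(d)$. A short algebraic check (in the spirit of the calculation inside Lemma \ref{lem:error-to-angle}) shows that on this slab $v_b^*(x) = x_1 - \tilde\beta \leq 0$ while
\[
\hat v(x) = \cos\theta\cdot x_1 + \sin\theta\cdot x_2 - \hat\beta \geq \cos\theta(\hat\beta + 2r\tan\theta) - r\sin\theta - \hat\beta \;\gtrsim\; r\sin\theta > 0
\]
for $\theta$ small and $|\hat\beta|\leq\beta_0 = O(1/\sqrt d)$. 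Hence the entire slab lies inside the disagreement region of $\hat v$ and $v_b^*$. Applying Assumption (A2) to pass from $f_x$ to $c_x f_u$, Assumption (A3) to obtain $|\eta(x)-1/2| \geq c_\xi(\tilde\beta - x_1)$ on the slab, and the 2D marginal bound of Lemma \ref{lem:marginal-d2} already used in Lemma \ref{lem:error-to-angle}, I would obtain
\[
\Delta\err(\hat v) \;\geq\; c_xc_\xi\cdot\frac{d}{4\pi}\int_R (\tilde\beta - x_1)\, e^{-(d-2)(x_1^2+x_2^2)/2}\,dx_1\,dx_2 \;\gtrsim\; \frac{c_xc_\xi\sqrt d}{\const}\, e^{-(d-1)\beta_0^2/2}\, s^2.
\]
Combining with $\Delta\err(\hat v) = \epsilon$ and solving for $s$ produces the claimed inequality $|\hat\beta-\tilde\beta|\leq \const\cdot C_xC_\xi c_x^{-1}c_\xi^{-1}\max\{e^{(d-1)\beta_0^2},1\}\sqrt\epsilon$, after absorbing constants into the stated numerical factor.

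The main obstacle is geometric bookkeeping: the rectangle $R$ must be shrunk inward by $2r\tan\theta$ on the $x_1$-axis so that the slight tilt of $\hat w$ away from $\tilde w^*$ cannot flip the sign of $\hat v$ there, and this is precisely why the initial application of Lemma \ref{lem:error-to-angle} and the dichotomy on $s$ vs.\ $\tan\theta$ are needed up front. Propagating the exponential factor $e^{-(d-2)(\beta_0^2+r^2)/2}$ through the integral to match the $e^{(d-1)\beta_0^2}$ in the stated bound is routine but requires care with the rough upper bound $x_1^2\leq\beta_0^2$ on $R$.
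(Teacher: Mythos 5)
Your proposal is correct in substance but takes a genuinely different route from the paper. The paper introduces the intermediate classifier $\hat v^{\,*}(\cdot)=\langle\cdot,\tilde w^*\rangle-\hat\beta$ (true direction, learnt intercept) and argues by a triangle-type decomposition: the weighted disagreement between $\hat v^{\,*}$ and $v^*$ is lower bounded by $\Omega\bigl(\sqrt{d}\,e^{-(d-1)\beta_0^2/2}\Delta_\beta^2\bigr)$, with $\Delta_\beta=\hat\beta-\tilde\beta$, using the one-dimensional strip $x_1\in[\tilde\beta,\hat\beta]$ and Lemma \ref{lem:marginal-d1}, while the weighted disagreement between $\hat v$ and $\hat v^{\,*}$ is upper bounded by $O(\sqrt{d}\,\epsilon+\sqrt{d\epsilon}\,\Delta_\beta)$ via the angle bound of Lemma \ref{lem:error-to-angle}; combining the two gives the quadratic inequality in $\Delta_\beta$ of Eq.~(\ref{eq:proof-error-to-intercept-4}), which is then solved. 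You instead split into the cases $s\leq C\tan\theta$ (where Lemma \ref{lem:error-to-angle} finishes immediately) and $s>C\tan\theta$, and in the latter exhibit an explicit slab lying entirely inside the disagreement region of $\hat v$ and $v_b^*$ itself, shrunk by $2r\tan\theta$ in the $x_1$ direction to absorb the tilt of $\hat w$, and lower bound the excess error directly on it via Assumptions (A2)--(A3) and Lemma \ref{lem:marginal-d2}. This avoids the intermediate classifier, the upper-bound comparison (so $C_x,C_\xi$ never enter), and the quadratic inequality; the sign check $\hat v(x)\geq\hat\beta(\cos\theta-1)+r\sin\theta>0$ on the slab is the one new ingredient and it is sound for small $\theta$ since $\beta_0=O(1/\sqrt d)$ and $r=1/(2\sqrt d)$. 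Both arguments use Lemma \ref{lem:error-to-angle} as input and both implicitly use $|\tilde\beta|\lesssim\beta_0$ (you through $x_1^2\leq\beta_0^2$ on $R$, the paper through $|\Delta_\beta|\leq 2\beta_0$ and $\tilde\beta+\gamma\leq\hat\beta\leq\beta_0$), so you are no less rigorous than the paper on that point. The only mismatch is cosmetic: your route produces a constant of the form $c\,(c_xc_\xi)^{-1/2}d^{-1/4}e^{(d-2)\beta_0^2/4}$ for a numerical $c$, typically sharper than but not literally equal to the stated factor $3601\sqrt{\pi}\,C_xC_\xi c_x^{-1}c_\xi^{-1}\max\{e^{(d-1)\beta_0^2},1\}$; since the lemma is only used downstream through the $O(\sqrt{\epsilon})$ conclusion, this is immaterial.
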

\begin{proof}{Proof of Lemma \ref{lem:error-to-intercept}.}
Let $\theta=\theta(\hat w,\tilde w^*)$, and assume without loss of generality that $\tilde w^*=(1,0,\cdots,0)$ and $\tilde\beta\geq 0$.
Let also $\Delta_\beta = \hat\beta-\tilde\beta$.

\begin{figure}[t]
\centering
\includegraphics[width=0.32\textwidth]{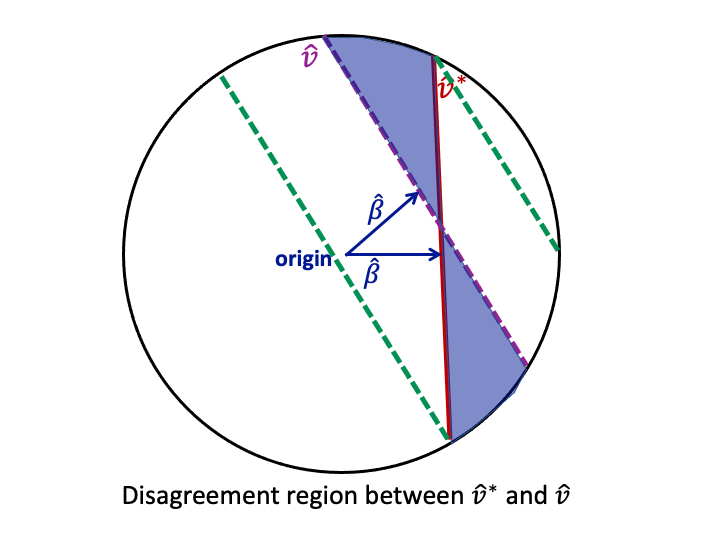}
\includegraphics[width=0.32\textwidth]{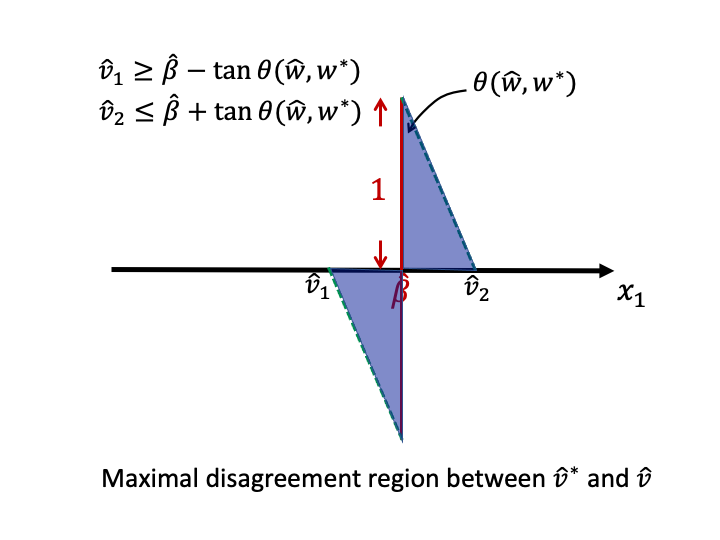}
\includegraphics[width=0.32\textwidth]{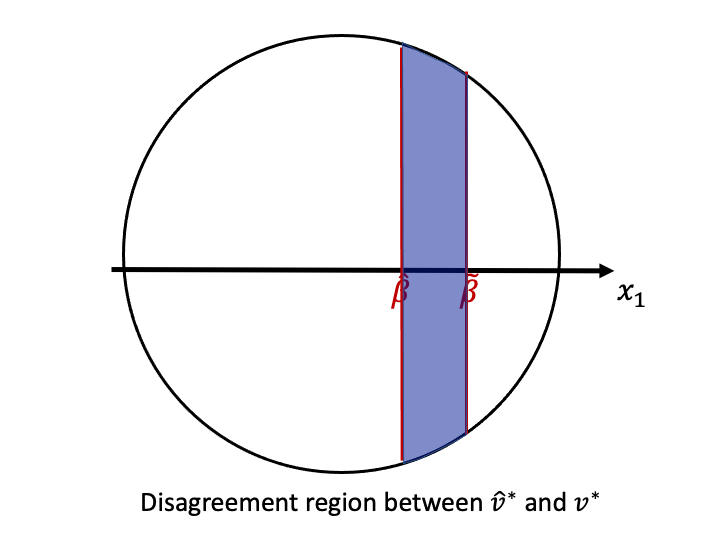}
\caption{Graphical illustration of proof of Lemma \ref{lem:error-to-intercept}.}
\label{fig:illustration-error-to-intercept}
\end{figure}

First we compare the two models of $\hat v(\cdot)=\langle\cdot,\hat w\rangle-\hat\beta$ and $\hat v^*(\cdot)=\langle\cdot,\tilde w^*\rangle - \hat\beta$.
When $d$ is sufficiently large and $\epsilon$ is sufficiently small, {the disagreement region between $\hat v$ and $\hat v^*$
is depicted in the left panel of Figure \ref{fig:illustration-error-to-intercept}.}
Since the two line segments intersect when $|\hat\beta|\leq\beta_0\to 0$ as $d\to\infty$ and $\theta(\hat w,\tilde w^*)\to 0$ as $\epsilon\to 0$,
{the maximal disagreement region between $\hat v$ and $\hat v^*$ is reached by the two green dashed lines in the left panel of 
Figure \ref{fig:illustration-error-to-intercept}, an upper bound of which is depicted in the middle panel of Figure \ref{fig:illustration-error-to-intercept}
by projecting onto the 1-dimensional space along the direction of $\tilde w^*$.}
Subsequently, the disagreement between $\hat v$ and $\hat v^*$ can be upper bounded by
\begin{align}
&\int_{\sgn(\hat v(x))\neq \sgn(\hat v^*(x))}\left|\eta(x)-\frac{1}{2}\right|\ud P_X(x) \leq \int_{x_1\in[\hat\beta\pm \tan\theta]}\left|\eta(x)-\frac{1}{2}\right|\ud P_X(x)
\leq C_xC_\xi\int_{x_1\in[\hat\beta\pm\tan\theta]}\big|x_1-\tilde\beta\big|\ud P_U(x)\nonumber\\
&\leq C_xC_\xi\int_{-\tan\theta}^{\tan\theta}\sqrt{\frac{d+1}{4\pi}}e^{-(d-1)(\hat\beta+\gamma)^2/2}|\gamma-\Delta_\beta|\ud\gamma\label{eq:proof-error-to-intercept-1}\\
&\leq C_xC_\xi\sqrt{\frac{d+1}{4\pi}}\int_{-\tan\theta}^{\tan\theta}|\gamma-\Delta_\beta|\ud\gamma= C_xC_\xi\sqrt{\frac{d+1}{4\pi}}
\left(\tan^2\theta+2\Delta_\beta\tan\theta\right) \nonumber\\
&\leq 150C_xC_\xi\sqrt{d+1}\left(e^{(d-2)\beta_0^2}\epsilon + 2e^{(d-2)\beta_0^2/2}\sqrt{\epsilon}\Delta_\beta\right)\label{eq:proof-error-to-intercept-2}\\
&= O(\sqrt{d}\epsilon+\sqrt{d\epsilon}\Delta_\beta).\nonumber
\end{align}
Here, Eq.~(\ref{eq:proof-error-to-intercept-1}) is due to Lemma \ref{lem:marginal-d1},  
and the last inequality of Eq.~(\ref{eq:proof-error-to-intercept-2}) holds by Lemma \ref{lem:error-to-angle}.

Next, consider the disagreement between the two models of $\hat v^*(\cdot)=\langle\cdot,\tilde w^*\rangle-\hat\beta$
and $v^*(\cdot)=\langle\cdot,\tilde w^*\rangle - \tilde\beta$.
First consider the case of $\hat\beta\geq \tilde\beta$, and let $\Delta_\beta = \hat\beta-\tilde\beta$.
{The disagreement region between $\hat v^*$ and $v^*$ in this case is depicted in the right panel of Figure \ref{fig:illustration-error-to-intercept}.}
The disagreement between $\hat v^*$ and $v^*$ can then be lower bounded by 
\begin{align}
&\int_{\sgn(\hat v^*(x))\neq\sgn(v^*(x))}\left(\eta(x)-\frac{1}{2}\right)\ud P_X(x)\geq \int_{x_1\in[\tilde\beta,\hat\beta]}\left(\eta(x)-\frac{1}{2}\right)\ud P_X(x)
\geq c_xc_\xi\int_{x_1\in[\tilde\beta,\hat\beta]}(x_1-\tilde\beta)\ud P_U(x)\nonumber\\
&\geq c_xc_\xi\int_{0}^{\Delta_\beta}\sqrt{\frac{d+1}{16\pi}}e^{-(d-1)(\tilde\beta+\gamma)^2/2}\gamma\ud\gamma
\geq c_xc_\xi\sqrt{\frac{d+1}{16\pi}}e^{-(d-1)\beta_0^2/2}\int_0^{\Delta_\beta}\gamma\ud\gamma\label{eq:proof-error-to-intercept-2}\\
&\geq \frac{c_xc_\xi\sqrt{d+1}}{8\sqrt{\pi}}e^{-(d-1)\beta_0^2/2}\Delta_\beta^2.\label{eq:proof-error-to-intercept-3}
\end{align}
Here the second inequality in Eq.~(\ref{eq:proof-error-to-intercept-2}) holds because $\hat\beta=\beta+\Delta_\beta\leq\beta_0$ by optimization
constraint.
If $\hat\beta<\tilde\beta$, the disagreement region has more density because the region (the $[\hat\beta,\tilde\beta]$ strip) is closer to the origin
than the perimeter of the $\mathbb B_2(d)$ ball.

Combining Eqs.~(\ref{eq:proof-error-to-intercept-2},\ref{eq:proof-error-to-intercept-3}) and noting that $v^*$ is the Bayes classifier (i.e., the classifier
that minimizes classification error), we have that
\begin{align}
\Delta\err(\hat v) &\geq \frac{c_xc_\xi\sqrt{d+1}}{8\sqrt{\pi}}e^{-(d-1)\beta_0^2/2}\Delta_\beta^2 - 150C_xC_\xi\sqrt{d+1}\left(e^{(d-2)\beta_0^2}\epsilon + 2e^{(d-2)\beta_0^2/2}\sqrt{\epsilon}\Delta_\beta\right).\label{eq:proof-error-to-intercept-4}
\end{align}
Since $\Delta\err(\hat v)=\epsilon$ and $|\Delta_\beta|\leq 2\beta_0=O(1/\sqrt{d})$, for sufficiently large $d$ the above inequality solves to
$$
|\Delta_\beta|\leq 3601\sqrt{\pi}\frac{C_xC_\xi}{c_xc_\xi}\max\big\{e^{(d-1)\beta_0^2}, 1\big\}\times \sqrt{\epsilon} = O(\sqrt{\epsilon}),
$$
which is to be proved. $\square$
\end{proof}

We are now ready to prove the key Lemma \ref{lem:margin-based} in Sec.~\ref{sec:margin-based}.
\begin{proof}{Proof of Lemma \ref{lem:margin-based}.}
Recall the definition that $\epsilon_k=2^{-k}\epsilon_0$.
We use mathematical induction to prove that, at the end of each outer iteration $k\in\{0,1,2,\cdots,k_0\}$, with probability $1-\delta_a/(k_0+1)$
it holds that $\Delta\err(\hat w_k,\hat\beta_k)\leq\epsilon_k$.

\paragraph{Base of induction.} For $k=0$, invoke Lemma \ref{lem:VC-erm} with $n=n_0$ and $\delta=\delta_a/(k_0+1)$, we have with probability $1-\delta$
that $\Delta\err(\hat w_0,\hat\mu_0)\leq O(\sqrt{\frac{d+\ln(k_0/\delta_a)}{n_0}})$.
Note also that $k_0\asymp \log(1/\varepsilon_a)$.
Hence, with $n_0=\Omega(\varepsilon_a^{-1}(d+\log\log(1/\varepsilon_a)+\log(1/\delta_a))$, we have with probability $1-\delta_a/(k_0+1)$ that
$\Delta\err(\hat w_0,\hat\mu_0)\leq\epsilon_0$.

\paragraph{Inductive steps.} We assume the inductive hypothesis is true for $k-1$, i.e., $\Delta\err(\hat w_{k-1},\hat\mu_{k-1})\leq \epsilon_{k-1}=2^{-(k-1)}\epsilon_0$. We will prove in this step that $\Delta\err(\hat w_k,\hat\mu_k)\leq\epsilon_k=2^{-k}\epsilon_0$ with probability $1-\delta_a/(k_0+1)$.

Denote $S_1=\{x\in\mathbb B_2(d): |\langle x,\hat w_{k-1}\rangle - \hat\beta_{k-1}|\leq m_k\}$ and $S_2 = \mathbb B_2(d)\backslash S_1$.
Because $\Delta\err(\hat w_{k-1},\hat\beta_{k-1})\leq\epsilon_{k-1}$, by Lemmas \ref{lem:error-to-angle} and \ref{lem:error-to-intercept}
we have that $\tan\theta(\hat w_{k-1},w^*)\leq C\sqrt{\varepsilon_{k-1}}$ and $|\hat\beta_{k-1}-\beta|\leq C\sqrt{\varepsilon_{k-1}}$ for some constant $C$
depending only on $C_x,c_x,C_\xi,c_\xi$.
Hence, with $m_k$ selected as $m_k=\Omega(\sqrt{\epsilon_{k-1}})$, 
we have that $\sgn(x^\top\hat w_{k-1}-\hat\beta_{k-1})=\sgn(x^\top w^*-\beta)$ for all $x\in S_2$.
Subsequently,
\begin{equation}
\Delta\err(\hat w_k,\hat\beta_k) = [\err(\hat w_k,\hat\beta_k|S_1) - \err(w^*,\beta|S_1)]\Pr[x\in S_1] =: \Delta\err(\hat w_k,\hat\beta_k|S_1)\Pr[x\in S_1], 
\label{eq:proof-margin-based-1}
\end{equation}
where $\err(w,\beta|S_1) = \Pr_{(x,y)}[y\neq \sgn(w^\top x-\beta)|x\in S_1]$.

Invoking Lemma \ref{lem:VC-erm}, if $n_k\geq \Omega(C_x^2m_k^2d^2\epsilon_k^{-2}\ln(k_0/\delta_a))=\Omega(d^2\epsilon_k^{-1}\ln(k_0/\delta_a))$ then it holds with probability $1-\delta_a/(k_0+1)$ that 
\begin{equation}
\Delta\err(\hat w_k,\hat\beta_k|S_1)\leq \frac{\epsilon_k}{1.3C_xm_k\sqrt{d}}.
\label{eq:proof-margin-based-2}
\end{equation}
On the other hand, we have that
\begin{align}
\Pr_{x\sim P_X}[x\in S_1] &\leq C_x\Pr_{x\sim P_U}[x\in S_1]\leq C_x \Pr_{x\sim P_U}[|x_1|\leq m_k]\leq C_x\sqrt{\frac{2(d+1)}{\pi}}\int_0^{m_k}e^{-(d-1)u^2/2}\ud u\label{eq:proof-margin-based-3}\\
&\leq 1.3C_xm_k\sqrt{d}.\label{eq:proof-margin-based-4}
\end{align}
Here, the last inequality in Eq.~(\ref{eq:proof-margin-based-3}) holds by invoking Lemma \ref{lem:marginal-d1}.
Plug Eqs.~(\ref{eq:proof-margin-based-2},\ref{eq:proof-margin-based-4}) into Eq.~(\ref{eq:proof-margin-based-1}).
We proved that $\Delta\err(\hat w_k,\hat\beta_k)\leq\epsilon_k$, which completes the induction step. 

In the final part of the proof we upper bound the total number of labeled (queried) and unlabeled samples used in Algorithm \ref{alg:margin-based-active-learning}.
The number of labeled samples is simply $n_0 + \sum_{k=1}^{k_0}n_k$. It can be upper bounded by 
\begin{align*}
n_0 + \sum_{k=1}^{k_0}n_k &\leq O(\kappa_n \epsilon_0^{-2}) + \sum_{k=1}^{k_0}O(\kappa_nd\epsilon_k^{-1})
\leq O(\kappa_n)\times \left(\frac{1}{\varepsilon_a} + \sum_{k=1}^{k_0}\frac{2^k d}{\sqrt{\varepsilon_a}}\right)
\leq O\left(\frac{\kappa_n d}{\varepsilon_a}\right),
\end{align*}
where the last inequality holds because $k_0=\min\{k\in\mathbb M: 2^{-k}\epsilon_0\leq\varepsilon_a\}$ and $\epsilon_0=\sqrt{\varepsilon_a}$.
This shows that the total number of labeled samples consumed is on the order of $O(\kappa_n d/\varepsilon_a)$.

To upper bound the total number of samples (labeled/queried or unlabeled/not queried), note that at epoch $k$ the number of total samples
is upper bounded by $\tilde O(n_k/\Pr[x\in S_1(k)])$, where $S_1(k)=\{x\in\mathbb B_2(d): |x^\top\hat w_{k-1}-\hat\beta_{k-1}|\leq m_k\}$.
Because $|\hat\beta_{k-1}|\leq \beta_0$, we can lower bound $\Pr[x\in S_1(k)]$ as
\begin{align*}
\Pr_{x\sim P_X}[x\in S_1(k)] &\geq c_x\Pr_{x\sim P_U}[x\in S_1(k)] \geq c_x\Pr_{x\sim P_U}[|x-\hat\beta_{k-1}|\leq m_k]\\
&\geq c_x\sqrt{\frac{d+1}{16\pi}}\int_0^{m_k}e^{-(d-1)(|\hat\beta_{k-1}|+u)^2/2}\ud u\\
&\geq c_x\sqrt{\frac{d+1}{16\pi}} e^{-(d-1)\beta_0^2}\times m_k e^{-(d-1)m_k^2}\\
&\geq \Omega(\sqrt{d})\times m_ke^{-(d-1)m_k^2}.
\end{align*}
Hence, the total number of samples consumed can be upper bounded by
\begin{align*}
n_0+\sum_{k=1}^{k_0}\tilde O\left(\frac{n_k}{\Pr[x\in S_1(k)]}\right) 
&\leq \tilde O(\kappa_n\epsilon_0^{-2}) + \sum_{k=1}^{k_0}\tilde O\left(\frac{\kappa_n \sqrt{d} e^{(d-1)m_k^2}}{m_k\epsilon_k}\right)\\
&\leq \tilde O(\kappa_n\epsilon_0^{-2}) + \sum_{k=1}^{k_0}\tilde O\left(\frac{\kappa_n \sqrt{d} e^{(d-1)\times \tilde O(\epsilon_k)}}{\epsilon_k^{3/2}}\right)\\
&\leq \tilde O\left(\frac{\kappa_n \sqrt{d} e^{d\varepsilon_a}}{\varepsilon_a^{3/2}}\right).
\end{align*}
This completes the proof of Lemma \ref{lem:margin-based}. $\square$
\end{proof}

\subsection{Proof of Theorem \ref{thm:meta-algorithm}}

Recall the definition that $v(\cdot)=\langle\cdot,w^*\rangle - \mu^*$.
Define $\alpha := \|w^*\|_2\leq B$, $\tilde w^*=w^*/\alpha$, and for $j\in\{1,2\}$ define $\tilde\beta_j = (\mu^*+\hat b_j)/\alpha$.
By Lemma \ref{lem:margin-based}, we have $\Delta\err(\hat w_j,\hat\beta_j)\leq \varepsilon_a$,
which by Lemmas \ref{lem:error-to-angle} and \ref{lem:error-to-intercept} implies $\tan\theta(\hat w_j,\tilde w^*)=O(\sqrt{\varepsilon_a})$
and $|\hat\beta_j-\tilde\beta_j|=O(\sqrt{\varepsilon_a})$.
This implies that $|\alpha\hat\beta_j-\mu^*-\hat b_j|\leq \alpha\times O(\sqrt{\varepsilon_a})= O(B\sqrt{\varepsilon_a})$.
On the other hand, the stopping condition in Algorithm \ref{alg:margin-based-active-learning} implies $|\hat b_2-\hat b_1|=\Omega(\varepsilon_s)=\Omega(1/\sqrt{d})$, which yields $|\hat\beta_2-\hat\beta_1|=\Omega(1/(\alpha\sqrt{d}))$ for sufficiently small $\varepsilon$ because $\hat\beta_1\to (\mu^*+\hat b_1)/\alpha$
and $\hat\beta_2\to(\mu^*+\hat b_2)/\alpha$ as $\varepsilon\to 0$. Subsequently, 
\begin{align}
\big|\hat\alpha-\alpha\big|
&= \left|\alpha - \frac{\alpha\hat\beta_2-\mu^*\pm O(B\sqrt{\varepsilon_a}) - \alpha\hat\beta_1+\mu^*\pm O(B\sqrt{\varepsilon_a})}{\hat\beta_2-\hat\beta_1}\right|\nonumber\\
&= \frac{O(B\sqrt{\varepsilon_a})}{|\hat\beta_2-\hat\beta_1|} = O(B^2\sqrt{d\varepsilon_a}).\label{eq:proof-meta-algorithm-1}
\end{align}

We now upper bound $|\hat\mu-\mu^*|$ and $\|\hat w-w^*\|_2$. By definition, $\hat\mu=\hat\alpha\hat\beta_1-\hat b_1$ and $\mu^*=\alpha\tilde\beta_1-\hat b_1$. Subsequently, 
\begin{align}
\big|\hat\mu-\mu^*\big| &\leq \big|\hat\alpha-\alpha\big|\cdot|\hat\beta_1| + \alpha\big|\hat\beta_1-\tilde\beta_1\big|\leq O(B^2\sqrt{d\varepsilon_a}\beta_0) + O(B\sqrt{\varepsilon_a})\leq O(B^2\sqrt{d\varepsilon_a}).
\label{eq:proof-meta-algorithm-2}
\end{align}
Similarly, $\hat w=\hat\alpha\hat w_1$ and $w^*=\alpha \tilde w^*$. Therefore,
\begin{align}
\|\hat w-w^*\| &\leq |\hat\alpha-\alpha|\cdot \|\hat w\|_2 + \alpha\|\hat w_1-\tilde w^*\|_2 \leq O(B^2\sqrt{d\varepsilon_a}) + O(B\sqrt{\varepsilon_a}) = O(B^2\sqrt{d\varepsilon_a}).
\label{eq:proof-meta-algorithm-3}
\end{align}
With the choice of $\varepsilon_a = \kappa_\varepsilon \varepsilon^2/\ln^2(1/\varepsilon)$ and $\kappa_\varepsilon\asymp 1/d$,
and with $\varepsilon\to 0$ being sufficiently small, Eqs.~(\ref{eq:proof-meta-algorithm-2},\ref{eq:proof-meta-algorithm-3}) yield that $\sup_{x\in\mathbb B_2(d)}|\hat v(x)-v^*(x)|\leq \varepsilon$.
Finally, plugging in the expression of $\varepsilon_a=\kappa_\varepsilon\varepsilon^2/\ln^2(1/\varepsilon)$ and invoking Lemmas \ref{lem:key-bisection-search}, \ref{lem:margin-based} we obtain the upper bounds on $n(\varepsilon,\delta)$ and $m(\varepsilon,\delta)$.

\section{Numerical results}
\label{sec:numerical}

We use synthetic data to study the numerical performance of our proposed active learning methods and compare it with baseline methods.
The main baseline method we are comparing against is a passive learning method:
\begin{itemize}
\item \textbf{The baseline method} will first invoke the \textsc{TrisectionSearch} routine in Algorithm \ref{alg:bisection-search} to obtain actions $\hat b_1,\hat b_2$.
The method then divides the remaining number of samples into two halves and use Logistic regression to form two model estimates $\hat w_1,\hat\beta_1$
and $\hat w_2,\hat\beta_2$ under actions $\hat b_1$ and $\hat b_2$ respectively, \emph{without} sample selection.
The method finally uses Lines \ref{line:defn-hat-alpha} and \ref{line:defn-hat-v} of Algorithm \ref{alg:bisection-search} to produce an estimate $\hat v(\cdot)$ of the utility function $v(\cdot)$. 
\end{itemize}

Note that, theoretically, a passive learning baseline algorithm cannot adaptively change actions in queries. However, 
we observe in our simulations that if the default actions for passive learning are too far away from optimal, very little information is gained
and the accuracy of passive learning is very low. Therefore, we use the actions estimated by the \textsc{TrisectionSearch} routine
as the default actions of a passive learning algorithm in our experiments to form a more reasonable comparison.
 
We also mention details of the implementation of our proposed active learning algorithm.
The implementation slightly deviates from the descriptions of the algorithms and the selection of parameter values in the theoretical results,
due to computational efficiency issues and other factors we observe could impact the algorithm's numerical performances.
In Line \ref{line:01minimization} the 0/1-error empirical risk minimization step is replaced with Logistic regression as the former formulation is computationally expensive. We also remove the $\|w\|_2=1,|\beta|\leq\beta_0$ constraints in the optimization but normalize the estimator after optimization. The parameters of Algorithm \ref{alg:bisection-search} are set as $\varepsilon_s=0.5$ and $\delta_s=0.1$. The parameters of Algorithm \ref{alg:margin-based-active-learning} are set as $\varepsilon_0=0.2$, $\kappa_m=1.0$ and $\kappa_n=d+\ln(n)$.
Note that we no longer need the $\beta_0$ parameter with the Logistic regression formulation.

For the problem settings, we adopt $P_X=P_U$ being the uniform distribution on the $d$-dimensional $\ell_2$ ball $\mathbb B_2(d)$.
We set the mean utility model $v(\cdot)$ as $v(\cdot)=\langle\cdot, \theta^*\rangle-\mu^*$ with 
$\theta^*=(2/\sqrt{d},\cdots,2/\sqrt{d})$, and $\mu^*=-2.5$. The noise distribution $P_\xi$ is set as the uniform distribution on interval $[-1,1]$.

\subsection{Convergence of utility estimates}

In the first set of reports we report how fast the utility estimates $\hat v(\cdot)$ of our proposed algorithm (and the passive learning baseline)
converge to the ground truth $v(\cdot)$ as the number of labeled (queried) samples $n$ increases.
The estimation errors between $\hat v(\cdot)=\langle\cdot,\hat w\rangle-\hat\mu$ and $v(\cdot)=\langle\cdot,w^*\rangle-\mu^*$
are reported as $\|\hat w-w^*\|_2+|\hat \mu-\mu^*|$.

\begin{figure}[t]
\centering
\includegraphics[width=0.32\textwidth]{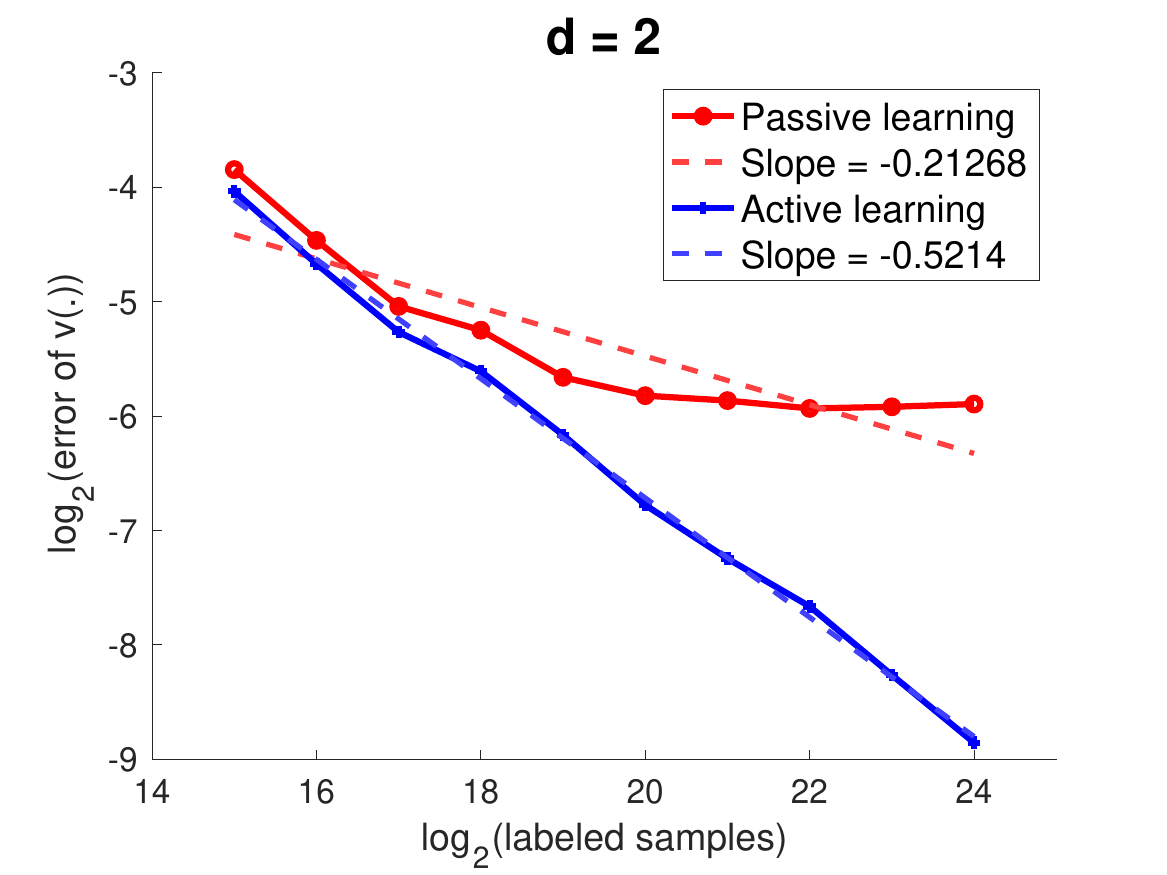}
\includegraphics[width=0.32\textwidth]{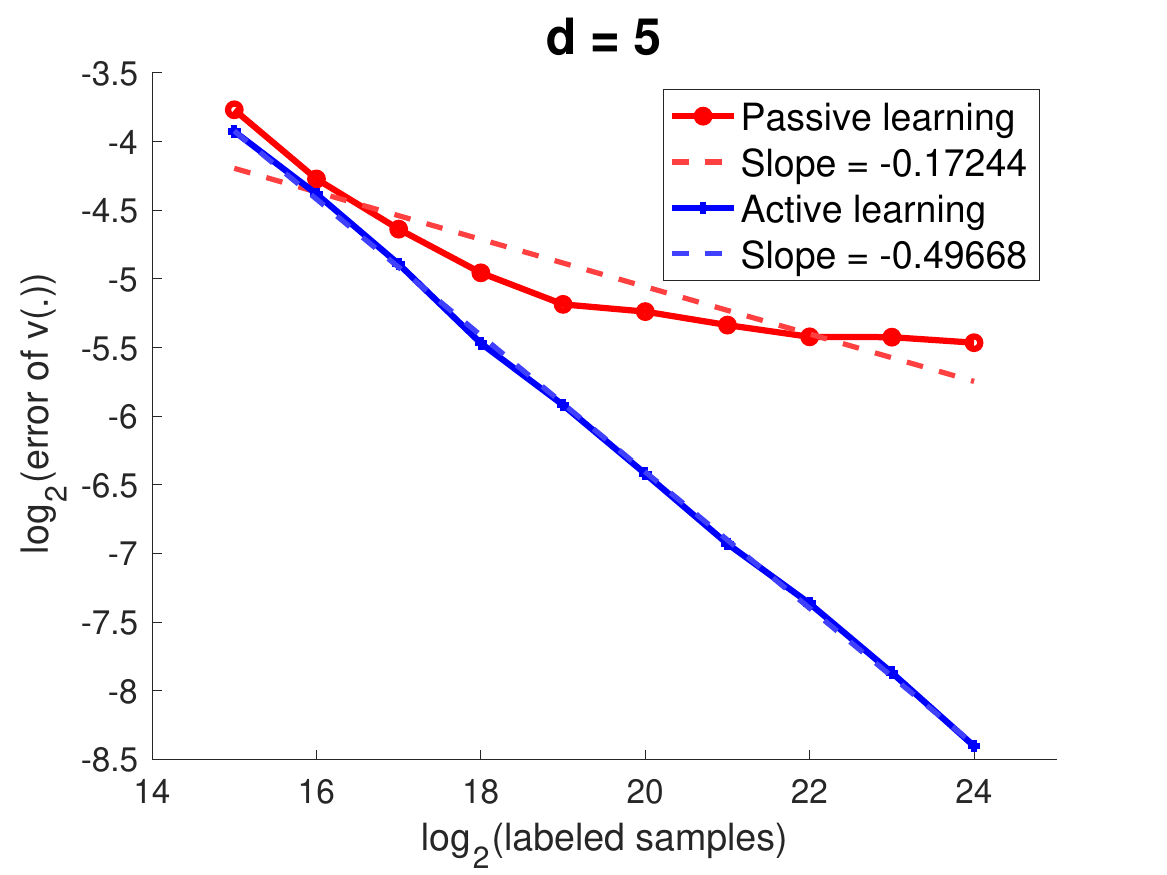}
\includegraphics[width=0.32\textwidth]{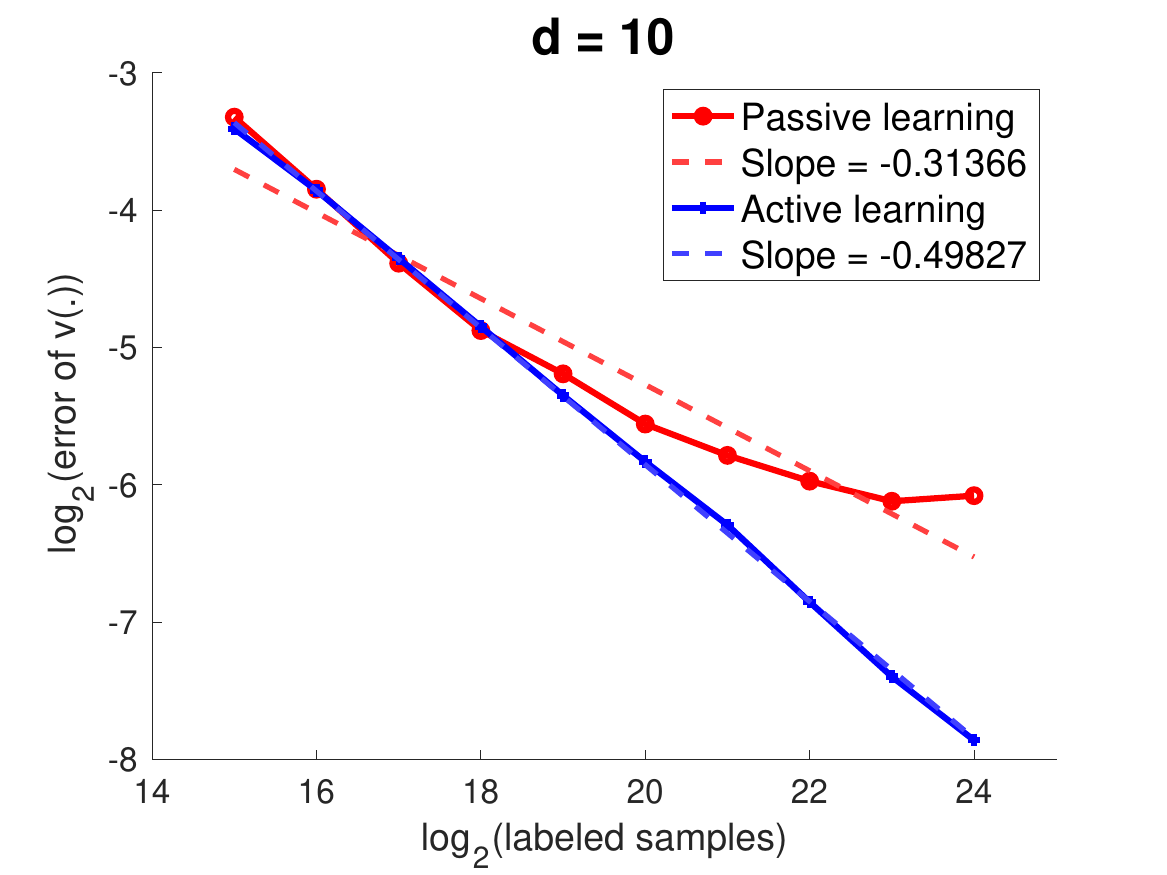}
\caption{Log-log plot of the estimation errors of $v(\cdot)$ as a function of the number of labeled (queried) samples $n$, for $d\in\{2,3,5\}$.
The dotted lines are fitted linear regression of the log-log plots.}
\label{fig:exp-convergence}
\end{figure}

Figure \ref{fig:exp-convergence} reports the estimation errors of the active learning algorithm and the passive learning baseline
for dimension settings of $d\in\{2,5,10\}$. 
Each reported error statistic is averaged over 200 independent trials, since both the labels and algorithm decisions contain randomness.
As we can see, our proposed active learning algorithm (the blue curves) outperforms significantly the estimates of the baseline passive learning
algorithm (the red curves), demonstrating the sample efficiency of active learning.

We further fit linear regression models on the log-log plots for both algorithms.
For the active learning algorithm, the slopes of the fitted linear models are very close to $-0.5$, suggesting an asymptotic convergence rate
of $n(\varepsilon,\delta)\asymp 1/\varepsilon^2$. This matches our theoretical results established in Theorem \ref{thm:meta-algorithm}.
On the other hand, the slopes of fitted models for the passive learning baseline range from $-0.17$ to $-0.31$, which are orders of magnitudes slower
convergence rates compared to the $1/\varepsilon^2$ rates for active learning methods.

\subsection{Sensitivity of model dimensions}

We use numerical results to evaluate the sensitivity of estimation errors with respect to the dimensions of the underlying linear model $d$.
In Figure \ref{fig:exp-dims}, we report the estimation errors of the active learning algorithm and the passive learning baseline
for dimensions $d$ ranging from 3 to 30.

\begin{figure}[t]
\centering
\includegraphics[width=0.5\textwidth]{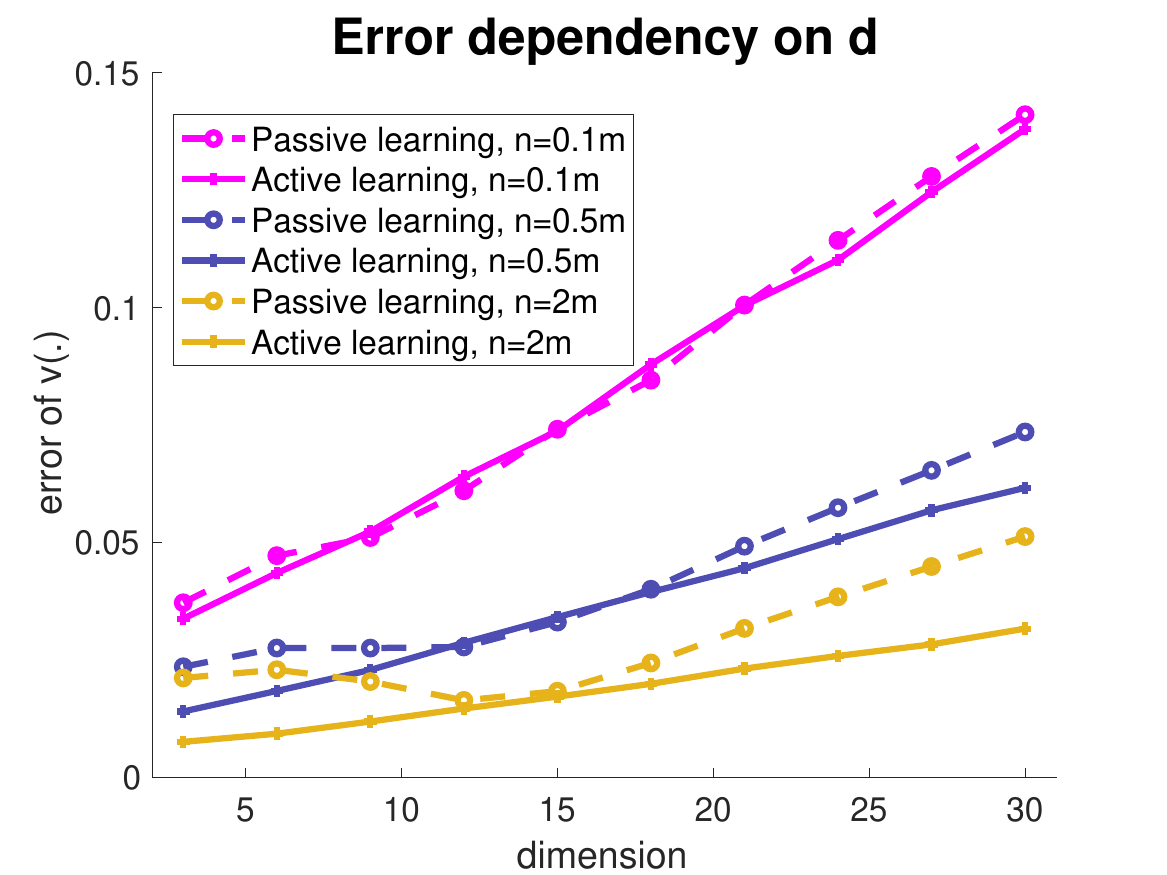}
\caption{Plot of the estimation errors of $v(\cdot)$ for different labeled (queried) samples $n$ and $d\in[3,30]$ settings. Here $m$ in the legend stands for million (e.g., $n=2m$ means the sample size is 2 million).}
\label{fig:exp-dims}
\end{figure}

As we can see in Figure \ref{fig:exp-dims}, the estimation errors of our proposed active learning approach scale near linearly with the dimension $d$
of the underlying linear model. The active learning algorithm also consistently outperforms the passive learning baseline,
especially in large $n$ or $d$ settings.

\subsection{Sensitivity of unlabeled samples}

\begin{figure}[t]
\centering
\includegraphics[width=0.45\textwidth]{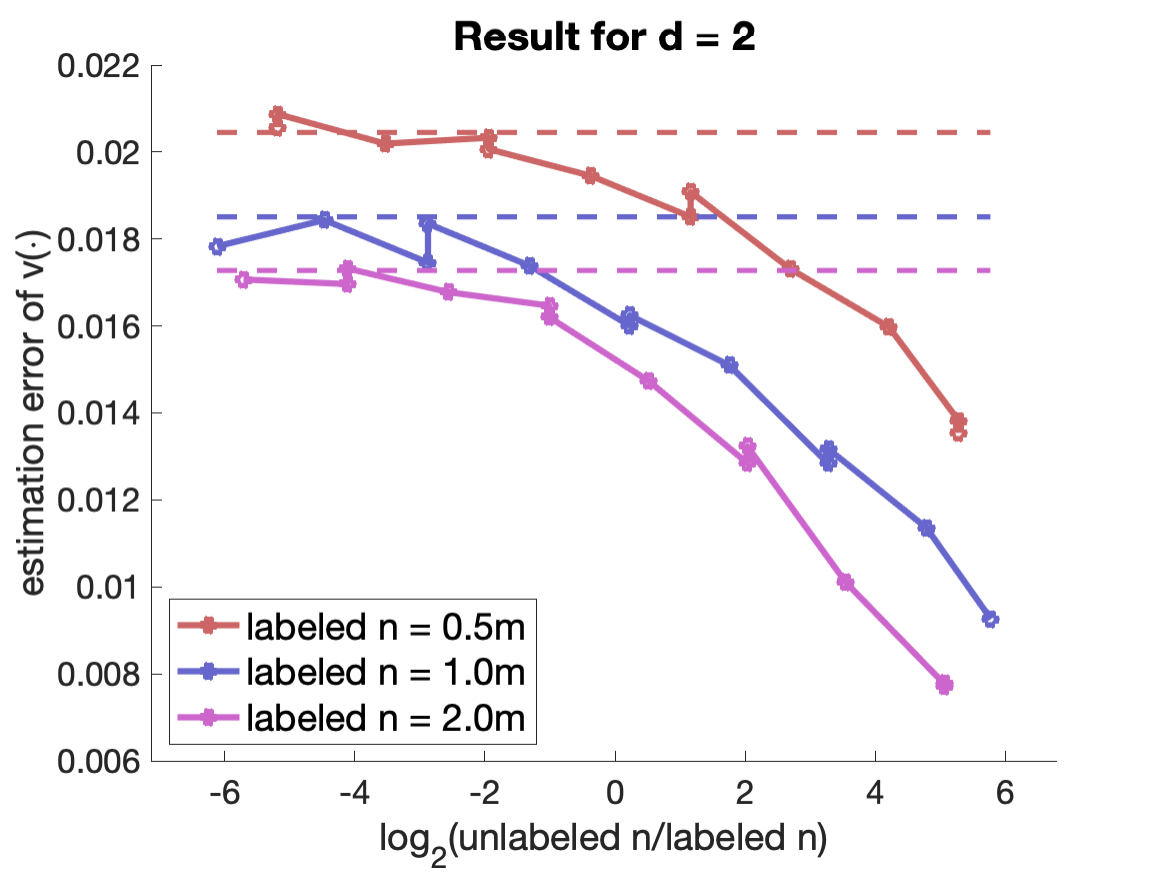}
\includegraphics[width=0.45\textwidth]{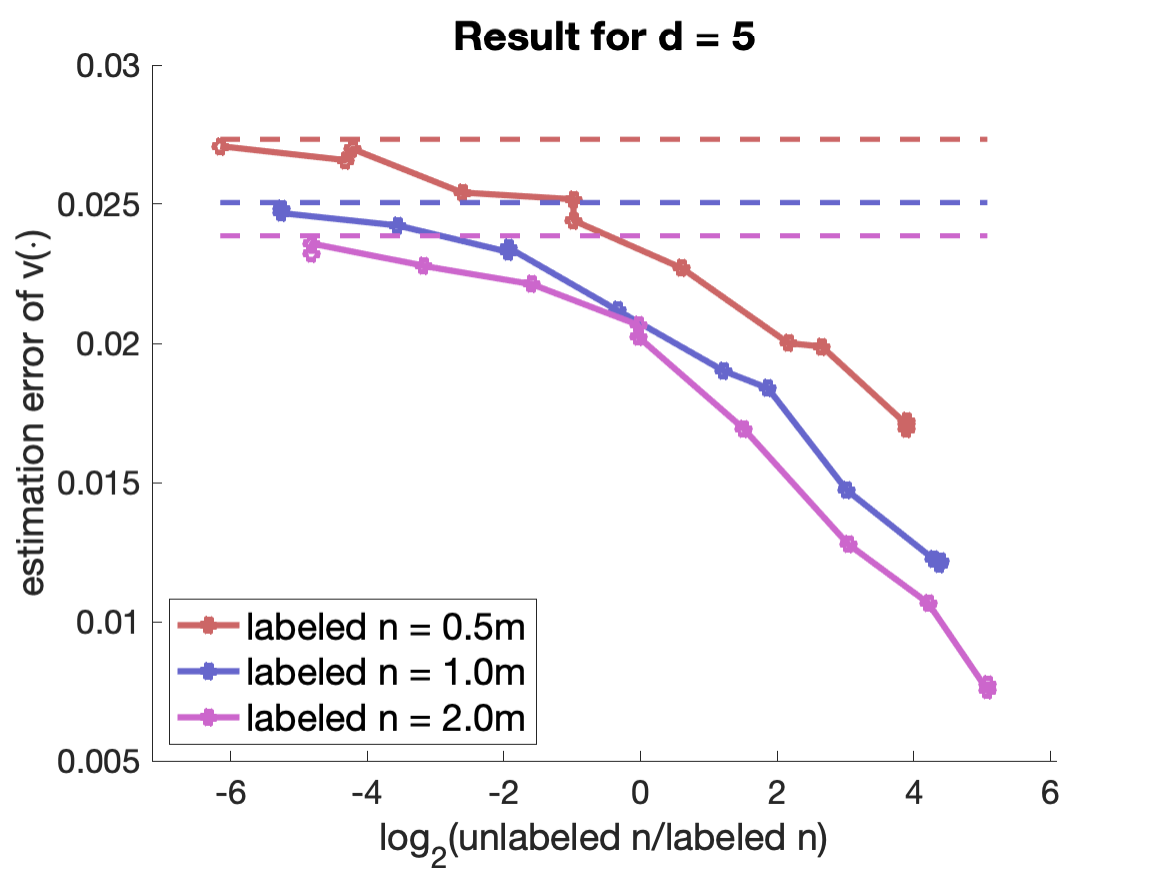}
\caption{Plots of estimation errors of $v(\cdot)$ by the active learning algorithm with different budgets of unlabeled samples (i.e., the number of labeled samples $n(\varepsilon, \delta)$  varies from 0.5 million, 1 million, to 2 million). The $x$-axis is the $\log$ of the ratio between the number of unlabeled samples and the number of labeled samples. The $y$-axis is the estimation error. 
Dotted lines are errors of the passive learning algorithm.}
\label{fig:sensitivity-unlabeled}
\end{figure}

In this section we report numerical results showing how the estimation errors of the proposed active learning algorithm
decrease as the algorithm has access to more unlabeled samples.
For this purpose, we define the \emph{ratio} between unlabeled and labeled samples as
$$
\rho := \frac{m(\varepsilon,\delta)-n(\varepsilon,\delta)}{n(\varepsilon,\delta)},
$$
where $m(\varepsilon,\delta)$ is the total number of samples consumed and $n(\varepsilon,\delta)$ is the number of samples that are  labeled/queried. Thus, the numerator  $m(\varepsilon,\delta)-n(\varepsilon,\delta)$ is the number of skipped samples.
While our theoretical results in Theorem \ref{thm:meta-algorithm} indicate that $\rho$ has to scale as large as $O(1/\varepsilon)$ for the 
margin-based active learning algorithm.
In practice, however, it is possible to achieve significant estimation accuracy improvements with smaller values of $\rho$:
when the unlabeled sample budget is completely consumed, the margin-based active learning algorithm will revert back to passive learning
without any additional sample selection being carried out.

In Figure \ref{fig:sensitivity-unlabeled} we plot the estimation errors of the utility $v(\cdot)$ as a function of $\log_2(\rho)$,
with larger values of $\log_2(\rho)$ indicating more unlabeled samples involved.
We also report the estimation errors of the passive learning algorithm as a benchmark, which can be regarded as an instance of $\rho=0$ (i.e., all the samples are labeled).  
As we can see, the estimation errors of our proposed active learning algorithm decrease rapidly with increasing $\rho$,
and the performance increase is significant when $\rho$ is as small as 0.5 or 1.0.
This shows that even with a modest amount of unlabeled samples, the active learning procedure can already significantly increase
the accuracy of the estimated utility function $\hat v(\cdot)$.

\section{Conclusions}
\label{sec:con}

In this paper, we study a learning problem in contextual search, where the goal is to use as fewer queries as possible to accurately estimate the mean value function. To this end, we propose a margin-based active learning algorithm with tri-section search scheme and establish the corresponding PAC learning sample complexity bound. Our bound shows a significant improvement over the passive setting.

There are several interesting future directions. First, we assume a linear model in this paper. It would be interesting to extend the linear model to more general parametric and non-parametric models.  Second, in general, establishing lower bound result in active learning for binary feedback is very challenging. Despite that, it is worth to explore the optimality of our algorithm. Third, we hope the proposed active learning algorithm would inspire more research on adaption of active learning to solve important operations problems. 

\section*{Acknowledgment}

The authors thank the department editor, the associated editor, and the anonymous referees for many useful suggestions and feedback, which greatly improves the paper. Xi Chen and Quanquan Liu would like to thank the support from NSF via the Grant IIS-1845444.

\begin{appendices}

\section{Some technical lemmas}

\begin{lemma}
Suppose $x\sim P_U$. Then for any measurable set $A\subseteq[-1,1]$, it holds that
$\Pr[x_1\in A]\leq \sqrt{\frac{d+1}{2\pi}}\int_{u\in A} e^{-(d-1)u^2/2}\ud u$.
If $A\subseteq[-1/\sqrt{2},1/\sqrt{2}]$ then $\Pr[x_1\in A]\geq \sqrt{\frac{d+1}{16\pi}}\int_{u\in A}e^{-(d-1)u^2/2}\ud u$.
\label{lem:marginal-d1}
\end{lemma}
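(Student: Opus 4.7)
The plan is to compute the marginal density of $x_1$ under the uniform law $P_U$ on $\mathbb B_2(d)$ in closed form, and then bound this density pointwise above and below by a scaled Gaussian density; integrating over $A$ then yields both claims. Since $P_U$ is the uniform distribution on the unit ball, integrating out the remaining $d-1$ coordinates (each ``slice'' at height $x_1=u$ is a $(d-1)$-ball of radius $\sqrt{1-u^2}$) gives the standard formula
\[
f_{x_1}(u)=\frac{V_{d-1}}{V_d}(1-u^2)^{(d-1)/2}\mathbf 1\{|u|\le 1\},
\]
where $V_k=\pi^{k/2}/\Gamma(k/2+1)$. Using the classical Wendel/Gautschi inequalities for ratios of Gamma functions at half-integer arguments, the leading constant satisfies the two-sided bound
\[
\sqrt{\tfrac{d}{2\pi}} \;\le\; \frac{V_{d-1}}{V_d} \;\le\; \sqrt{\tfrac{d+1}{2\pi}}.
\]

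For the upper bound, the elementary inequality $1-t\le e^{-t}$ with $t=u^2$ gives $(1-u^2)^{(d-1)/2}\le e^{-(d-1)u^2/2}$ for every $u\in[-1,1]$, so that $f_{x_1}(u)\le \sqrt{(d+1)/(2\pi)}\,e^{-(d-1)u^2/2}$. Integrating over $A$ yields the first displayed inequality without any restriction on $A$.

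For the lower bound, the key pointwise estimate is a ``reverse'' form on the range $|u|\le 1/\sqrt 2$. I would use the Taylor expansion $\ln(1-t)+t=-\sum_{k\ge 2}t^k/k$ with $t=u^2$, and sum it geometrically to get
\[
\ln(1-u^2)+u^2 \;\ge\; -\frac{u^4}{1-u^2} \;\ge\; -2u^4 \qquad (|u|\le 1/\sqrt 2),
\]
which after multiplying by $(d-1)/2$ and exponentiating gives $(1-u^2)^{(d-1)/2}\ge e^{-(d-1)u^2/2}\cdot e^{-(d-1)u^4}$. Combining with the lower bound on $V_{d-1}/V_d$ and integrating over $A$ would yield the second displayed inequality, where the $16\pi$ (rather than $2\pi$) absorbs the loss in this last step and the factor gap between $\sqrt{d/(2\pi)}$ and $\sqrt{(d+1)/(16\pi)}$.

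The main obstacle is reconciling the stated dimension-independent constant with the above calculation: the $e^{-(d-1)u^4}$ correction is only $O(1)$ when $u^2$ is small relative to $1/(d-1)$, and at $u=1/\sqrt 2$ it carries an exponentially small factor $e^{-(d-1)/4}$, so a naive bound cannot give the constant $1/\sqrt{16\pi}$ uniformly over $|u|\le 1/\sqrt 2$. Before writing out the proof I would trace the lemma's downstream uses in Eqs.~(\ref{eq:proof-beta-star-1})--(\ref{eq:proof-beta-star-3}) and (\ref{eq:proof-beta-deviation-2})--(\ref{eq:proof-beta-deviation-3}), where the integration variable $\gamma$ is ultimately restricted to an interval of length $O(1/\sqrt{d-1})$ on which $(1-\gamma^2)^{(d-1)/2}$ and $e^{-(d-1)\gamma^2/2}$ differ by a dimension-free constant; this confirms that the lemma as used is effectively a statement about the near-origin range $|u|=O(1/\sqrt{d-1})$, where the reverse bound above cleanly delivers the constant $\sqrt{(d+1)/(16\pi)}$ (indeed with room to spare). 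I would either (i) state the lower bound with the shrunk range $|u|\le c/\sqrt{d-1}$, or (ii) keep the range $|u|\le 1/\sqrt 2$ and replace $1/\sqrt{16\pi}$ by a possibly smaller dimension-free constant; in either case the downstream proofs are unaffected.
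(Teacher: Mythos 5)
Your proposal follows essentially the same route as the paper: both write the marginal density of $x_1$ as $\frac{V_{d-1}}{V_d}(1-u^2)^{(d-1)/2}$, control $V_{d-1}/V_d=\pi^{-1/2}\,\Gamma(d/2+1)/\Gamma(d/2+1/2)$ by Gamma-ratio (Kershaw/Wendel/Gautschi) inequalities, and get the upper bound from $1-t\le e^{-t}$; that half of your argument is complete and matches the paper's proof.

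Your diagnosis of the lower bound is also correct, and it identifies a defect in the lemma as stated rather than a gap in your own reasoning. With the exponent $e^{-(d-1)u^2/2}$, no dimension-free constant can work on all of $[-1/\sqrt{2},1/\sqrt{2}]$: near $u=1/\sqrt{2}$ the true density is of order $\sqrt{d}\,2^{-(d-1)/2}$ while the claimed bound is of order $\sqrt{d}\,e^{-(d-1)/4}$, and since $2^{-1/2}<e^{-1/4}$ the stated inequality fails for all large $d$ (take $A$ a small interval near $1/\sqrt{2}$). In particular your option (ii) --- keeping the range and the $/2$ in the exponent while only shrinking the constant --- cannot be carried out, because the deficit is exponential in $d$, not a constant. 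The paper's own proof in fact establishes a different display: its last inequality rests on $1-z\ge e^{-2z}$ for $0\le z\le 1/2$ (the written justification ``$1-z\ge 0.5e^{-z}$'' appears to be a typo, since a factor $0.5^{(d-1)/2}$ would not be dimension-free) and concludes with $\Pr[x_1\in A]\ge\sqrt{(d+1)/(16\pi)}\int_{u\in A}e^{-(d-1)u^2}\,\ud u$, i.e., exponent $-(d-1)u^2$ rather than the stated $-(d-1)u^2/2$. That weaker form is valid on the whole range $|u|\le 1/\sqrt{2}$ and, exactly as you observe from tracing the downstream uses in Lemmas \ref{lem:beta-star}, \ref{lem:beta-deviation} and \ref{lem:margin-based}, suffices everywhere: the integration variable there satisfies $(d-1)u^2=O(1)$, or the exponential factor is carried explicitly, so only absolute constants change. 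Hence the two clean repairs are your option (i) (restrict the lower bound to $|u|\le c/\sqrt{d-1}$, where your $e^{-(d-1)u^4}$ correction is bounded below by a universal constant) or the paper's implicit one (keep $|u|\le 1/\sqrt{2}$ but state the bound with $e^{-(d-1)u^2}$); either leaves the rest of the paper intact.
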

\begin{proof}{Proof of Lemma \ref{lem:marginal-d1}.}
Let $V_d=\pi^{d/2}/\Gamma(1+d/2)$ be the volume of $\mathbb B_2(d)$, where $\Gamma(z)=\int_0^{\infty}x^{z-1}e^{-x}\ud x$
is the Gamma function.
Using change-of-variable in multivariate integration, it is easy to verify that $\Pr[x_1\in A] = \frac{V_{d-1}}{V_d}\int_{u\in A}(1-u^2)^{(d-1)/2}\ud u
=\frac{1}{\sqrt{\pi}}\frac{\Gamma(d/2+1)}{\Gamma(d/2+1/2)}\int_{u\in A}(1-u^2)^{(d-1)/2}\ud u$.
By Kershaw's inequality \citep{kershaw1983some}, for any $m>0$ it holds that $\frac{m}{\sqrt{m+1}}<\frac{\Gamma(m+1/2)}{\Gamma(m)}<\sqrt{m}$.
Subsequently, 
$$
\Pr[x_1\in A]\leq \frac{\sqrt{d/2+1/2}}{\sqrt{\pi}}\int_{u\in A}(1-u^2)^{(d-1)/2}\ud u \leq \sqrt{\frac{d+1}{2\pi}}\int_{u\in A}e^{-(d-1)u^2/2}\ud u,
$$
where the last inequality holds because $1-z\leq e^{-z}$ for all $u\geq 0$.
For the other direction, note that $m/\sqrt{m+1}\geq 1/\sqrt{2}$ for $m\geq 1$ and $1-z\geq 0.5e^{-z}$ for all $0\leq z\leq 1/2$. We have
$$
\Pr[x_1\in A]\geq \frac{\sqrt{d/2+1/2}}{\sqrt{2\pi}}\int_{u\in A}(1-u^2)^{(d-1)/2}\ud u \geq \sqrt{\frac{d+1}{16\pi}}\int_{u\in A}e^{-(d-1)u^2}\ud u,
$$
where the last inequality holds because $u^2\leq 1/2$ for all $u\in A$ as assumed. $\square$
\end{proof}

\begin{lemma}
Suppose $d\geq 2$ and $x\sim P_U$. Then for any measurable set $A\subseteq [-1,1]^2\cap\mathbb B_2(d)$, it holds that
$\Pr[(x_1,x_2)\in A]\leq \frac{d}{2\pi}\int_{(u_1,u_2)\in A}e^{-(d-2)(u_1^2+u_2^2)/2}\ud u_1\ud u_2$.
If $x_1^2+x_2^2\leq 1/2$ for all $(x_1,x_2)\in A$, then $\Pr[(x_1,x_2)\in A]\geq \frac{d}{4\pi}\int_{(u_1,u_2)\in A}e^{-(d-2)(u_1^2+u_2^2)/2}\ud u_1\ud u_2$.
\label{lem:marginal-d2}
\end{lemma}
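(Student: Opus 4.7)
The plan is to mimic the proof of Lemma~\ref{lem:marginal-d1} but in two dimensions. The key computational simplification is that for the two-dimensional marginal of $P_U$ on $\mathbb B_2(d)$ the normalizing constant admits a clean closed form that avoids the Gamma-function gymnastics (and the appeal to Kershaw's inequality) used in the one-dimensional case.

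First I would compute the joint density of $(x_1,x_2)$ explicitly. For any $(u_1,u_2)$ with $u_1^2+u_2^2<1$, the slice $\{x\in\mathbb B_2(d):x_1=u_1,\,x_2=u_2\}$ is a $(d-2)$-dimensional Euclidean ball of radius $\sqrt{1-u_1^2-u_2^2}$. Integrating the uniform density on $\mathbb B_2(d)$ over the remaining $d-2$ coordinates yields
\[
f_{(x_1,x_2)}(u_1,u_2)=\frac{V_{d-2}}{V_d}(1-u_1^2-u_2^2)^{(d-2)/2},
\]
where $V_k=\pi^{k/2}/\Gamma(1+k/2)$. Using $\Gamma(1+d/2)=(d/2)\Gamma(d/2)$, the prefactor collapses to $V_{d-2}/V_d=d/(2\pi)$, giving an exact expression $\Pr[(x_1,x_2)\in A]=\tfrac{d}{2\pi}\int_A(1-u_1^2-u_2^2)^{(d-2)/2}\,\ud u_1\ud u_2$. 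This already pins down the leading constant in both the upper and lower bounds.

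Second, I would pass from the polynomial $(1-r^2)^{(d-2)/2}$ to the exponential $e^{-(d-2)r^2/2}$ via elementary scalar inequalities, exactly as in the one-dimensional case. For the upper bound, apply $1-z\le e^{-z}$ pointwise with $z=u_1^2+u_2^2\in[0,1]$, raise to the $(d-2)/2$-th power, and integrate over $A$ to get the claim. For the lower bound, restrict to the regime $u_1^2+u_2^2\le 1/2$ in which $1-z\ge \tfrac12 e^{-z}$ holds on $[0,1/2]$; raising this to the $(d-2)/2$-th power and integrating against $\ud u_1\ud u_2$ over $A$ yields the desired integral lower bound.

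The one delicate point is bookkeeping the factor $2^{-(d-2)/2}$ that arises when the prefactor $\tfrac12$ is raised to the $(d-2)/2$-th power; to end up with the dimension-free constant $1/2$ claimed in the statement, one has to absorb this exponential loss, either by sharpening the bound $1-z\ge \tfrac12 e^{-z}$ to $1-z\ge e^{-cz}$ with a rate $c>1$ valid on $[0,1/2]$ (so that the effective exponent in the integrand becomes slightly more negative, exactly as in Lemma~\ref{lem:marginal-d1} where the lower-bound exponent appears as $-(d-1)u^2$ rather than $-(d-1)u^2/2$), or by directly invoking $-\log(1-z)\le z+O(z^2)$ on the relevant sub-interval to cancel the $2^{-(d-2)/2}$ prefactor. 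This is the only nontrivial step; everything else is a straightforward two-dimensional rewrite of the one-dimensional argument already recorded in the paper.
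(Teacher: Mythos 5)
Your proposal follows exactly the paper's route: the paper's proof is precisely the exact two-dimensional marginal computation with $V_{d-2}/V_d=d/(2\pi)$ (obtained by integrating the uniform density over the $(d-2)$-dimensional slice of radius $\sqrt{1-u_1^2-u_2^2}$), followed by the remark that the rest is identical to the proof of Lemma \ref{lem:marginal-d1}, i.e., the scalar bounds $1-z\le e^{-z}$ for the upper bound and a reverse inequality on $[0,1/2]$ for the lower bound. Your density computation and upper bound are complete and correct.

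The ``delicate point'' you flag in the lower bound is genuine, and you should know it is not actually resolved in the paper either. Your second patch does not work: for $u_1^2+u_2^2$ near $1/2$ the quadratic term in $-\log(1-z)$ is of constant order, so after raising to the power $(d-2)/2$ it contributes a factor $e^{-\Theta(d)}$ that no dimension-free constant such as the claimed $\tfrac12$ can absorb; indeed the pointwise inequality $(1-r^2)^{(d-2)/2}\ge\tfrac12 e^{-(d-2)r^2/2}$ is false at $r^2=1/2$ once $d$ is moderately large. Your first patch is the viable one, and it is what the displayed chain in the paper's proof of Lemma \ref{lem:marginal-d1} in effect uses: $1-z\ge e^{-2z}$ on $[0,1/2]$, which yields $\Pr[(x_1,x_2)\in A]\ge\frac{d}{2\pi}\int_{A} e^{-(d-2)(u_1^2+u_2^2)}\,\ud u_1\ud u_2$. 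Note the exponent here is $-(d-2)(u_1^2+u_2^2)$ rather than the stated $-(d-2)(u_1^2+u_2^2)/2$; the same mismatch between the stated exponent and what the argument delivers is already present in Lemma \ref{lem:marginal-d1}. So your approach (like the paper's) proves the lemma with the exponent doubled and in fact the better constant $d/(2\pi)$; in the places where the lower bound is invoked the integration region satisfies $u_1^2+u_2^2=O(1/d)$ (e.g., in the proof of Lemma \ref{lem:error-to-angle}), so the two forms differ only by a constant factor and nothing downstream is affected, but the lower bound exactly as stated, with the $/2$ in the exponent and the constant $d/(4\pi)$, does not follow from this line of argument.
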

\begin{proof}{Proof of Lemma \ref{lem:marginal-d2}.}
By the change-of-variable formula, $\Pr[(x_1,x_2)\in A] = \frac{V_{d-2}}{V_d}\int_{(u_1,u_2)\in A}(1-u_1^2-u_2^2)^{(d-2)/2}\ud u_1\ud u_2 = \frac{d}{2\pi}\int_{(u_1,u_2)\in A}(1-u_1^2-u_2^2)^{(d-2)/2}\ud u_1\ud u_2$. 
The rest of the proof is identical to the proof of Lemma \ref{lem:marginal-d1}. $\square$
\end{proof}

We next define some useful notations that will make our proof similar.
For any $x\in\mathbb B_2(d)$ and $b\in\mathbb R$, define 
\begin{equation}
\eta_b(x) := \Pr[y=1|x,b], \;\;\;\;\;\;\Delta_b(x) := v(x)-b = \langle x,w^*\rangle + \mu^*-b.
\label{eq:defn-eta-delta}
\end{equation}
Because $y=1$ if and only if $v(x)+\xi\geq b$ and $\xi\sim P_\xi$ with $\int_{-\infty}^0 f_\xi(u)\ud u=\int_0^\infty f_\xi(u)\ud u=1/2$ (see Assumption (A3)), we have that
$\eta_b(x)-\frac{1}{2} = \int_{-\Delta_b(x)}^0 f_\xi(u)\ud u= F_\xi(0)-F_\xi(-\Delta_b(x))$ if $\Delta_b(x)\geq 0$, and $\eta_b(x)-\frac{1}{2}=-\int_0^{-\Delta_b(x)}f_\xi(u)\ud u=F_\xi(0)-F_\xi(-\Delta_b(x))$ if $\Delta_b(x)<0$,
where $F_\xi(\cdot)$ is the CDF of $P_\xi$.
Since $\eta_b(x)$ only depends on $\Delta_b(x)$, we can define 
\begin{equation}
\phi(\Delta) := F_\xi(0) - F_\xi(-\Delta).
\label{eq:defn-phi}
\end{equation}
It then holds that $\eta_b(x) - \frac{1}{2} = \phi(\Delta_b(x))$.
Furthermore, by definition we have that $\phi(\Delta)\leq 0$ for all $\Delta\leq 0$, $\phi(\Delta)\geq 0$ for all $\Delta\geq 0$,
$\phi(0)=0$ and $\phi'(\Delta) = f_\xi(-\Delta)\in [c_\xi,C_\xi]$ for all $|\Delta|\leq 2$, thanks to Assumption (A3).

Now let $\mathcal V=\{v(\cdot): v(\cdot)=\langle\cdot, w\rangle -\beta, w\in\mathbb R^d,\beta\in\mathbb R\}$ be a hypothesis class of 
non-homogeneous $d$-dimensional linear classifiers.
The following lemma is a consequence of the classical VC theory of classification (see, e.g., \cite[Theorem 8]{balcan2007margin}).
\begin{lemma}
Fix a distribution $P$ supported on $\mathbb B_2(d)$ and a joint distribution $Q$ supported on $\mathbb B_2(d)\times \{0,1\}$,
such that the marginal of $Q$ on $\mathbb B_2(d)$ is $P$.
Let $v^*=\arg\min_{v\in\mathcal V}\err(v|Q)$, where $\err(v|Q) = \Pr_{(x,y)\sim Q}[y\neq \sgn(v(x))]$.
Let $\{(x_i,y_i)\}_{i=1}^n\overset{i.i.d.}{\sim} Q$ be $n$ i.i.d.~samples, and $\hat v=\arg\min_{v\in\mathcal V}\sum_{i=1}^n \vct 1\{y_i\neq v(x_i)\}$ be the empirical 
risk minimizer. Then there exists a universal constant $C>0$ such that for any $\epsilon,\delta\in(0,1)$, 
if $n\geq C\epsilon^{-2}(d+1+\ln(1/\delta))$ then it holds with probability $1-\delta$ that
$\err(\hat v|Q)-\err(v^*|Q)\leq 2\epsilon$.
\label{lem:VC-erm}
\end{lemma}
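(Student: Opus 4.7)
The plan is to reduce the statement to a canonical uniform-convergence bound from Vapnik--Chervonenkis (VC) theory applied to empirical risk minimization. The first step is to determine the VC dimension of the hypothesis class $\mathcal V$ of non-homogeneous linear threshold functions on $\mathbb R^d$. By a classical argument---general-position shattering of $d+1$ points (lower bound) together with Radon's theorem, which says any $d+2$ points admit a partition into two subsets whose convex hulls intersect and hence cannot be linearly separated (upper bound)---this VC dimension equals exactly $d+1$.

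Next I would invoke the standard VC uniform-deviation inequality for binary classification: for any hypothesis class $\mathcal H$ of VC dimension $V$ and any distribution $Q$ on $\mathbb R^d\times\{0,1\}$, with probability at least $1-\delta$ over $n$ i.i.d.\ samples one has
$$\sup_{h\in\mathcal H}\bigl|\err_n(h)-\err(h|Q)\bigr| \;\leq\; C'\sqrt{\frac{V+\ln(1/\delta)}{n}}$$
for a universal constant $C'>0$, where $\err_n(h)=\frac{1}{n}\sum_{i=1}^n\vct 1\{y_i\neq \sgn(h(x_i))\}$. Applying this to $\mathcal H=\mathcal V$ with $V=d+1$ yields the desired uniform-deviation bound.

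From here, the proof is finished by the standard three-term ERM decomposition. Writing
$$\err(\hat v|Q)-\err(v^*|Q) = \bigl[\err(\hat v|Q)-\err_n(\hat v)\bigr] + \bigl[\err_n(\hat v)-\err_n(v^*)\bigr] + \bigl[\err_n(v^*)-\err(v^*|Q)\bigr],$$
the middle term is non-positive by the defining property of the empirical risk minimizer $\hat v$, while the two outer terms are bounded in absolute value by the uniform-convergence estimate above. This gives $\err(\hat v|Q)-\err(v^*|Q)\leq 2C'\sqrt{(d+1+\ln(1/\delta))/n}$, and forcing the right-hand side to be at most $2\epsilon$ produces $n\geq C\epsilon^{-2}(d+1+\ln(1/\delta))$ with $C=(C')^2$, matching the statement.

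The lemma is essentially textbook VC theory (and is explicitly attributed in the paper to Theorem~8 of \cite{balcan2007margin}), so there is no genuine obstacle to the argument. The one place requiring a little care is selecting a form of the VC inequality that yields the clean $\sqrt{(V+\ln(1/\delta))/n}$ rate rather than the weaker $\sqrt{V\ln(n)/n}$ bound that comes directly from Sauer--Shelah combined with a union bound; this sharper rate can be obtained via chaining, or equivalently by a Rademacher-complexity bound for the $0/1$-loss class generated by $\mathcal V$.
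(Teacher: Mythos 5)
Your proposal is correct and follows essentially the same route as the paper's own proof: a uniform-deviation bound over the class of non-homogeneous linear classifiers (VC dimension $d+1$), which the paper simply imports from \cite[Theorem 8]{balcan2007margin}, combined with the standard ERM/triangle-inequality decomposition to convert uniform convergence into the $2\epsilon$ excess-risk bound. Your additional remarks (Radon's theorem for the VC dimension, and obtaining the clean $\sqrt{(d+1+\ln(1/\delta))/n}$ rate without the $\ln n$ factor via chaining or Rademacher complexity) only fill in details the paper delegates to the cited theorem.
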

\begin{proof}{Proof of Lemma \ref{lem:VC-erm}.}
Note that the VC dimension of $\mathcal V$ is $d+1$.
By \cite[Theorem 8]{balcan2007margin}, it holds with probability $1-\delta$ that
$$
\Pr\left[\forall v\in\mathcal V, \left|\frac{1}{n}\sum_{i=1}^n\vct 1\{y_i\neq\sgn(v(x_i))\} - \err(v|Q)\right|\leq\epsilon\right]\geq 1-\delta.
$$
The lemma is proved by using triangle inequality. $\square$
\end{proof}

\end{appendices}


\bibliographystyle{ormsv080}
\bibliography{refs}

\begin{thebibliography}{37}
\expandafter\ifx\csname natexlab\endcsname\relax\def\natexlab#1{#1}\fi
\expandafter\ifx\csname url\endcsname\relax
  \def\url#1{{\tt #1}}\fi
\expandafter\ifx\csname urlprefix\endcsname\relax\def\urlprefix{URL }\fi
\expandafter\ifx\csname urlstyle\endcsname\relax
  \expandafter\ifx\csname doi\endcsname\relax
  \def\doi#1{doi:\discretionary{}{}{}#1}\fi \else
  \expandafter\ifx\csname doi\endcsname\relax
  \def\doi{doi:\discretionary{}{}{}\begingroup \urlstyle{rm}\Url}\fi \fi

\bibitem[{Albert(1961)}]{albert1961sequential}
Albert, Arthur~E. 1961.
\newblock The sequential design of experiments for infinitely many states of
  nature.
\newblock {\it The Annals of Mathematical Statistics\/}  774--799.

\bibitem[{Araman and Caldentey(2021)}]{araman2021diffusion}
Araman, Victor~F, Rene Caldentey. 2021.
\newblock Diffusion approximations for a class of sequentialexperimentation
  problems.
\newblock {\it Management Science (Articles in Advance)\/}  1--22.

\bibitem[{Awasthi et~al.(2017)Awasthi, Balcan, and Long}]{awasthi2017power}
Awasthi, Pranjal, Maria~Florina Balcan, Philip~M Long. 2017.
\newblock The power of localization for efficiently learning linear separators
  with noise.
\newblock {\it Journal of the ACM\/} {\bf 63}(6) 1--27.

\bibitem[{Balcan et~al.(2009)Balcan, Beygelzimer, and
  Langford}]{balcan2009agnostic}
Balcan, Maria-Florina, Alina Beygelzimer, John Langford. 2009.
\newblock Agnostic active learning.
\newblock {\it Journal of Computer and System Sciences\/} {\bf 75}(1) 78--89.

\bibitem[{Balcan et~al.(2007)Balcan, Broder, and Zhang}]{balcan2007margin}
Balcan, Maria-Florina, Andrei Broder, Tong Zhang. 2007.
\newblock Margin based active learning.
\newblock Nader~H. Bshouty, Claudio Gentile, eds., {\it Proceedings of the 20th
  Annual Conference on Learning Theory\/}.

\bibitem[{Balcan and Long(2013)}]{balcan2013active}
Balcan, Maria-Florina, Phil Long. 2013.
\newblock Active and passive learning of linear separators under log-concave
  distributions.
\newblock Shai Shalev-Shwartz, Ingo Steinwart, eds., {\it Proceedings of the
  26th Annual Conference on Learning Theory\/}, vol.~30. 288--316.

\bibitem[{Balcan and Zhang(2017)}]{balcan2017sample}
Balcan, Maria-Florina, Hongyang Zhang. 2017.
\newblock Sample and computationally efficient learning algorithms under
  s-concave distributions.
\newblock {\it Advances in Neural Information Processing Systems\/}.

\bibitem[{Bastani and Bayati(2020)}]{bastani2020online}
Bastani, Hamsa, Mohsen Bayati. 2020.
\newblock Online decision making with high-dimensional covariates.
\newblock {\it Operations Research\/} {\bf 68}(1) 276--294.

\bibitem[{Bastani et~al.(2021)Bastani, Bayati, and
  Khosravi}]{bastani2021mostly}
Bastani, Hamsa, Mohsen Bayati, Khashayar Khosravi. 2021.
\newblock Mostly exploration-free algorithms for contextual bandits.
\newblock {\it Management Science\/} {\bf 67}(3) 1329--1349.

\bibitem[{Ben-David and Urner(2014)}]{ben2014sample}
Ben-David, Shai, Ruth Urner. 2014.
\newblock The sample complexity of agnostic learning under deterministic
  labels.
\newblock Maria~Florina Balcan, Vitaly Feldman, Csaba Szepesvári, eds., {\it
  Proceedings of The 27th Annual Conference on Learning Theory\/}, vol.~35.
  527--542.

\bibitem[{Chen et~al.(2022)Chen, Chen, and Li}]{Chen:22:asym}
Chen, Xi, Yunxiao Chen, Xiaoou Li. 2022.
\newblock Asymptotically optimal sequential design for rank aggregation.
\newblock {\it Mathematics of Operations Research (Articles in Advance)\/} .

\bibitem[{Chernoff(1959)}]{chernoff1959sequential}
Chernoff, Herman. 1959.
\newblock Sequential design of experiments.
\newblock {\it The Annals of Mathematical Statistics\/} {\bf 30}(3) 755--770.

\bibitem[{Cohen et~al.(2020)Cohen, Lobel, and Leme}]{Cohen:20:feature}
Cohen, Maxime~C., Ilan Lobel, Renato~Paes Leme. 2020.
\newblock Feature-based dynamic pricing.
\newblock {\it Management Science\/} {\bf 66}(11) 4921--4943.

\bibitem[{Cohn et~al.(1994)Cohn, Atlas, and Ladner}]{cohn1994improving}
Cohn, David, Les Atlas, Richard Ladner. 1994.
\newblock Improving generalization with active learning.
\newblock {\it Machine Learning\/} {\bf 15}(2) 201--221.

\bibitem[{Cohn(1996)}]{cohn1996neural}
Cohn, David~A. 1996.
\newblock Neural network exploration using optimal experiment design.
\newblock {\it Neural Networks\/} {\bf 9}(6) 1071--1083.

\bibitem[{Dasgupta(2005{\natexlab{a}})}]{dasgupta2005analysis}
Dasgupta, Sanjoy. 2005{\natexlab{a}}.
\newblock Analysis of a greedy active learning strategy.
\newblock {\it Advances in Neural Information Processing Systems\/}.

\bibitem[{Dasgupta(2005{\natexlab{b}})}]{dasgupta2005coarse}
Dasgupta, Sanjoy. 2005{\natexlab{b}}.
\newblock Coarse sample complexity bounds for active learning.
\newblock {\it Advances in Neural Information Processing Systems\/}.

\bibitem[{Elfving(1952)}]{elfving1952optimum}
Elfving, Gustav. 1952.
\newblock Optimum allocation in linear regression theory.
\newblock {\it The Annals of Mathematical Statistics\/}  255--262.

\bibitem[{Feng et~al.(2022)Feng, Caldentey, and Ryan}]{feng2021robust}
Feng, Yifan, Rene Caldentey, Christopher~Thomas Ryan. 2022.
\newblock Robust learning of consumer preferences.
\newblock {\it Operations Research\/} {\bf 70}(2) 918--962.

\bibitem[{Hanneke(2007)}]{hanneke2007bound}
Hanneke, Steve. 2007.
\newblock A bound on the label complexity of agnostic active learning.
\newblock {\it Proceedings of the 24th International Conference on Machine
  Learning\/}. 353--360.

\bibitem[{Hanneke et~al.(2014)}]{hanneke2014theory}
Hanneke, Steve, et~al. 2014.
\newblock Theory of disagreement-based active learning.
\newblock {\it Foundations and Trends{\textregistered} in Machine Learning\/}
  {\bf 7}(2-3) 131--309.

\bibitem[{Kershaw(1983)}]{kershaw1983some}
Kershaw, D. 1983.
\newblock Some extensions of w. gautschi’s inequalities for the gamma
  function.
\newblock {\it Mathematics of Computation\/} {\bf 41}(164) 607--611.

\bibitem[{Krishnamurthy et~al.(2021)Krishnamurthy, Lykouris, Podimata, and
  Schapire}]{Krishnamurthy21Contextual}
Krishnamurthy, Akshay, Thodoris Lykouris, Chara Podimata, Robert~E. Schapire.
  2021.
\newblock Contextual search in the presence of irrational agents.
\newblock {\it Proceedings of the Symposium on Theory of Computing (STOC)\/}.

\bibitem[{Leme and Schneider(2018)}]{Leme:18:contextual}
Leme, R.~Paes, J.~Schneider. 2018.
\newblock Contextual search via intrinsic volumes.
\newblock {\it Proceedings of the IEEE Symposium on Foundations of Computer
  Science\/}.

\bibitem[{Li et~al.(2021)Li, Chen, Chen, Liu, and Ying}]{li2017optimal}
Li, Xiaoou, Yunxiao Chen, Xi~Chen, Jingchen Liu, Zhiliang Ying. 2021.
\newblock Optimal stopping and worker selection in crowdsourcing: An adaptive
  sequential probability ratio test framework.
\newblock {\it Statistica Sinica\/} {\bf 31} 519--546.

\bibitem[{Lobel et~al.(2018)Lobel, Leme, and Vladu}]{Lobel:17:Multidimensional}
Lobel, Ilan, Renato~Paes Leme, Adrian Vladu. 2018.
\newblock Multidimensional binary search for contextual decision-making.
\newblock {\it Operations Research\/} {\bf 66}(5) 1346--1361.

\bibitem[{Mammen and Tsybakov(1999)}]{mammen1999smooth}
Mammen, Enno, Alexandre~B Tsybakov. 1999.
\newblock Smooth discrimination analysis.
\newblock {\it The Annals of Statistics\/} {\bf 27}(6) 1808--1829.

\bibitem[{Naghshvar and Javidi(2013)}]{naghshvar2013active}
Naghshvar, Mohammad, Tara Javidi. 2013.
\newblock Active sequential hypothesis testing.
\newblock {\it The Annals of Statistics\/} {\bf 41}(6) 2703--2738.

\bibitem[{Settles(2012)}]{Settles:12:active}
Settles, B. 2012.
\newblock {\it Active Learning\/}.
\newblock Morgan \& Claypool.

\bibitem[{Shiffrin and Nosofsky(1994)}]{Shiffrin:94}
Shiffrin, R.~M., R.~M. Nosofsky. 1994.
\newblock Seven plus or minus two: A commentary on capacity limitations.
\newblock {\it Psychological Review\/} {\bf 101}(357--361).

\bibitem[{Stewart et~al.(2005)Stewart, Brown, and Chater}]{Stewart:05}
Stewart, Neil, Gordon~DA Brown, Nick Chater. 2005.
\newblock Absolute identification by relative judgment.
\newblock {\it Psychological review\/} {\bf 112}(4) 881--911.

\bibitem[{Vapnik(2013)}]{vapnik2013nature}
Vapnik, Vladimir. 2013.
\newblock {\it The nature of statistical learning theory\/}.
\newblock Springer science \& business media.

\bibitem[{Vapnik and Chervonenkis(2015)}]{vapnik2015uniform}
Vapnik, Vladimir~N, A~Ya Chervonenkis. 2015.
\newblock On the uniform convergence of relative frequencies of events to their
  probabilities.
\newblock {\it Measures of complexity\/}. Springer, 11--30.

\bibitem[{Wager and Xu(2021)}]{wager2021diffusion}
Wager, Stefan, Kuang Xu. 2021.
\newblock Diffusion asymptotics for sequential experiments.
\newblock {\it arXiv preprint arXiv:2101.09855\/} .

\bibitem[{Wang and Singh(2016)}]{wang2016noise}
Wang, Yining, Aarti Singh. 2016.
\newblock Noise-adaptive margin-based active learning and lower bounds under
  tsybakov noise condition.
\newblock {\it Thirtieth AAAI Conference on Artificial Intelligence\/}.

\bibitem[{Wang and Zenios(2020)}]{wang2020adaptive}
Wang, Zhengli, Stefanos Zenios. 2020.
\newblock Adaptive design of clinical trials: A sequential learning approach.
\newblock {\it Available at SSRN 3713924\/} .

\bibitem[{Zhang and Chaudhuri(2014)}]{zhang2014beyond}
Zhang, Chicheng, Kamalika Chaudhuri. 2014.
\newblock Beyond disagreement-based agnostic active learning.
\newblock {\it Advances in Neural Information Processing Systems\/}.

\end{thebibliography}

\end{document}